\renewcommand{\le}{\leqslant}
\renewcommand{\ge}{\geqslant}
\renewcommand{\hat}{\widehat}
\renewcommand{\emptyset}{\varnothing}
\newtheorem{theorem}{Theorem}[section]
\newtheorem{corollary}{Corollary}[theorem]
\newtheorem{lemma}[theorem]{Lemma}
\newtheorem{assumption}{Assumption}
\newtheorem{proposition}[theorem]{Proposition}
\DeclareMathOperator*{\argmin}{arg\,min}
\def\jzh#1{{\color{red}#1}} 
\pgfplotsset{
	table/search path={plot_figures},
}
\pgfplotsset{compat=1.14}
\title{\LARGE \bf
	A General Framework for Distributed Inference \\ with Uncertain Models
}
\author{James Z. Hare, C\'esar A. Uribe, Lance Kaplan, \emph{Fellow, IEEE}, and Ali Jadbabaie \emph{Fellow, IEEE} 
	\thanks{This research was sponsored by the DARPA Lagrange, Vannevar Bush Fellowship, and OSD LUCI programs. 
	}
	\thanks{J.Z.H. and L.K. (\textit{\{james.z.hare.civ, lance.m.kaplan.civ\}@mail.mil}) are with the U.S. Army Research Laboratory, Adelphi, MD. C.A.U and A.J. are with the Laboratory for Information and Decision Systems (LIDS), and the Institute for Data, Systems, and Society (IDSS),
		Massachusetts Institute of Technology, Cambridge, MA
		(\textit{\{cauribe,jadbabai\}@mit.edu}).  }%
}
\begin{document}

\maketitle

\begin{abstract}
This paper studies the problem of distributed classification with a network of heterogeneous agents. The agents seek to jointly identify the underlying target class that best describes a sequence of observations. The problem is first abstracted to a hypothesis-testing framework, where we assume that the agents seek to agree on the hypothesis (target class) that best matches the distribution of observations.  Non-Bayesian social learning theory provides a framework that solves this problem in an efficient manner  by allowing the agents to sequentially communicate and update their beliefs for each hypothesis over the network. Most existing  approaches assume that agents have access to exact statistical models for each hypothesis. However, in many practical applications, agents learn the likelihood models based on limited data, which induces uncertainty in the likelihood function parameters. In this work, we build upon the concept of \emph{uncertain models} to incorporate the agents' uncertainty in the likelihoods by identifying a broad set of parametric distribution that allows the agents' beliefs to converge to the same result as a centralized approach. Furthermore, we empirically explore extensions to non-parametric models to provide a generalized framework of uncertain models in non-Bayesian social learning.   
\end{abstract}

\section{Introduction}

Non-Bayesian social learning  provides a scalable approach for distributed inference of boundedly rational agents with heterogeneous sensing modalities interacting over a network~\cite{JMST2012}. In this setting, agents receive a stream of partial observations conditioned on true \textit{state of the world.} Each agent uses its private observations and network communications to construct a set of beliefs on a finite set of possible states of the world or hypotheses. The agents' goal is to jointly identify a hypothesis that best explains the set of observations, resulting in an estimate of the true state of the world.

Collaboration happens when an agent combines its neighbors' beliefs (a normalized aggregated likelihood of the state of the world) at each time step via a fusion step. The agent then updates the combined beliefs (prior) with the likelihood of their most recent private observation, resulting in a posterior belief. This enables a scalable learning approach that does not require \emph{a priori} knowledge of the network structure or the agents' past observations, which avoids the ``double counting'' problem faced in Bayesian social learning \cite{GK2003, ADLO2011, KT2013, RJM2017}. 

Several social learning (fusion) rules have been proposed in the literature, including weighted averages~\cite{JMST2012, SJ2013}, geometric averages~\cite{RT2010,RMJ2014, LJS2018}, constant elasticity of substitution models~\cite{MTJ18}, and minimum operators~\cite{MRS2019_ACC, MRS2019_CDC}. These learning rules have been applied to undirected/directed graphs, time-varying graphs~\cite{NOU2015,NOU2017}, weakly-connected graphs~\cite{SYS2017, SYS2018}, agents with increasing self-confidence~\cite{UJ2019}, compact hypothesis sets~\cite{NOU2017_compact}, and under adversarial attacks~\cite{BT2018, SV2018, VSV2019, HULJ2019}. Each approach presents a variation of one of the above learning rules and provides theoretical guarantees (asymptotically) that the agents will \textit{learn} the true state of the world. 

The fundamental assumption in these  approaches is that the parameters of the likelihood models for each hypothesis are known precisely. For example, each agent may collect a large set of labeled training data or \emph{prior evidence} for each hypothesis, which allows them to identify the precise parameters. However, collecting prior evidence is costly, and often training happens with limited data, which can lead to inaccurate inferences~\cite{CG1999}.

Incorporating uncertainty into the statistical models has been studied from a non-Bayesian perspective in the fields of possibility theory~\cite{DP2012}, probability intervals~\cite{W1997}, and belief functions~\cite{S1976, SK1994} by expanding beyond probability theory to identify fixed intervals of uncertainty for each parameter of the likelihood function. Other modeling approaches follow a Bayesian perspective, which models the uncertainty in the parameters as a second-order probability density function~\cite{J2018}. This second-order probability density function is typically a conjugate prior of the likelihood model, allowing for a mathematically convenient approach to computing the posterior distribution of the parameters conditioned on the prior evidence. Then, the~\emph{uncertain likelihood function} can be computed as the posterior predictive distribution~\cite{R1984}, which marginalizes the likelihood over the unknown parameters.

Recently proposed approaches have incorporated uncertainty into non-Bayesian social learning theory through the concept of \emph{uncertain models} \cite{HUKJ2020_TSP, HULJ2019, UHLJ2019, HUKJ2020_Gauss, HUKJ2020_ICASSP}. An uncertain model consists of an \emph{uncertain likelihood ratio} as the likelihood model of each agent, which tests the consistency of the prior evidence with a stream of private observations collected in the testing phase. This ratio aims to identify whether the prior evidence and the private observations are drawn from the same/different distributions and consists of a posterior predictive distribution normalized by a prior predictive distribution. 

Initially, uncertain models were developed to handle categorical observations and prior evidence drawn from multinomial distributions~\cite{HUKJ2020_TSP} and were later extended for data drawn from univariate Gaussian distributions~\cite{HUKJ2020_Gauss}. These models were implemented into a non-Bayesian social learning rule and studied for static and time-varying graphs~\cite{UHLJ2019}. Additionally, the social learning rule with uncertain models was adjusted to handle communication constrained environments~\cite{HUKJ2020_ICASSP} and adversarial agents~\cite{HULJ2019}. These works showed that the asymptotic beliefs of each agent in the network converge to a weighted geometric average of their uncertain likelihood ratios when using a geometric average social learning rule, resulting in a one-to-one relation with the centralized solution. Furthermore, when the agents become certain in their models, i.e., know the likelihood parameters precisely, the agents can learn the true state of the world, providing a consistent result with traditional social learning. 

In this work, we build upon~\cite{HUKJ2020_TSP, HUKJ2020_Gauss} by identifying a broad set of parametric distributions that enable learning with uncertain models. Additionally, we identify conditions that allow the agents to include \emph{model uncertainty} \cite{HW2019}, i.e., the true statistical model is not within the parametric family of distributions. The uncertain models are implemented into a non-Bayesian social learning rule and show that the results are consistent and that the works presented in \cite{HUKJ2020_TSP, HUKJ2020_Gauss} are special cases of the uncertain models presented herein. Additionally, we provide an algorithmic representation of how to implement uncertain models in a practical setting for continuous and discrete observations. Finally, we extend the uncertain models to handle data drawn from non-parametric distributions and empirically show that the main results hold. 

This paper is organized as follows. Section~\ref{sec:pfam} provides the problem formulation, while Section~\ref{sec:gum} presents the general uncertain models. Section~\ref{sec:nbsl_um} implements the uncertain models into non-Bayesian social learning and provides the asymptotic properties of the beliefs. Section~\ref{sec:implement} shows the algorithmic steps to implement uncertain models with examples of data drawn from both discrete and continuous distributions. Section~\ref{sec:example_non} provides a non-parametric framework for uncertain models and Section~\ref{sec:results} includes a numerical analysis of both parametric and non-parametric models. 
Finally, we conclude the paper in Section~\ref{sec:con} and discuss future work.   

\textbf{Notation:} Bold symbols represent a vector/matrix, while non-bold symbols represent its element. All vectors are column vectors unless specified. The indexes $i$ and $j$ represent agents and $t$ represents time. We abbreviate the terminology independent identically distributed as i.i.d.. We use $[\mathbf{A}]_{ij}$ to represent the entry of matrix $\mathbf{A}'s$ $i$th row and $j$th column. The empty set is denoted as $\emptyset$.We denote the Kullback-Liebler (KL) divergence as 
\begin{eqnarray}
D_{KL}(p(x)\|q(x)) = \int p(x)\log\left(\frac{p(x)}{q(x)} \right)dx,
\end{eqnarray}
where $p(x)$ and $q(x)$ are two probability distributions over $x$. 

\section{Problem Formulation} \label{sec:pfam}

\subsection{Classification as a Non-Bayesian Learning Problem}

We consider a group of $m$ heterogeneous agents connected over a network with the task of classifying a source of information into one of $M$ possible classes $\boldsymbol{\Theta}=\{\theta_1,...,\theta_M\}$. An agent $i$ collects sensor measurements about the source modeled as a sequence of realizations of a random variable distributed according to an unknown probability distribution $Q^i$, conditioned on the source being of a class $\theta^*$. Note that heterogeneity of sensor modalities available at the agents implies that measurements of different agents could be drawn from different random distributions, i.e., $Q^i$ might be different from $Q^j$ for $j\ne i$ due to different sensing phenomenology. 

This classification problem can be abstracted into a distributed hypothesis testing framework, where each target class $\theta\in\boldsymbol{\Theta}$ for each agent $i$ is represented as a parametrized distribution $P^i(\cdot|\boldsymbol{\phi}_\theta^i)$, where $\boldsymbol{\phi}_\theta^i$ is the set of parameters known by the agent.\footnote{We assume that the parametrized distribution for each agent may vary due to their heterogeneous sensing capabilities.} For example, if the agent considers that each class $\theta$ is a Gaussian distribution, the set of parameters are $\boldsymbol{\phi}_\theta^i=\{m_\theta^i,(\lambda_\theta^i)^{-1}\}$, where $m_\theta^i$ and $\lambda_\theta^i$ are the mean and precision, respectively. Under this setup, the common objective of the set of agents is to solve the following optimization problem jointly
\begin{align}\label{eq:main}
    \hat{\theta} = \argmin_{\theta \in \boldsymbol{\Theta}} \sum_{i=1}^m D_{KL} \big(Q^i \| P^i(\cdot|\phi^i_\theta)\big),
\end{align}
where $Q^i$ is the unknown distribution of the observations conditioned on the target class. Later in the next subsection we will describe the sources of uncertainty for the non-Bayesian social learning problem.

Indeed, finding a solution of \eqref{eq:main} implies finding a class whose conditional likelihood function is statistically similar to the distribution of the observations conditioned on the true class. To achieve this objective, each agent $i$ utilizes their sensing device to receive a stream of i.i.d. observations over discrete time $t\ge 1$, $\boldsymbol{\omega}_{1:t}^i=\{\omega_1^i,...,\omega_t^i\}$, where each $\omega_\tau^i$ for $\tau\in\{1,...,t\}$ is drawn from the ground truth distribution $Q^i$. In order to achieve a unique solution to~\eqref{eq:main}, we impose the following assumption.   
\begin{assumption}\label{assum:unique}
The intersection of the optimal hypotheses for each individual agent has a unique element $\theta^*$, i.e., the set $\bigcap_{i \in \mathcal{M}} \boldsymbol{\Theta}^*_i = \{\theta^*\}$, where 
\begin{align}\label{eq:main_local}
    \boldsymbol{\Theta}^*_i = \argmin_{\theta \in \boldsymbol{\Theta} }  D_{KL} \big(Q^i \| P^i(\cdot|\phi^i_\theta)\big).
\end{align}
is the set of indistinguishable hypotheses of agent $i$. 
\end{assumption}

Assumption~\ref{assum:unique} states that each agent could potentially solve a local problem using local information only. However, there is no guarantee that the solution to the local problem is unique, i.e., one individual agent might not have the capability to identify $\theta^*$. However, we assume that collectively the network can collaborate to identify a unique hypothesis that represents the ground truth. 
Therefore, $\theta^*$ is the unique solution to~\eqref{eq:main}.

We assume agents interact over a graph $\mathcal{G}=\{\mathcal{M},\mathcal{E}\}$, where $\mathcal{M}$ is the set of agents and $\mathcal{E}$ is the set of edges connecting the agents. If $(i,j)\in\mathcal{E}$, then agents $i$ and $j$ can communicate to each other. We denote agent $i$'s set of neighbors as $\mathcal{M}^i=\{j|(i,j)\in\mathcal{E}, \forall j\in\mathcal{M}\}$ and each edge is assumed to be weighted and modeled as an adjacency matrix~$\mathbf{A}$. Furthermore, we assume the following properties. 

\begin{assumption}\label{assum:graph}

               The graph $\mathcal{G}$ and matrix $\mathbf{A}$ are such that:

               \begin{enumerate}

                              \item $\mathbf{A}$ is doubly-stochastic with $\left[\mathbf{A}\right]_{ij} = a_{ij} > 0$ for  $i\ne j$ if and only if $(i,j)\in E$.

                              \item $\mathbf{A}$ has positive diagonal entries, $a_{ii}>0$ for all $i \in \mathcal{M} $.

                              \item The graph $\mathcal{G}$ is connected.

               \end{enumerate}

\end{assumption}

Assumption~\ref{assum:graph} states that the adjacency matrix is ergodic, i.e., aperiodic and irreducible, and it is a 
common assumption 
in the literature \cite{NOU2017}. This allows every agent to aggregate their local information throughout the entire network. Note that our assumptions are applicable for either directed or undirected graphs $G$. 

The theory of non-Bayesian social learning provides a framework that enables the agents to jointly solve~\eqref{eq:main} in a distributed fashion. Here, each agent $i$ holds a belief $\mu_t^i(\theta)$ at each time step $t$, which represents the probability that the target class $\theta$ is the ground truth. We denote the set of beliefs for each agent $i$ at time $t$ as $\boldsymbol{\mu}_t^i=\{\mu_t^i(\theta)\}_{\forall \theta\in\boldsymbol{\Theta}}$. As seen in Figure~\ref{fig:net_example}, at each time $t\ge 1$, each agent $i$ updates their beliefs using a social learning rule that consists of fusing their neighbors beliefs from the previous time step, $\boldsymbol{\mu}_{t-1}^{\mathcal{M}^i}=\{\boldsymbol{\mu}_{t-1}^j\}_{\forall j\in\mathcal{M}^i}$, and scaling the combined beliefs with the likelihood of a new observation $P^i(\omega_t^i|\boldsymbol{\phi}_\theta^i)$. One common belief update rule in non-Bayesian social learning is based on a geometric average of the beliefs and is defined as follows, 
\begin{align} \label{eq:trad_belup}
    \mu_{t}^i(\theta) \propto P^i(\omega_t^i|\boldsymbol{\phi}_\theta^i)\prod_{j\in\mathcal{M}^i}\mu_{t-1}^j(\theta)^{[\mathbf{A}]_{ij}}.
\end{align}
Once the beliefs are updated, the agent transmits $\boldsymbol{\mu}_t^i$ to their neighbors and the process is repeated. The update rule in~\eqref{eq:trad_belup} has the property that the belief of the target class $\theta$ that solves the optimization problem~\eqref{eq:main} will converge to $1$ almost surely for every agent, while the remaining beliefs converge to $0$, allowing for the agents to learn the ground truth~\cite{nedic2017distributed}. 

\subsection{Uncertainty in Non-Bayesian Social Learning} \label{sec:types_of_uncertainty}

The current framework of non-Bayesian social learning theory does not account for two types of uncertainty that are commonly found in practical applications: 1) \emph{Epistemic} uncertainty in the parameters $\boldsymbol{\phi}_\theta^i$ of the likelihood models for each $\theta$, and 2) \emph{Model} uncertainty associated with the family of distributions $P^i(\cdot|\boldsymbol{\phi}_\theta^i)$. 

Recall that in the non-Bayesian social learning setup, each agent is assumed to have a parametric family of distributions corresponding to the conditional distributions for each possible target class. However, such models are usually built from training data. Epistemic uncertainty arises when the agents have a limited/finite amount of training data, $\mathbf{r}_\theta^i=\{r_{\theta,k}^i\}_{k=1,...,|\mathbf{r}_\theta^i|}$, for each target class $\theta\in\boldsymbol{\Theta}$. If the agent estimates the parameters of the likelihood model using $\mathbf{r}_\theta^i$, there is a probability greater than $0$ that a belief for a hypothesis $\hat{\theta}\ne\theta^*$ updated using~\eqref{eq:trad_belup} will converge to $1$, while the ground truth belief converges to $0$. Therefore, it is necessary to adjust the belief update rule~\eqref{eq:trad_belup} to account for the epistemic uncertainty in the parameters of the likelihood models by incorporating \emph{uncertain} models (see Section~\ref{sec:gum}). 

Model uncertainty arises naturally when the underlying physics or background knowledge of the ground truth distribution is unknown or partially known to the agents, resulting in misspecified likelihood models~\cite{HW2019}. Additionally, when the agents have finite prior evidence, the uncertain models may be within a family of distributions such that there is not be a $\theta\in\boldsymbol{\Theta}$ such that $Q^i = P^i(\cdot|\phi^i_\theta)$~\cite{W2013}~\cite[p.~88]{gelman2013}, requiring the agents to adjust the belief update rule~\eqref{eq:trad_belup} to handle this uncertainty. 

For example, after acquiring a finite number of samples $\mathbf{r}_\theta^i$ for each hypothesis $\theta$, it is reasonable that the agent $i$ may assume that the data is distributed according to a Normal distribution with unknown mean $m^i_\theta$ and standard deviation $\sigma^i_\theta$ such that $\boldsymbol{\phi}_\theta^i = \{m^i_\theta, \sigma_\theta^i\}$. However, it is entirely possible that the underlying ground truth distribution $Q^i$ is a student-$t$ distribution or some otherwise arbitrary distribution with mean and variance given by $\boldsymbol{\phi}^i_\theta$. Nevertheless, as long as there is a unique set of parameters $\boldsymbol{\phi}^i_{\theta^*} = \{m^i_{\theta^*}, \sigma_{\theta^*}^i\}$ that minimizes the KL divergence between the ground truth distribution $Q^i$ and the parameterized distribution $P^i(\cdot|\phi^i_\theta)$ for hypothesis $\theta=\theta^*$, learning can occur.\footnote{The parameters $\phi^i_{\theta^*}$ associate to the Maximum Likelihood estimate of the parameters given an infinite amount of observations $\omega^i$. }
 
\textit{The overall goal of this work is to incorporate both epistemic and model uncertainties into non-Bayesian social learning theory and provide theoretical guarantees on the convergence properties of the uncertain beliefs.}

\begin{figure}[t!]
    \centering
    \includegraphics[width=\columnwidth]{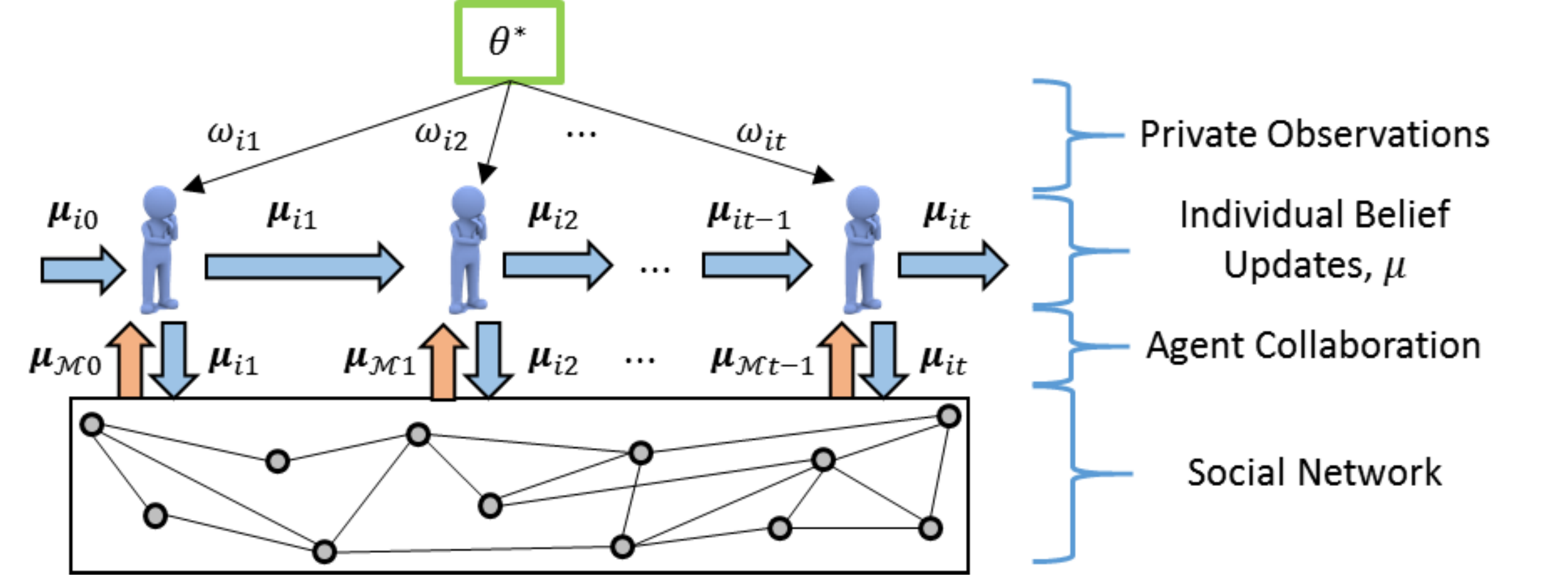}
    \caption{Example of social learning for hypothesis testing. The world selects a state $\theta^*$, and each agent sequentially observes realizations $\omega_{t}^i$ of a random variable whose probability distribution is conditioned on $\theta^*$. Each agent constructs and shares a set of beliefs $\boldsymbol{\mu}_{it}$ over a social network to cooperatively select the hypothesis that best describes the set of observations.} 
    \label{fig:net_example} \vspace{-16pt}
\end{figure}

\section{General Uncertain Models} \label{sec:gum}

In this section, we present a general class of \emph{uncertain models} and their asymptotic properties for a single agent that has collected a limited amount of training data for each class~\cite{HUKJ2020_TSP, UHLJ2019, HUKJ2020_Gauss}. For simplicity and ease of presentation, we drop the subscript $i$ in our notation for the remainder of this section. Later in Section~\ref{sec:nbsl_um} we extend this to the distributed setup.

\subsection{The Uncertain Likelihood Ratio}
The first step in deriving the uncertain likelihood ratio is to identify the parameters $\boldsymbol{\phi}_\theta$ of the likelihood model $P(\cdot|\boldsymbol{\phi}_\theta)$. Consider that the agent has collected a set of prior evidence $\mathbf{r}_\theta=\{r_{k,\theta}\}_{k=1,...,|\mathbf{r}_\theta|}$ (training data) for each hypothesis $\theta\in\boldsymbol{\Theta}$, which consists of a set of i.i.d. samples drawn from the distribution $Q_\theta^i$, where the amount of prior evidence $|\mathbf{r}_\theta|$ collected for each hypothesis may vary. Then, we make the following assumption about the likelihood models that allows the agent to model their epistemic uncertainty in the parameters $\phi_\theta$. 
\begin{assumption} \label{assum:conjugate}
The set of sufficient statistics of the family of distributions $P(\cdot|\boldsymbol{\phi})$ is finite.
\end{assumption}

Assumption~\ref{assum:conjugate} implies that there exists a conjugate distribution to $P(\cdot|\boldsymbol{\phi})$ that models the distribution of the parameters $\boldsymbol{\phi}$~\cite{D2005}. Therefore, we can take a Bayesian approach to modeling the epistemic uncertainty in the parameters by utilizing the prior evidence $\mathbf{r}_\theta$ to compute the posterior distribution 
\begin{align} \label{eq:cp}
    f(\boldsymbol{\phi}|\psi(\mathbf{r}_\theta)) = \frac{1}{Z(\mathbf{r}_\theta)} P(\mathbf{r}_\theta|\boldsymbol{\phi})f_0(\boldsymbol{\phi}),
\end{align}
where $Z(\mathbf{r}_\theta)=\int_{\boldsymbol{\Phi}} P(\mathbf{r}_\theta|\boldsymbol{\phi})f_0(\boldsymbol{\phi}) d\boldsymbol{\phi}$ is the normalization factor; $P(\mathbf{r}_\theta|\boldsymbol{\phi})$ is the assumed family of distributions; and $f_0(\boldsymbol{\phi})=f(\boldsymbol{\phi}|\psi(\emptyset))$ is the conjugate prior of $P(\cdot|\boldsymbol{\phi})$ with hyperparameters $\psi(\emptyset)$ chosen s.t. the prior is uninformative with full support over the parameter space $\boldsymbol{\Phi}$. Note that conjugate distributions allow for the agent to simply update the hyperparameters of the posterior as a function of the amount of data present, for example the prior evidence $\mathbf{r}_\theta$ as in~\eqref{eq:cp}. Examples of the update equations for specific parametric distributions are presented in Section~\ref{sec:implement}.  

Then, because the parameters are only known within a distribution $f(\boldsymbol{\phi}|\psi(\mathbf{r}_\theta))$, the agent must use a posterior predictive distribution in lieu of the likelihood model to form the uncertain likelihood, i.e., the uncertain likelihood of a set of measurements $\boldsymbol{\omega}_{1:t}$ for $t\ge 1$ is defined as
\begin{align} \label{eq:ul}
    \hat{P}(\boldsymbol{\omega}_{1:t}|\mathbf{r}_\theta) = \int_{\boldsymbol{\Phi}}P(\boldsymbol{\omega}_{1:t}|\boldsymbol{\phi})f(\boldsymbol{\phi}|\psi(\mathbf{r}_\theta)) d\boldsymbol{\phi}.
\end{align}

Normally, the agent would then construct a likelihood ratio test by normalizing the uncertain likelihood over the set of hypotheses. However, when the amount of prior evidence varies between hypotheses, i.e., $|\mathbf{r}_\theta|\ne |\mathbf{r}_{\hat{\theta}}|$ for some $\theta \ne \hat{\theta}$, the uncertain likelihoods become incommensurable since they do not have a common standard of measurement, as shown in~\cite{HUKJ2020_TSP}. Instead, the agents evaluate each hypothesis independently and normalize the uncertain likelihood by the model of complete ignorance to form the \emph{uncertain likelihood ratio} as follows
\begin{align} \label{eq:ULR}
    \Lambda_\theta(t) = \frac{\hat{P}(\boldsymbol{\omega}_{1:t}|\mathbf{r}_\theta)}{\hat{P}(\boldsymbol{\omega}_{1:t}|\mathbf{r}_\theta = \emptyset)},
\end{align}
where $\hat{P}(\boldsymbol{\omega}_{1:t}|\mathbf{r}_\theta = \emptyset)=\int_{\boldsymbol{\Phi}}P(\boldsymbol{\omega}_{1:t}|\boldsymbol{\phi})f_0(\boldsymbol{\phi}) d\boldsymbol{\phi}$. The model of complete ignorance is defined as a prior predictive distribution of the measurement sequence conditioned on the agent collecting zero prior evidence. 

\subsection{The Uncertain Likelihood Update}
Thus far, we have derived the uncertain likelihood ratio based on an agent collecting a set of measurements $\boldsymbol{\omega}_{1:t}^i$ up to time $t\ge 1$. However, in the typical setting, the agent will sequentially collect a single measurement at each time step, requiring that uncertain likelihood ratio to be decomposed into a recursive function. 

This is achieved by first expressing the uncertain likelihood~\eqref{eq:ul} as follows,
\begin{align} \label{eq:ul_recursive}
    \hat{P}&(\boldsymbol{\omega}_{1:t}|\mathbf{r}_\theta) = \int_{\boldsymbol{\Phi}} P(\omega_{t}|\boldsymbol{\phi})\prod_{\tau=1}^{t-1}P(\omega_{\tau}|\boldsymbol{\phi})f(\boldsymbol{\phi}|\psi(\mathbf{r}_\theta))d\boldsymbol{\phi} \nonumber \\
    & = \hat{P}(\boldsymbol{\omega}_{1:t-1}|\mathbf{r}_\theta)\int_{\boldsymbol{\Phi}} P(\omega_{t}|\boldsymbol{\phi})f(\boldsymbol{\phi}|\psi(\{\boldsymbol{\omega}_{1:t-1},\mathbf{r}_\theta\})) d\boldsymbol{\phi}, 
\end{align}
where the first line is due to the i.i.d. observations, while the second line is achieved from Bayes' rule by substituting the following 
\begin{align} 
    &\prod_{\tau=1}^{t-1}P(\omega_{\tau}|\boldsymbol{\phi})f(\boldsymbol{\phi}|\psi(\mathbf{r}_\theta)) \nonumber \\
    & = f(\boldsymbol{\phi}|\psi(\{\boldsymbol{\omega}_{1:t-1},\mathbf{r}_\theta\}))\int_{\boldsymbol{\Phi}} P(\boldsymbol{\omega}_{1:t-1}|\boldsymbol{\phi})f(\boldsymbol{\phi}|\psi(\mathbf{r}_\theta))d\boldsymbol{\phi} \nonumber \\
    & = f(\boldsymbol{\phi}|\psi(\{\boldsymbol{\omega}_{1:t-1},\mathbf{r}_\theta\}))\hat{P}(\boldsymbol{\omega}_{1:t-1}|\mathbf{r}_\theta),
\end{align}
and the fact that $\hat{P}(\boldsymbol{\omega}_{1:t-1}|\mathbf{r}_\theta)$ is a constant for a given sequence of observations $\boldsymbol{\omega}_{1:t-1}$. Similarly, we can write the model of complete ignorance as follows,
\begin{align} \label{eq:ig_recursive}
    &\hat{P}(\boldsymbol{\omega}_{1:t}|\mathbf{r}_\theta=\emptyset) \nonumber \\ &= \hat{P}(\boldsymbol{\omega}_{1:t-1}|\mathbf{r}_\theta=\emptyset)\int_{\boldsymbol{\Phi}} P(\omega_t|\boldsymbol{\phi})f(\boldsymbol{\phi}|\psi(\boldsymbol{\omega}_{1:t-1}))d\boldsymbol{\phi}.
\end{align}
Then, we can rewrite the uncertain likelihood ratio in the following recursive form,
\begin{align} \label{eq:url_recursive}
    \Lambda_\theta(t) &= 
     \ell_\theta(\omega_t) \Lambda_\theta(t-1),
\end{align}
where we define the \emph{uncertain likelihood update} as 
\begin{align} \label{eq:ulu}
    \ell_\theta(\omega_t) = \frac{\int_{\boldsymbol{\Phi}} P(\omega_{t}|\boldsymbol{\phi})f(\boldsymbol{\phi}|\psi(\{\boldsymbol{\omega}_{1:t-1},\mathbf{r}_\theta\})) d\boldsymbol{\phi}}{\int_{\boldsymbol{\Phi}} P(\omega_t|\boldsymbol{\phi})f(\boldsymbol{\phi}|\psi(\boldsymbol{\omega}_{1:t-1}))d\boldsymbol{\phi}}.
\end{align} 

\subsection{Asymptotic Properties of the Uncertain Likelihood Ratio}

Next, we compute the asymptotic properties of the uncertain likelihood ratio. First, we must state the following assumptions regarding the statistical models for each hypothesis $\theta$.
\begin{assumption} \label{assum:support}
For each agent and all $\theta\in\boldsymbol{\Theta}$, the following properties hold:
\begin{itemize}
    \item The distributions $Q$ and $P(\omega|\boldsymbol{\phi}_\theta)$ are bounded, i.e., $Q<\infty$ and $P(\omega|\boldsymbol{\phi}_\theta)<\infty$ for all values of $\omega$, and
    \item The distribution $Q$ is absolutely continuous\footnote{A measure $\mu$ is dominated by (or absolutely continuous with respect to) a measure $\lambda$ if $\lambda(B) = 0$ implies $\mu(B)=0$ for every measurable set~$B$.} with respect to $P(\omega|\boldsymbol{\phi}_\theta)$.
\end{itemize} 
\end{assumption}

Assumption~\ref{assum:support} ensures that both distributions $Q$ and $P(\cdot|\boldsymbol{\phi}_\theta)$ for all $\theta\in\boldsymbol{\Theta}$ do not include any singularities and that the support of $Q$ is within the support of $P(\cdot|\boldsymbol{\phi}_\theta)$. 

\begin{assumption}\label{assum:mle_kl}
There exists a unique set of parameters $\boldsymbol{\phi}_{\theta^*}$ and $\boldsymbol{\phi}_{\theta}$ for each agent and $\theta\in\boldsymbol{\Theta}$ such that 
\begin{align} \label{eq:phi_star}
    \boldsymbol{\phi}_{\theta^*} &= \argmin_{\boldsymbol{\phi}} D_{KL}(Q\|P(\cdot|\boldsymbol{\phi})), \ \text{and} \nonumber \\
    \boldsymbol{\phi}_{\theta} &= \argmin_{\boldsymbol{\phi}} D_{KL}(Q_\theta\|P(\cdot|\boldsymbol{\phi})),
\end{align}
where $\phi_\theta$ could equal $\phi_{\theta^*}$. 
\end{assumption}
Assumption~\ref{assum:mle_kl} means that the set of parameters that minimize the KL divergence between the ground truth and the likelihood models for $\theta^*$ is unique. Furthermore, the set of parameters that minimize the KL between the underlying models $Q_\theta$ and the likelihood model for $\theta$ is unique.\footnote{Note that when the KL divergence is greater than $0$, the likelihood models are misspecified and model uncertainty is present.}

\begin{assumption}\label{assum:reg}
For each agent and all $\theta\in\boldsymbol{\Theta}$, the likelihood function $P(\cdot|\boldsymbol{\phi}_\theta)$ and the conjugate prior $f_0(\boldsymbol{\phi})$ abide by the regularity conditions as stipulated in~\cite{W1969,KV2012}. 
\end{assumption}

Assumption~\ref{assum:reg} holds when the distributions are sufficiently smooth and Assumption~\ref{assum:mle_kl} hold. For a detailed list of the regularity conditions, please see Appendix~\ref{app:regularity}. 

Next, we make the following proposition regarding the asymptotic properties of the conjugate distributions. 

\begin{proposition} \label{lem:normal_posterior}
        Let Assumptions~\ref{assum:support}-\ref{assum:reg} hold. Then, the posterior distribution of $\boldsymbol{\phi}$ has the following properties: \begin{enumerate}[label=(\alph*)]
            \item When prior evidence is finite, i.e., $|\mathbf{r}_\theta|<\infty$, and number of observations grows unboundedly, i.e., $t\to\infty$, the posterior distribution $f(\boldsymbol{\phi}|\psi(\{\boldsymbol{\omega}_{1:t},\mathbf{r}_\theta \}))$ converges in probability as 
            \begin{align} \label{eq:con1}
                &\lim_{t\to\infty} f(\boldsymbol{\phi}|\psi(\{\boldsymbol{\omega}_{1:t},\mathbf{r}_\theta \})) =\delta_{\boldsymbol{\phi}_{\theta^*}}(\boldsymbol{\phi}) 
            \end{align}
            where $\delta_{\boldsymbol{\phi}_\theta}$ is the Kronecker delta function centered at $\boldsymbol{\phi}_\theta$ and $\boldsymbol{\phi}_{\theta^*}$ is defined in~\eqref{eq:phi_star}. 
            \item When an agent is certain, i.e., $|\mathbf{r}_\theta|=\infty$, the posterior distribution $f(\boldsymbol{\phi}|\psi(\mathbf{r}_\theta))$ computed before collecting observations converges in probability as
            \begin{align} \label{eq:con2}
                \lim_{|\mathbf{r}_\theta|\to\infty} f(\boldsymbol{\phi}|\psi(\mathbf{r}_\theta))= \delta_{\boldsymbol{\phi}_\theta}(\boldsymbol{\phi}).
            \end{align}
            where $\boldsymbol{\phi}_\theta$ is defined in~\eqref{eq:phi_star}. 
        \end{enumerate}
\end{proposition}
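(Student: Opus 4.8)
The plan is to read Proposition~\ref{lem:normal_posterior} as a statement of \emph{posterior consistency under possible model misspecification}: in both parts the posterior in question is the Bayesian update of a full–support prior by an i.i.d.\ sample, and the claim is that it concentrates on the unique Kullback--Leibler projection of the data–generating law onto the model family $\{P(\cdot|\boldsymbol{\phi})\}_{\boldsymbol{\phi}\in\boldsymbol{\Phi}}$ (Assumption~\ref{assum:mle_kl}). Parts~(a) and~(b) are the same argument under different labellings, so I would prove~(a) in detail and obtain~(b) by substitution. For~(a), write the posterior from~\eqref{eq:cp} as
\begin{align}
f(\boldsymbol{\phi}|\psi(\{\boldsymbol{\omega}_{1:t},\mathbf{r}_\theta\})) = \frac{\big(\prod_{\tau=1}^t P(\omega_\tau|\boldsymbol{\phi})\big) f(\boldsymbol{\phi}|\psi(\mathbf{r}_\theta))}{\int_{\boldsymbol{\Phi}}\big(\prod_{\tau=1}^t P(\omega_\tau|\boldsymbol{\phi}')\big) f(\boldsymbol{\phi}'|\psi(\mathbf{r}_\theta))\,d\boldsymbol{\phi}'}.
\end{align}
Since $|\mathbf{r}_\theta|<\infty$ and $f_0$ has full support on $\boldsymbol{\Phi}$, the prior $f(\boldsymbol{\phi}|\psi(\mathbf{r}_\theta))$ is a fixed proper density that is strictly positive in a neighborhood of $\boldsymbol{\phi}_{\theta^*}$; its bounded contribution is asymptotically negligible against the $t$ fresh observations, so the limit is governed by $Q$.

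The core estimate is the log–likelihood–ratio computation: for each fixed $\boldsymbol{\phi}$, the strong law of large numbers applied to the i.i.d.\ $\omega_\tau\sim Q$ gives, almost surely,
\begin{align}
\frac{1}{t}\sum_{\tau=1}^t\log\frac{P(\omega_\tau|\boldsymbol{\phi})}{P(\omega_\tau|\boldsymbol{\phi}_{\theta^*})}\ \longrightarrow\ D_{KL}\big(Q\|P(\cdot|\boldsymbol{\phi}_{\theta^*})\big)-D_{KL}\big(Q\|P(\cdot|\boldsymbol{\phi})\big)\ \le\ 0,
\end{align}
with equality, by the uniqueness in Assumption~\ref{assum:mle_kl}, only at $\boldsymbol{\phi}=\boldsymbol{\phi}_{\theta^*}$; Assumptions~\ref{assum:support} and~\ref{assum:reg} supply the integrability and smoothness needed for this. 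I would then run the classical Wald/Schwartz scheme in its misspecified form~\cite{W1969,KV2012}: (i) lower–bound the normalizing integral by restricting to a small ball $B_\delta(\boldsymbol{\phi}_{\theta^*})$, on which continuity of $\boldsymbol{\phi}\mapsto D_{KL}(Q\|P(\cdot|\boldsymbol{\phi}))$ and positivity of the prior force the integrand to exceed $e^{-t\eta}$ for arbitrarily small $\eta>0$ once $t$ is large; and (ii) upper–bound the numerator integral over $\{\|\boldsymbol{\phi}-\boldsymbol{\phi}_{\theta^*}\|>\eps\}$ by a quantity of order $e^{-ct}$ with $c>0$, using a uniform negative drift of the empirical log–likelihood ratio on that set. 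Dividing~(ii) by~(i) shows the posterior mass outside $B_\eps(\boldsymbol{\phi}_{\theta^*})$ vanishes almost surely, hence in probability; this is exactly~\eqref{eq:con1}, the symbol $\delta_{\boldsymbol{\phi}_{\theta^*}}$ being understood as the weak limit, i.e.\ the point mass at $\boldsymbol{\phi}_{\theta^*}$.

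Part~(b) follows by the substitutions $Q\mapsto Q_\theta$, $\boldsymbol{\phi}_{\theta^*}\mapsto\boldsymbol{\phi}_\theta$, $\boldsymbol{\omega}_{1:t}\mapsto\mathbf{r}_\theta$, $t\mapsto|\mathbf{r}_\theta|$, with the prior taken to be the uninformative $f_0$ throughout: the $r_{k,\theta}$ are i.i.d.\ $Q_\theta$, so the same SLLN argument forces concentration at the projection $\boldsymbol{\phi}_\theta$ of $Q_\theta$ defined in~\eqref{eq:phi_star}. In the well–specified sub–case (some $D_{KL}=0$), $\boldsymbol{\phi}_{\theta^*}$ and $\boldsymbol{\phi}_\theta$ are the true data–generating parameters and the proposition reduces to the familiar Doob/Bernstein--von Mises consistency. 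The main obstacle is step~(ii): pointwise SLLN convergence does not by itself suppress the posterior tails when $\boldsymbol{\Phi}$ is non-compact, so the real work is the \emph{uniform}-in-$\boldsymbol{\phi}$ control of the empirical log–likelihood ratio on the complement of a neighborhood of the projection. This is precisely what Assumption~\ref{assum:reg} — the regularity conditions imported from~\cite{W1969,KV2012} and listed in Appendix~\ref{app:regularity} — is invoked to guarantee, and it is also what makes the misspecified setting more delicate than the classical one.
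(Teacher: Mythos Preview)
Your proposal is correct and, in fact, considerably more detailed than what the paper itself provides. The paper does not give a self-contained proof of Proposition~\ref{lem:normal_posterior}: it simply states that the result ``follows from the Bernstein--von Mises theorem'' and cites~\cite{W1969, KV2012, LeCam1956, gelman2013, D1976, Ait1975, Kom1996, CG1999}, observing that the posterior is asymptotically normal with mean at the KL-minimizing parameter and covariance shrinking to zero. Your Wald/Schwartz posterior-consistency sketch is exactly the machinery underlying those references, so the approaches are aligned; you are just unpacking what the paper leaves packaged in a citation.

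One small remark: you flag the uniform control in step~(ii) as the main obstacle ``when $\boldsymbol{\Phi}$ is non-compact,'' but note that condition~(a) of Assumption~\ref{assum:reg_sup} in Appendix~\ref{app:regularity} explicitly takes $\boldsymbol{\Phi}$ to be compact. This does not invalidate your argument---uniform control of the empirical log-likelihood ratio away from the projection is still needed even on a compact parameter space, and compactness together with the continuity/identifiability conditions~(b),~(f),~(g) is precisely what delivers it via a standard Wald-type covering argument---but it does mean the obstacle is milder than you suggest, and you can invoke compactness directly rather than treating it as an open issue.
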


Proposition~\ref{lem:normal_posterior} follows from the Bernstein-Von Mises theorem \cite{W1969, KV2012, LeCam1956, gelman2013, D1976,Ait1975, Kom1996, CG1999}, where the posterior distributions are asymptotically normal with a covariance decaying to $0$ asymptotically. Moreover, the mean is the maximum likelihood estimate that minimizes the KL divergence. This indicates that there exist families of parametric distributions that allow for the posterior (or prior) predictive distribution to asymptotically converge to the likelihood function evaluated at the exact parameters. Now, we are ready to analyze the asymptotic properties of the uncertain likelihood ration in~\eqref{eq:ULR}.

\begin{theorem} \label{thm:ULR}
Let Assumptions~\ref{assum:support}-\ref{assum:reg} hold, and assume that the amount of prior evidence for each hypothesis $\theta\in\boldsymbol{\Theta}$ is finite, i.e., $|\mathbf{r}_\theta|<\infty$. Then, the uncertain likelihood ratio~\eqref{eq:ULR}. converges in probability as 
\begin{align} \label{eq:tilde_ULR}
    \widetilde{\Lambda}_{\theta} = \lim_{t\to\infty}\Lambda_\theta(t)= 
    \frac{f(\boldsymbol{\phi}_{\theta^*}|\psi(\mathbf{r}_\theta))}{f_0(\boldsymbol{\phi}_{\theta^*})}.
\end{align}

\end{theorem}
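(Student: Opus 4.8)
The plan is to use a Bayes'-rule identity that cancels the unbounded factor $P(\boldsymbol{\omega}_{1:t}\,|\,\boldsymbol{\phi}_{\theta^*})$ between the numerator and the denominator of $\Lambda_\theta(t)$, thereby reducing the theorem to a statement about the two relevant posterior densities evaluated at the single point $\boldsymbol{\phi}_{\theta^*}$.

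First I would write the posterior obtained from the prior $f(\boldsymbol{\phi}\,|\,\psi(\mathbf{r}_\theta))$ after observing $\boldsymbol{\omega}_{1:t}$ via Bayes' rule, $f(\boldsymbol{\phi}\,|\,\psi(\{\boldsymbol{\omega}_{1:t},\mathbf{r}_\theta\})) = P(\boldsymbol{\omega}_{1:t}\,|\,\boldsymbol{\phi})\,f(\boldsymbol{\phi}\,|\,\psi(\mathbf{r}_\theta))/\hat{P}(\boldsymbol{\omega}_{1:t}\,|\,\mathbf{r}_\theta)$, and likewise $f(\boldsymbol{\phi}\,|\,\psi(\boldsymbol{\omega}_{1:t})) = P(\boldsymbol{\omega}_{1:t}\,|\,\boldsymbol{\phi})\,f_0(\boldsymbol{\phi})/\hat{P}(\boldsymbol{\omega}_{1:t}\,|\,\mathbf{r}_\theta=\emptyset)$. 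Solving each for the normalizing constant and evaluating at $\boldsymbol{\phi}=\boldsymbol{\phi}_{\theta^*}$ — which is legitimate since $f_0$ has full support and, by Assumptions~\ref{assum:support}--\ref{assum:mle_kl}, $P(\boldsymbol{\omega}_{1:t}\,|\,\boldsymbol{\phi}_{\theta^*})>0$ almost surely (otherwise the KL divergence minimized at $\boldsymbol{\phi}_{\theta^*}$ would be infinite) — the common factor $P(\boldsymbol{\omega}_{1:t}\,|\,\boldsymbol{\phi}_{\theta^*})$ cancels in the ratio~\eqref{eq:ULR} and gives
\begin{align*}
\Lambda_\theta(t) = \frac{f(\boldsymbol{\phi}_{\theta^*}\,|\,\psi(\mathbf{r}_\theta))}{f_0(\boldsymbol{\phi}_{\theta^*})}\cdot\frac{f(\boldsymbol{\phi}_{\theta^*}\,|\,\psi(\boldsymbol{\omega}_{1:t}))}{f(\boldsymbol{\phi}_{\theta^*}\,|\,\psi(\{\boldsymbol{\omega}_{1:t},\mathbf{r}_\theta\}))}.
\end{align*}
The first factor is exactly $\widetilde{\Lambda}_\theta$ and is independent of $t$, so the claim reduces to showing the second factor converges to $1$ in probability.

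For that, I would invoke the Bernstein--von Mises structure underlying Proposition~\ref{lem:normal_posterior}: both $f(\boldsymbol{\phi}\,|\,\psi(\{\boldsymbol{\omega}_{1:t},\mathbf{r}_\theta\}))$ and $f(\boldsymbol{\phi}\,|\,\psi(\boldsymbol{\omega}_{1:t}))$ (the latter being the $|\mathbf{r}_\theta|=0$ instance of part~(a)) are asymptotically Gaussian with mean the respective MAP/MLE estimator and covariance $\asymp \tfrac{1}{t}J(\boldsymbol{\phi}_{\theta^*})^{-1}$, where $J$ is the Fisher information of the assumed model at $\boldsymbol{\phi}_{\theta^*}$. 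Evaluated at $\boldsymbol{\phi}_{\theta^*}$ the two Gaussian normalizing constants coincide, so the ratio of densities there equals $\exp\!\bigl(-\tfrac{t}{2}\bigl[(\boldsymbol{\phi}_{\theta^*}-\hat{\boldsymbol{\phi}}_t^{\mathbf{r}})^{\!\top}J(\boldsymbol{\phi}_{\theta^*}-\hat{\boldsymbol{\phi}}_t^{\mathbf{r}})-(\boldsymbol{\phi}_{\theta^*}-\hat{\boldsymbol{\phi}}_t)^{\!\top}J(\boldsymbol{\phi}_{\theta^*}-\hat{\boldsymbol{\phi}}_t)\bigr]\bigr)\bigl(1+o_p(1)\bigr)$. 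Since the estimator with the finite prior evidence, $\hat{\boldsymbol{\phi}}_t^{\mathbf{r}}$, and the one without it, $\hat{\boldsymbol{\phi}}_t$, differ by $O_p(1/t)$ (the fixed sample $\mathbf{r}_\theta$ shifts the estimating equations by an $O(1)$ amount against an $O(t)$ curvature), while each is within $O_p(1/\sqrt t)$ of $\boldsymbol{\phi}_{\theta^*}$, the bracketed difference of quadratic forms is $O_p(t^{-3/2})$; multiplied by $t/2$ it is $O_p(t^{-1/2})\to 0$, so the exponential tends to $1$ in probability and the second factor does too. Combining the two pieces yields $\Lambda_\theta(t)\to\widetilde{\Lambda}_\theta$ in probability.

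The main obstacle is precisely this middle step: Proposition~\ref{lem:normal_posterior} as stated gives only concentration to a point mass, whereas after the cancellation we are left with a $0/0$-type ratio of two densities that individually blow up, so I need the sharper local-asymptotic-normality form of Bernstein--von Mises together with careful bookkeeping of the $O_p(1/t)$ discrepancy between the two posterior modes to show the leading-order blow-ups cancel exactly. A secondary technical point, already flagged above, is justifying the pointwise evaluation of the Bayes identity at $\boldsymbol{\phi}_{\theta^*}$ and the strict positivity there of all densities involved, which follows from the full support of $f_0$ and Assumption~\ref{assum:support}; one may also check consistency with the explicit conjugate computations of~\cite{HUKJ2020_TSP, HUKJ2020_Gauss} as a sanity check.
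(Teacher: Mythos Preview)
Your identity and the subsequent asymptotic analysis are correct, but your route is harder than the paper's. The paper exploits the same Bayes' rule, but in the \emph{reversed} direction: instead of absorbing $\boldsymbol{\omega}_{1:t}$ into posteriors and evaluating them pointwise at $\boldsymbol{\phi}_{\theta^*}$, it absorbs $\boldsymbol{\omega}_{1:t}$ into the posterior $f(\boldsymbol{\phi}\,|\,\psi(\boldsymbol{\omega}_{1:t}))$ and leaves the \emph{finite} sample $\mathbf{r}_\theta$ in the integrand, obtaining
\[
\Lambda_\theta(t)=\frac{\int_{\boldsymbol{\Phi}} P(\mathbf{r}_\theta\,|\,\boldsymbol{\phi})\,f(\boldsymbol{\phi}\,|\,\psi(\boldsymbol{\omega}_{1:t}))\,d\boldsymbol{\phi}}{\int_{\boldsymbol{\Phi}} P(\mathbf{r}_\theta\,|\,\boldsymbol{\phi})\,f_0(\boldsymbol{\phi})\,d\boldsymbol{\phi}}.
\]
Now the weak statement of Proposition~\ref{lem:normal_posterior}(a) (concentration to $\delta_{\boldsymbol{\phi}_{\theta^*}}$) suffices: since $P(\mathbf{r}_\theta\,|\,\boldsymbol{\phi})$ is a fixed, bounded, continuous function of $\boldsymbol{\phi}$, the numerator converges to $P(\mathbf{r}_\theta\,|\,\boldsymbol{\phi}_{\theta^*})$, and one more Bayes step gives $\widetilde{\Lambda}_\theta$. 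Your decomposition, by contrast, produces the ratio $f(\boldsymbol{\phi}_{\theta^*}\,|\,\psi(\boldsymbol{\omega}_{1:t}))/f(\boldsymbol{\phi}_{\theta^*}\,|\,\psi(\{\boldsymbol{\omega}_{1:t},\mathbf{r}_\theta\}))$, an $\infty/\infty$ indeterminate that forces you to invoke the full local-asymptotic-normality form of Bernstein--von Mises and track the $O_p(1/t)$ gap between the two posterior modes---machinery well beyond what Proposition~\ref{lem:normal_posterior} offers. The payoff of your route is an explicit finite-$t$ identity isolating $\widetilde{\Lambda}_\theta$ as a multiplicative factor, which could be useful for rate analysis; the paper's swap of the roles of $\boldsymbol{\omega}_{1:t}$ and $\mathbf{r}_\theta$ buys a two-line proof that never leaves the realm of bounded integrands.
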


\begin{proof}
First, the uncertain likelihood ratio is written as 
\begin{align} \label{eq:proof:ulr}
    \Lambda_{\theta}(t) &= \frac{\int_{\boldsymbol{\Phi}}P(\boldsymbol{\omega}_{1:t}|\boldsymbol{\phi})P(\mathbf{r}_\theta|\boldsymbol{\phi})f_0(\boldsymbol{\phi})d\boldsymbol{\phi}}{\int_{\boldsymbol{\Phi}}P(\boldsymbol{\omega}_{1:t}|\boldsymbol{\phi})f_0(\boldsymbol{\phi})d\boldsymbol{\phi} \int_{\boldsymbol{\Phi}}P(\mathbf{r}_\theta|\boldsymbol{\phi})f_0(\boldsymbol{\phi})d\boldsymbol{\phi}} \nonumber \\
    &= \frac{\int_{\boldsymbol{\Phi}}P(\mathbf{r}_\theta|\boldsymbol{\phi})f(\boldsymbol{\phi}|\psi(\boldsymbol{\omega}_{1:t}))d\boldsymbol{\phi}}{\int_{\boldsymbol{\Phi}}P(\mathbf{r}_\theta|\boldsymbol{\phi})f_0(\boldsymbol{\phi})d\boldsymbol{\phi}},
\end{align}
by simply applying Bayes' rule to update the posterior distribution $f(\boldsymbol{\phi}|\psi(\boldsymbol{\omega}_{1:t}))$. Then, using property (a) of Proposition~\ref{lem:normal_posterior}, the posterior distribution converges in probability to a delta function as $t\to\infty$, i.e., $\lim_{t\to\infty} f(\boldsymbol{\phi}|\psi(\boldsymbol{\omega}_{1:t}))=\delta_{\boldsymbol{\phi}_{\theta^*}}(\boldsymbol{\phi})$. This results in~\eqref{eq:proof:ulr} converging in probability to 
\begin{align}\label{eq:asymptotic_L}
    \Lambda_{\theta}(t) = \frac{P(\mathbf{r}_\theta|\boldsymbol{\phi}_{\theta^*})}{\hat{P}(\mathbf{r}_\theta)},
\end{align}
where $\hat{P}(\mathbf{r}_\theta)=\int_{\boldsymbol{\Phi}}P(\mathbf{r}_\theta|\boldsymbol{\phi})f_0(\boldsymbol{\phi})d\boldsymbol{\phi}$. Then, if we multiply the right hand side of~\eqref{eq:asymptotic_L} by $f_0(\boldsymbol{\phi}_{\theta^*})/f_0(\boldsymbol{\phi}_{\theta^*})$ and apply Bayes rule (see~\eqref{eq:cp}),~\eqref{eq:tilde_ULR} follows directly. 

\end{proof}

Following the same logic as in Theorem~\ref{thm:ULR}, we now provide a consistency result.

\begin{corollary} \label{cor:ULR_Dog}
Let Assumptions~\ref{assum:support}-~\ref{assum:reg} hold and assume that the agent is certain for each hypothesis $\theta\in\boldsymbol{\Theta}$, i.e., $|\mathbf{r}_\theta|{=}\infty$. Then, the uncertain likelihood ratio \eqref{eq:ULR} has the following property:
\begin{align}
&\lim_{t\to\infty,|\mathbf{r}_\theta| \to\infty} \Lambda_{\theta}(t) {=} \infty, \ \text{if $\boldsymbol{\phi}_\theta = \boldsymbol{\phi}_{\theta^*}$, and} \nonumber \\
&\lim_{t\to\infty, |\mathbf{r}_\theta| \to\infty} \Lambda_{\theta}(t) = 0, \ \text{if $\boldsymbol{\phi}_\theta \ne \boldsymbol{\phi}_{\theta^*}$.}
\end{align}
\end{corollary}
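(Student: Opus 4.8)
Mirroring the proof of Theorem~\ref{thm:ULR}, the plan is to first collapse the two-sided limit by sending $|\mathbf{r}_\theta|\to\infty$ before $t\to\infty$. Writing the uncertain likelihood ratio in the symmetric form
\begin{align*}
\Lambda_\theta(t) &= \frac{\int_{\boldsymbol{\Phi}}P(\boldsymbol{\omega}_{1:t}|\boldsymbol{\phi})P(\mathbf{r}_\theta|\boldsymbol{\phi})f_0(\boldsymbol{\phi})d\boldsymbol{\phi}}{\int_{\boldsymbol{\Phi}}P(\boldsymbol{\omega}_{1:t}|\boldsymbol{\phi})f_0(\boldsymbol{\phi})d\boldsymbol{\phi}\;\int_{\boldsymbol{\Phi}}P(\mathbf{r}_\theta|\boldsymbol{\phi})f_0(\boldsymbol{\phi})d\boldsymbol{\phi}} \\
&= \frac{\int_{\boldsymbol{\Phi}}P(\boldsymbol{\omega}_{1:t}|\boldsymbol{\phi})f(\boldsymbol{\phi}|\psi(\mathbf{r}_\theta))d\boldsymbol{\phi}}{\hat{P}(\boldsymbol{\omega}_{1:t}|\mathbf{r}_\theta=\emptyset)},
\end{align*}
which is just~\eqref{eq:proof:ulr} with the roles of $\boldsymbol{\omega}_{1:t}$ and $\mathbf{r}_\theta$ interchanged, I would apply Proposition~\ref{lem:normal_posterior}(b) to get $f(\boldsymbol{\phi}|\psi(\mathbf{r}_\theta))\to\delta_{\boldsymbol{\phi}_\theta}(\boldsymbol{\phi})$ in probability, so that the numerator tends to $P(\boldsymbol{\omega}_{1:t}|\boldsymbol{\phi}_\theta)$ and $\Lambda_\theta(t)\to P(\boldsymbol{\omega}_{1:t}|\boldsymbol{\phi}_\theta)/\hat{P}(\boldsymbol{\omega}_{1:t}|\mathbf{r}_\theta=\emptyset)$ — the likelihood at $\boldsymbol{\phi}_\theta$ divided by the prior predictive of the observed sample. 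It then remains to study this ratio as $t\to\infty$ under i.i.d.\ observations from $Q$.

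For the case $\boldsymbol{\phi}_\theta\ne\boldsymbol{\phi}_{\theta^*}$, I would take logarithms and normalize by $t$. The strong law of large numbers yields $\tfrac1t\log P(\boldsymbol{\omega}_{1:t}|\boldsymbol{\phi}_\theta)\to\mathbb{E}_Q[\log P(\omega|\boldsymbol{\phi}_\theta)]$, and a standard Laplace-type estimate for the marginal (prior predictive) likelihood, valid under the regularity conditions of Assumption~\ref{assum:reg}, yields $\tfrac1t\log\hat{P}(\boldsymbol{\omega}_{1:t}|\mathbf{r}_\theta=\emptyset)\to\sup_{\boldsymbol{\phi}}\mathbb{E}_Q[\log P(\omega|\boldsymbol{\phi})]=\mathbb{E}_Q[\log P(\omega|\boldsymbol{\phi}_{\theta^*})]$, the supremum being attained at the unique KL minimizer $\boldsymbol{\phi}_{\theta^*}$ of Assumption~\ref{assum:mle_kl}. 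Subtracting,
\begin{align*}
\tfrac1t\log\Lambda_\theta(t) &\longrightarrow \mathbb{E}_Q[\log P(\omega|\boldsymbol{\phi}_\theta)]-\mathbb{E}_Q[\log P(\omega|\boldsymbol{\phi}_{\theta^*})] \\
&= D_{KL}\big(Q\|P(\cdot|\boldsymbol{\phi}_{\theta^*})\big)-D_{KL}\big(Q\|P(\cdot|\boldsymbol{\phi}_\theta)\big),
\end{align*}
where the $\mathbb{E}_Q[\log Q]$ terms cancel (finiteness being ensured by Assumption~\ref{assum:support}). This limit is $\le 0$ in general and, by the uniqueness of the KL minimizer, strictly negative whenever $\boldsymbol{\phi}_\theta\ne\boldsymbol{\phi}_{\theta^*}$; hence $\Lambda_\theta(t)\to 0$ exponentially fast, which is the second assertion.

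For the case $\boldsymbol{\phi}_\theta=\boldsymbol{\phi}_{\theta^*}$ the exponential rate above is exactly zero, so the argument must be sharpened to capture the sub-exponential growth. Here I would factor $P(\boldsymbol{\omega}_{1:t}|\boldsymbol{\phi}_{\theta^*})$ out of the denominator and apply a Laplace expansion of the normalizing integral around the maximum-likelihood estimate $\widehat{\boldsymbol{\phi}}_t$: writing $d=\dim(\boldsymbol{\phi})$ and $\bar{\mathbf{I}}_t$ for the empirical Fisher information,
\[
\hat{P}(\boldsymbol{\omega}_{1:t}|\mathbf{r}_\theta=\emptyset)=P(\boldsymbol{\omega}_{1:t}|\widehat{\boldsymbol{\phi}}_t)\,f_0(\widehat{\boldsymbol{\phi}}_t)\left(\tfrac{2\pi}{t}\right)^{d/2}|\bar{\mathbf{I}}_t|^{-1/2}\big(1+o_p(1)\big),
\]
while $P(\boldsymbol{\omega}_{1:t}|\boldsymbol{\phi}_{\theta^*})=P(\boldsymbol{\omega}_{1:t}|\widehat{\boldsymbol{\phi}}_t)\exp\!\Big(-\tfrac12\,t\,(\widehat{\boldsymbol{\phi}}_t-\boldsymbol{\phi}_{\theta^*})^{\top}\bar{\mathbf{I}}_t(\widehat{\boldsymbol{\phi}}_t-\boldsymbol{\phi}_{\theta^*})\Big)\big(1+o_p(1)\big)$. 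Since $\sqrt{t}(\widehat{\boldsymbol{\phi}}_t-\boldsymbol{\phi}_{\theta^*})$ is asymptotically normal under Assumption~\ref{assum:reg} (the same input behind Proposition~\ref{lem:normal_posterior}), the quadratic form in the exponent is $O_p(1)$; combined with $f_0(\widehat{\boldsymbol{\phi}}_t)\to f_0(\boldsymbol{\phi}_{\theta^*})\in(0,\infty)$ (full support of $f_0$) and $|\bar{\mathbf{I}}_t|\to|\mathbf{I}(\boldsymbol{\phi}_{\theta^*})|>0$, the ratio $P(\boldsymbol{\omega}_{1:t}|\boldsymbol{\phi}_{\theta^*})/\hat{P}(\boldsymbol{\omega}_{1:t}|\mathbf{r}_\theta=\emptyset)$ grows like $t^{d/2}$ up to a stochastically bounded, strictly positive factor, so $\Lambda_\theta(t)\to\infty$, which is the first assertion.

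The main obstacle is precisely this boundary case $\boldsymbol{\phi}_\theta=\boldsymbol{\phi}_{\theta^*}$: the first-order log-asymptotics only give $\tfrac1t\log\Lambda_\theta(t)\to 0$, so one must extract the polynomial factor $t^{d/2}$ from the Laplace approximation of the prior-predictive normalizer and verify that the residual random factor (the exponentiated quadratic form, together with $f_0(\widehat{\boldsymbol{\phi}}_t)$ and $|\bar{\mathbf{I}}_t|$) stays bounded in probability and bounded away from zero — this is where Assumptions~\ref{assum:support}--\ref{assum:reg} and the asymptotic normality underlying Proposition~\ref{lem:normal_posterior} do the real work. A secondary technical point is the legitimacy of the order of limits; performing $|\mathbf{r}_\theta|\to\infty$ first, as above, sidesteps the need to interchange the two limits.
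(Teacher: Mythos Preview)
Your argument is correct, but it takes the opposite order of limits from the paper and consequently does substantially more work than necessary. The paper sends $t\to\infty$ \emph{first}, which simply invokes Theorem~\ref{thm:ULR} to obtain the closed form $\widetilde{\Lambda}_\theta = f(\boldsymbol{\phi}_{\theta^*}|\psi(\mathbf{r}_\theta))/f_0(\boldsymbol{\phi}_{\theta^*})$; then letting $|\mathbf{r}_\theta|\to\infty$ and applying Proposition~\ref{lem:normal_posterior}(b) makes the numerator collapse to $\delta_{\boldsymbol{\phi}_\theta}(\boldsymbol{\phi}_{\theta^*})$, which is immediately $\infty$ or $0$ according to whether $\boldsymbol{\phi}_\theta=\boldsymbol{\phi}_{\theta^*}$ or not, with $f_0(\boldsymbol{\phi}_{\theta^*})\in(0,\infty)$ finishing the job. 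The paper then remarks that the reversed order gives the same answer, justifying the interchange. By contrast, your route---sending $|\mathbf{r}_\theta|\to\infty$ first and then analyzing $P(\boldsymbol{\omega}_{1:t}|\boldsymbol{\phi}_\theta)/\hat{P}(\boldsymbol{\omega}_{1:t}|\emptyset)$ as $t\to\infty$---forces you to reprove, via SLLN plus a Laplace/BIC expansion, essentially the same asymptotics that Theorem~\ref{thm:ULR} already packaged. The upside of your approach is that it yields quantitative rates (exponential decay when $\boldsymbol{\phi}_\theta\ne\boldsymbol{\phi}_{\theta^*}$, polynomial growth $t^{d/2}$ when $\boldsymbol{\phi}_\theta=\boldsymbol{\phi}_{\theta^*}$) that the paper's two-line argument does not; the downside is that the boundary case requires the full Laplace machinery and careful control of the $O_p(1)$ residuals, whereas the paper gets the dichotomy for free from the delta-function limit.
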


\begin{proof}
First, noting that the prior evidence $\mathbf{r}_\theta$ and private observations $\boldsymbol{\omega}_{1:t}$ are i.i.d, the order in which the data is received is also independent, allowing the limiting operations to be interchanged such that Theorem~\ref{thm:ULR} can be leveraged. Thus, 
\begin{align} \label{eq:interate_limit}
    \lim_{\tiny\begin{array}{c}
|\mathbf{r}_\theta|\to\infty\\t\to\infty
\end{array}} \Lambda_\theta(t) = \lim_{|\mathbf{r}_\theta|\to\infty} \widetilde{\Lambda}_\theta = \lim_{|\mathbf{r}_\theta|\to\infty} \frac{f(\boldsymbol{\phi}_{\theta^*}|\psi(\mathbf{r}_\theta))}{f_0(\boldsymbol{\phi}_{\theta^*})}. 
\end{align}
Noting from Proposition~\ref{lem:normal_posterior} that $f(\boldsymbol{\phi}_{\theta^*}|\psi(\mathbf{r}_\theta))$ converges in probability to $\delta_{\boldsymbol{\phi}_\theta}(\boldsymbol{\phi})$, the numerator in~\eqref{eq:interate_limit} either diverges to $\infty$ if $\boldsymbol{\phi}_{\theta^*}=\boldsymbol{\phi}_\theta$ or converges to $0$ otherwise. Then, our result is achieved since $f_0(\boldsymbol{\phi}_{\theta^*})$ to be strictly positive and bounded. Note that this result is also achieved when the limit as $|\mathbf{r}_\theta|\to\infty$ is applied first, thus justifying the interchange of operations. 
\end{proof}

Theorem~\ref{thm:ULR} shows that the uncertain likelihood ratio eventually converges to a finite value when the amount of prior evidence is finite, and the number of observations grows without bound. When the ground truth parameters $\boldsymbol{\phi}_{\theta^*}$ lie near the mode of the distribution $f(\boldsymbol{\phi}|\psi(\mathbf{r}_\theta))$, the ratio will converge to a value greater than $1$, indicating that the prior evidence and the observations are consistent. However, when the parameters $\boldsymbol{\phi}_{\theta^*}$ lie within the tail of $f(\boldsymbol{\phi}|\psi(\mathbf{r}_\theta))$, the ratio will converge to a value much less than $1$, indicating that the distributions are inconsistent with each other. Then, Corollary~\ref{cor:ULR_Dog} shows that as the amount of prior evidence grows without bound, only the hypothesis with parameters $\boldsymbol{\phi}_\theta =\boldsymbol{\phi}_{\theta^*}= \argmin_{\boldsymbol{\phi}\in\boldsymbol{\Phi}} D_{KL}(Q\|P(\cdot|\boldsymbol{\phi}))$ will have a ratio $>0$, allowing the agents to learn the set of hypotheses indistinguishable with the ground truth, i.e., $\boldsymbol{\Theta}^*$. These results are consistent with but more general than the uncertain models presented in~\cite{HUKJ2020_TSP,HUKJ2020_Gauss}. 

\section{Non-Bayesian Social Learning with Uncertain Models} \label{sec:nbsl_um}

In this section, we switch back to the networked setting and present a belief update rule that adjusts~\eqref{eq:main} to handle both epistemic and model uncertainties (c.f. Section~\ref{sec:types_of_uncertainty}). 

Consider that each agent $i$ collects prior evidence $\mathbf{r}_\theta^i$ and constructs a belief $\mu_{0}^i(\theta)=1$ for each hypothesis $\theta \in\boldsymbol{\Theta}$ at time $t=0$.Then, for each time step $t\ge1$, each agent sequentially (i) communicates their beliefs to their neighbors, (ii) receives a new observation, and (iii) updates their beliefs using a social learning rule. This results in each agent $i$ having access to the information $\boldsymbol{\gamma}_{t+1}^i(\theta) = \{\omega_{t+1}^i, \mathbf{r}_\theta^i, \{\mu_{t}^j(\theta)\}_{\forall j\in\mathcal{M}^i}\}$ for each hypothesis $\theta\in\boldsymbol{\Theta}$ at each time step $t+1$. Then, agent $i$ updates their belief $\mu_{t+1}^i(\theta)$ using the following update rule:
\begin{eqnarray} \label{eq:ll_update}
\mu_{t+1}^i(\theta) = \ell_{\theta}^i(\omega_{t+1}^i)\prod_{j\in\mathcal{M}^i} \mu_{t}^j(\theta)^{[\mathbf{A}]_{ij}}, \label{eq:LL_rule} 
\end{eqnarray}
where the product on the right hand side of (\ref{eq:LL_rule}) represents a geometric average of their neighbors beliefs and $\ell_{\theta}^i(\omega_{t+1}^i)$ is the uncertain likelihood update defined in~\eqref{eq:ulu}. Note that the social learning rule~\eqref{eq:LL_rule} differs from~\eqref{eq:main}.

With the proposed social learning rule stated, we now present the convergence properties of the beliefs with uncertain models. Note that auxiliary lemmas and proofs are provided in the appendix for ease of presentation. 

\begin{theorem} \label{th:LL}
Let Assumptions~\ref{assum:graph} and~\ref{assum:support}-\ref{assum:reg} hold and assume that the amount of prior evidence for each agent $i$ and each hypothesis $\theta\in \boldsymbol{\Theta}$ is finite, i.e., $|\mathbf{r}_\theta^i|<\infty$. Then, the beliefs generated with the update rule~\eqref{eq:ll_update} have the following property;
\begin{equation}
\lim_{t\to\infty} \mu_{t}^i(\theta) = \left(\prod_{j=1}^m \widetilde{\Lambda}_{\theta}^j\right)^{\frac{1}{m}} 
\end{equation}
in probability for each $i\in\mathcal{M}$, where $\widetilde{\Lambda}_{\theta}^j$ is defined in~\eqref{eq:tilde_ULR}.
\end{theorem}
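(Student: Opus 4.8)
The plan is to take logarithms of the update rule~\eqref{eq:ll_update} and recognize it as a linear consensus-plus-perturbation recursion driven by the log-uncertain-likelihood-update terms. Define $\varphi_t^i(\theta) = \log\mu_t^i(\theta)$ and $L_t^i(\theta) = \log\ell_\theta^i(\omega_t^i)$. Taking logs of~\eqref{eq:LL_rule} gives the vector recursion $\boldsymbol{\varphi}_{t+1}(\theta) = \mathbf{A}\boldsymbol{\varphi}_t(\theta) + \mathbf{L}_{t+1}(\theta)$, with $\boldsymbol{\varphi}_0(\theta) = \mathbf{0}$ since each agent initializes $\mu_0^i(\theta)=1$. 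Unrolling the recursion yields $\boldsymbol{\varphi}_{t}(\theta) = \sum_{s=1}^{t} \mathbf{A}^{t-s}\mathbf{L}_s(\theta)$. Because $\mathbf{A}$ is doubly-stochastic, irreducible, and aperiodic (Assumption~\ref{assum:graph}), $\mathbf{A}^{k} \to \frac{1}{m}\mathbf{1}\mathbf{1}^\top$ geometrically. So the $i$-th component of $\boldsymbol{\varphi}_t(\theta)$ is approximately $\frac{1}{m}\sum_{s=1}^t \sum_{j=1}^m L_s^j(\theta)$ plus a transient that decays geometrically in $t-s$.

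The second step is to control $\sum_{s=1}^t L_s^j(\theta)$. Note that $\sum_{s=1}^t L_s^j(\theta) = \log\prod_{s=1}^t \ell_\theta^j(\omega_s^j) = \log \Lambda_\theta^j(t)$ by the recursive identity~\eqref{eq:url_recursive} together with $\Lambda_\theta^j(0)=1$. Hence $\varphi_t^i(\theta) = \frac{1}{m}\sum_{j=1}^m \log\Lambda_\theta^j(t) + (\text{transient})$, i.e., $\mu_t^i(\theta) = \big(\prod_{j=1}^m \Lambda_\theta^j(t)\big)^{1/m}\cdot e^{(\text{transient})}$. By Theorem~\ref{thm:ULR}, each $\Lambda_\theta^j(t) \to \widetilde{\Lambda}_\theta^j$ in probability as $t\to\infty$, which (since $\widetilde{\Lambda}_\theta^j$ is a fixed finite positive constant and $\log$ is continuous) gives $\log\Lambda_\theta^j(t) \to \log\widetilde{\Lambda}_\theta^j$ in probability; averaging over the finitely many agents $j$ preserves this. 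It then remains to show the transient term $\sum_{s=1}^t ([\mathbf{A}^{t-s}]_{ij} - \tfrac1m) L_s^j(\theta)$ vanishes in probability.

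The main obstacle is precisely this transient term: the weights $[\mathbf{A}^{t-s}]_{ij}-\tfrac1m$ decay geometrically in $t-s$, but the increments $L_s^j(\theta)$ are random and not obviously uniformly bounded. I would handle this with the standard argument for perturbed consensus (as in~\cite{nedic2017distributed, NOU2017}): split the sum at $s \le t - \sqrt{t}$ and $s > t - \sqrt{t}$. On the early block, the geometric factor $\rho^{t-s}$ (with $\rho<1$ the spectral gap bound) is at most $\rho^{\sqrt t}$, so that block is bounded by $\rho^{\sqrt t}\sum_{s\le t-\sqrt t}|L_s^j(\theta)|$; one needs $\frac1t\sum_s |L_s^j(\theta)| $ or $\max_{s\le t}|L_s^j|/\rho^{\text{const}\cdot\sqrt t}$ under control, which follows because $\frac1t\log\Lambda_\theta^j(t)\to 0$ (the uncertain likelihood ratio does not grow exponentially — a consequence of Theorem~\ref{thm:ULR}, since $\Lambda_\theta^j(t)$ converges) and a Cesàro/bounded-increments argument on $L_s^j$. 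On the late block $s>t-\sqrt t$, there are only $\sqrt t$ terms and $\log\Lambda_\theta^j(s)\to\log\widetilde\Lambda_\theta^j$, so $L_s^j(\theta) = \log\Lambda_\theta^j(s)-\log\Lambda_\theta^j(s-1) \to 0$ in probability, making that block a sum of $\sqrt t$ bounded weights times vanishing terms — which I would make rigorous by invoking an auxiliary lemma (stated in the appendix) that $\max_{s>t-\sqrt t}|L_s^j(\theta)|\to 0$. Combining the two blocks gives that the transient vanishes in probability, and together with Step 2 and the continuous mapping / Slutsky argument, exponentiating yields $\mu_t^i(\theta)\to\big(\prod_{j=1}^m\widetilde\Lambda_\theta^j\big)^{1/m}$ in probability, uniformly over the finite hypothesis set and all $i\in\mathcal{M}$.
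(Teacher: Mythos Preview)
Your approach is essentially the paper's: take logs, unroll the consensus recursion into $\sum_{s=1}^t \mathbf{A}^{t-s}\mathbf{L}_s(\theta)$, split off the network average $\tfrac{1}{m}\sum_j\log\Lambda_\theta^j(t)$, and show the residual vanishes. The paper's execution differs in two implementation details that make the transient step cleaner than your $\sqrt{t}$-splitting. First, rather than inferring $L_s^j\to 0$ from the convergence of $\Lambda_\theta^j(s)$ via Theorem~\ref{thm:ULR}, the paper proves directly (its Lemma~\ref{lem:ulru_finite}) that $\ell_\theta^i(\omega_t^i)\to 1$ in probability, since by Proposition~\ref{lem:normal_posterior}(a) both posteriors $f(\boldsymbol{\phi}|\psi(\{\boldsymbol{\omega}_{1:t-1},\mathbf{r}_\theta\}))$ and $f(\boldsymbol{\phi}|\psi(\boldsymbol{\omega}_{1:t-1}))$ collapse to $\delta_{\boldsymbol{\phi}_{\theta^*}}$ when $|\mathbf{r}_\theta|<\infty$. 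Second, for the transient the paper bounds $\big\|\log\boldsymbol{\mu}_{t+1}(\theta)-\tfrac{1}{m}\sum_j\log\Lambda_\theta^j(t+1)\mathbf{1}\big\|\le \sqrt{2}m\sum_{\tau=0}^t\lambda^{t-\tau}\|\log\boldsymbol{\ell}_\theta(\tau)\|$ via the standard spectral-gap estimate and then invokes the elementary convolution lemma (Lemma~3.1 in \cite{ram10}): if $\gamma_k\to 0$ and $0\le\beta<1$ then $\sum_{l=0}^k\beta^{k-l}\gamma_l\to 0$. This sidesteps your early-block control of $\sum_s|L_s^j|$, which you justify only heuristically (the claim that $\tfrac{1}{t}\log\Lambda_\theta^j(t)\to 0$ controls $\tfrac{1}{t}\sum_s|L_s^j|$ is not immediate without a bound on the absolute increments).
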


Theorem~\ref{th:LL} states that the beliefs reach consensus and converge to the geometric average of the agents' asymptotic uncertain likelihood ratios. In fact asymptotically, the beliefs have a one-to-one relation with the centralized uncertain likelihood ratio defined as follows 
\begin{align}
    \lim_{t\to\infty} \frac{\mathbb{P}_\theta(\boldsymbol{\omega}_{1:t}^1,...,\boldsymbol{\omega}_{1:t}^m|\mathbf{r}_\theta^1,...,\mathbf{r}_\theta^m)}{\mathbb{P}_\theta(\boldsymbol{\omega}_{1:t}^1,...,\boldsymbol{\omega}_{1:t}^m|\emptyset,...,\emptyset)} = \prod_{i=1}^m \widetilde{\Lambda}_\theta^i.
\end{align}
The centralized solution follows from the fact that the observations of each agent are i.i.d. and independent over the network. 

Since each of agent $i$'s uncertain likelihood ratios converge to a finite value when $|\mathbf{r}_\theta|<\infty$, $\forall \theta\in\boldsymbol{\Theta}$, every agents' beliefs will converge to a value between $(0,\infty)$ and are interpreted in the same fashion as discussed in Section~\ref{sec:gum} after Corollary~\ref{cor:ULR_Dog}. Informally, a much greater belief than $1$ provides evidence that the prior evidence and observations are consistent, i.e., drawn from the same distribution, while a value much less than $1$ provides evidence that the data sets are inconsistent. Just as before, the agents cannot decide on the hypothesis that exactly matches the ground truth with finite evidence. However, when all of the agents become certain, i.e., $|\mathbf{r}_\theta^i|\to\infty$ $\forall i\in\mathcal{M}, \theta\in\boldsymbol{\Theta}$, learning the ground truth hypothesis is possible, as captured by the following theorem.

\begin{theorem} \label{cor:dogmatic}
Let Assumptions~\ref{assum:unique}-\ref{assum:reg} 
hold and assume that every agents is certain, i.e., $|\mathbf{r}_\theta^i|\to\infty$ $\forall i\in\mathcal{M}$. Then, the beliefs for each agent $i\in\mathcal{M}$ generated using the update rule \eqref{eq:LL_rule} have the following property: 
\begin{align}
    \lim_{t\to\infty, |\mathbf{r}_\theta^i|\to\infty} \mu_{t}^i(\theta) = \left\{ 
    \begin{array}{ll}
       \infty  & \text{if} \ \theta=\theta^*, \\
        0 & \text{otherwise}
    \end{array} 
    \right.
\end{align}
in probability. 

\end{theorem}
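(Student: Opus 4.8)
The plan is to reduce the statement to Theorem~\ref{th:LL} together with the single‑agent Corollary~\ref{cor:ULR_Dog}, and then to control a geometric mean of the asymptotic uncertain likelihood ratios as the prior evidence grows. As in the proof of Corollary~\ref{cor:ULR_Dog}, the fact that each $\mathbf{r}_\theta^i$ and $\boldsymbol{\omega}_{1:t}^i$ are i.i.d.\ (and independent across agents) makes the iterated limits in $t$ and in $|\mathbf{r}_\theta^i|$ interchangeable, so I would first send $t\to\infty$ with finite prior evidence, invoke Theorem~\ref{th:LL} to obtain $\mu_t^i(\theta)\to\big(\prod_{j=1}^m\widetilde{\Lambda}_\theta^j\big)^{1/m}$, and then analyze this product as $|\mathbf{r}_\theta^j|\to\infty$ for all $j$, using $\widetilde{\Lambda}_\theta^j=f(\boldsymbol{\phi}_{\theta^*}^j|\psi(\mathbf{r}_\theta^j))/f_0(\boldsymbol{\phi}_{\theta^*}^j)$ from~\eqref{eq:tilde_ULR} together with the normal‑limit description of the posterior in Proposition~\ref{lem:normal_posterior}.

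For $\theta=\theta^*$: the prior evidence gathered for the true class is drawn from the observation law, $Q_{\theta^*}^j=Q^j$, so by~\eqref{eq:phi_star} the evaluation point $\boldsymbol{\phi}_{\theta^*}^j$ coincides with the limiting mode $\boldsymbol{\phi}_\theta^j$ of $f(\cdot|\psi(\mathbf{r}_{\theta^*}^j))$. By Proposition~\ref{lem:normal_posterior} the density at its own mode scales (up to constants) like $|\mathbf{r}_{\theta^*}^j|^{\dim(\boldsymbol{\phi})/2}$, so every factor $\widetilde{\Lambda}_{\theta^*}^j\to\infty$ (agreeing with Corollary~\ref{cor:ULR_Dog}); a finite geometric mean of quantities each diverging to $+\infty$ diverges, giving $\mu_t^i(\theta^*)\to\infty$.

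For $\theta\ne\theta^*$: Assumption~\ref{assum:unique} gives an agent $i$ with $\theta\notin\boldsymbol{\Theta}_i^*$; since $\theta^*\in\boldsymbol{\Theta}_i^*$ and the KL‑minimizer is unique (Assumption~\ref{assum:mle_kl}), this forces $\boldsymbol{\phi}_\theta^i\ne\boldsymbol{\phi}_{\theta^*}^i$, placing $\boldsymbol{\phi}_{\theta^*}^i$ in the tail of $f(\cdot|\psi(\mathbf{r}_\theta^i))$, whence $\widetilde{\Lambda}_\theta^i\to 0$ by Corollary~\ref{cor:ULR_Dog}. The remaining factors may individually diverge or vanish, so the delicate point is to rule out a $0\cdot\infty$ indeterminacy in the product. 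I would close this with the rate content of the Bernstein--Von Mises approximation behind Proposition~\ref{lem:normal_posterior}: in the tail $\widetilde{\Lambda}_\theta^j$ decays \emph{exponentially} in $|\mathbf{r}_\theta^j|$ (with exponent proportional to $\|\boldsymbol{\phi}_{\theta^*}^j-\boldsymbol{\phi}_\theta^j\|^2$ in the Fisher metric), while a diverging factor grows only \emph{polynomially} in its evidence count; taking $\tfrac1m\log$ of the product, the linear‑in‑$|\mathbf{r}_\theta^i|$ negative term dominates the at‑most‑logarithmic positive ones, so $\log\mu_t^i(\theta)\to-\infty$, i.e.\ $\mu_t^i(\theta)\to 0$.

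I expect the last step to be the main obstacle: promoting the qualitative convergence in Proposition~\ref{lem:normal_posterior} to the exponential‑versus‑polynomial comparison required to evaluate the geometric mean, and fixing the precise meaning of the joint limit $|\mathbf{r}_\theta^i|\to\infty$ for all $i$ (comparable growth rates, or taking the limit over a discordant agent last) so that this comparison is legitimate. Everything else is routine: the consensus and limit‑interchange arguments mirror those in Theorems~\ref{thm:ULR}--\ref{th:LL} and Corollary~\ref{cor:ULR_Dog}, and locating a discordant agent uses only Assumptions~\ref{assum:unique} and~\ref{assum:mle_kl}.
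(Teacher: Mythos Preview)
Your route is genuinely different from the paper's. You send $t\to\infty$ first (via Theorem~\ref{th:LL}) and then try to evaluate $\big(\prod_j\widetilde{\Lambda}_\theta^j\big)^{1/m}$ as all $|\mathbf{r}_\theta^j|\to\infty$. The paper reverses the order for the case $\theta\neq\theta^*$: it first sends $|\mathbf{r}_\theta^i|\to\infty$, which by Proposition~\ref{lem:normal_posterior}(b) collapses every uncertain likelihood update to $\ell_\theta^i(\omega_t^i)=P^i(\omega_t^i|\boldsymbol{\phi}_\theta^i)/\hat P^i(\omega_t^i|\boldsymbol{\omega}_{1:t-1}^i)$, and then analyzes $\tfrac{1}{t}\log\boldsymbol{\mu}_t(\theta)$ directly. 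Splitting the time sum into an initial block, a tail block, and the bulk, and using Lemmas~\ref{lem:doubly}--\ref{lemm:ram} together with the law of large numbers, the paper shows
\[
\lim_{t\to\infty}\tfrac{1}{t}\log\mu_t^i(\theta)\le \tfrac{1}{m}\sum_{j=1}^m\Big(D_{KL}(Q^j\|P^j(\cdot|\boldsymbol{\phi}_{\theta^*}^j))-D_{KL}(Q^j\|P^j(\cdot|\boldsymbol{\phi}_\theta^j))\Big)+\varepsilon,
\]
which is strictly negative for small $\varepsilon$ because each summand is $\le 0$ and at least one (the discordant agent) is $<0$. No rate information about the posterior is needed; the $0\cdot\infty$ indeterminacy never arises.

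The obstacle you flag is real and, as stated, not closed. Proposition~\ref{lem:normal_posterior} only gives convergence of the posterior to a delta; it does not supply the pointwise tail estimate $f(\boldsymbol{\phi}_{\theta^*}^j|\psi(\mathbf{r}_\theta^j))\asymp e^{-c|\mathbf{r}_\theta^j|}$ at a fixed $\boldsymbol{\phi}_{\theta^*}^j\neq\boldsymbol{\phi}_\theta^j$, and the Bernstein--Von~Mises statement controls the posterior in total variation near the mode, not its value in the tails (that would require a Laplace/large-deviations argument beyond what the paper states). Moreover, your exponential-versus-polynomial comparison only works if the evidence counts grow at comparable rates across agents; if the single discordant agent's $|\mathbf{r}_\theta^i|$ grows much more slowly than the others', the product need not vanish. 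Since the theorem makes no such assumption, this order of limits does not settle the $\theta\neq\theta^*$ case without additional hypotheses. The paper's order of limits sidesteps all of this and delivers the KL rate directly; for $\theta=\theta^*$ your reduction to Theorem~\ref{th:LL} plus Corollary~\ref{cor:ULR_Dog} is essentially what the paper does.
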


Theorem~\ref{cor:dogmatic} indicates that the collective group of agents can uniquely identify the ground truth hypothesis since the agents' beliefs for every $\theta\ne\theta^*$ will converge to $0$ and only the hypothesis $\theta=\theta^*$ will diverge to $\infty$. This result is consistent with traditional non-Bayesian social learning theory, indicating that general uncertain models allow agents to learn asymptotically.

\section{Implementation of Uncertain Models} \label{sec:implement}
This section provides the computational steps required to determine the uncertain models for implementation. Once the steps are formalized, the policies for multinomial and Multivariate Gaussian likelihood models are presented to provide examples of discrete and continuous uncertain models.

\subsection{Steps to identify the Uncertain Likelihood Update}

\textbf{Step 1: Select  n a family of probability distributions.} The first step requires identifying a reasonable family of parameterized distributions suitable for the particular application, i.e., $P^i(\cdot|\boldsymbol{\phi})$. First, the sensing device dictates whether a discrete or continuous distribution family describes the measurements. For instance, photon counts of an infrared detector are Poisson distributed, while pressure measurements of an acoustic sensor lead to a continuous distribution. The underlying physics of the phenomenon observed provides insights into a proper family of parametric distributions to model the measurements for different world states. One should be conscious of balancing the dimensionality of the parameters with the ability to distinguish between the various world states. Furthermore, note that the family of the likelihood model must abide by Assumption~\ref{assum:reg}.

\textbf{Step 2: Determine the natural conjugate prior.} Next, it is well known that when the likelihood model has a fixed-dimensional set of sufficient statistics, there exists a conjugate prior distribution \cite{D2005}, $f_0(\boldsymbol{\phi}^i)=f(\boldsymbol{\phi}^i|\psi(\emptyset))$, where $\psi(\emptyset)=\psi_0^i$ are the natural hyperparameters and are chosen such that the prior is uninformative. The conjugate prior is chosen based on the parameters that are unknown to the agent. For example, if the family of likelihoods is Gaussian with unknown mean and known variance, the conjugate prior is also a Gaussian distribution. A detailed list/discussion of conjugate prior distributions can be found in~\cite{D2005, F1997}.

\textbf{Step 3: Determine how to update the hyperparameters.} The benefit of using conjugate priors is that they are closed under multiplication, which enables the posterior distribution $f(\boldsymbol{\phi}^i|\psi^i(\mathbf{x}))$ computed based on a set of data (observations and/or prior evidence) $\mathbf{x}$ to be in the same family of distributions as $f_0(\boldsymbol{\phi}^i)$, with hyperparameters that are updated in a simple recursive form. Consider that the agent has previously collected data points $\mathbf{x}$, which lead to the hyperparameters $\psi^i(\mathbf{x})$. Next, the agent collects a new set of measurements $\mathbf{x}^+$. Then, there exists a function $g(\mathbf{x}^+,\psi^i(\mathbf{x}))$ s.t. 
\begin{align}
    \psi^i(\{\mathbf{x}^+,\mathbf{x}\})=g(\mathbf{x}^+,\psi^i(\mathbf{x})).
\end{align}
 Initially, the agent collects a set of prior evidence $\mathbf{r}_\theta^i$, which leads to the prior hyperparameters $\psi^i(\mathbf{r}_\theta^i)=g(\mathbf{r}_\theta^i,\psi^i_0)$, where $\psi^i_0$ are the set of vacuous hyperparameters that provide a noninformative prior. Then, the agent sequentially collects observations $\omega_{t}^i$ at time $t\ge1$ and updates the hyperparameters of the uncertain likelihood~\eqref{eq:ul} according to $\psi^i(\{\mathbf{r}_\theta^i, \boldsymbol{\omega}_{1:t}^i\})=g(\omega_{t}^i, \psi^i(\{\mathbf{r}_\theta^i, \boldsymbol{\omega}_{1:t-1}^i\}))$. At the same time, the agent also keeps track of the hyperparameters of the model of complete ignorance (defined below~\eqref{eq:ULR}) according to $\psi^i(\boldsymbol{\omega}_{1:t}^i)=g(\omega_{t}^i,\psi^i(\boldsymbol{\omega}_{1:t-1}^i))$. Examples of $g(\mathbf{x}^+,\psi^i(\mathbf{x}))$ can be found in the following subsections. 

\textbf{Step 4: Compute the normalization factor $Z(\mathbf{x}^+,\mathbf{x})$.} When the agent collects a new data point $\mathbf{x}^+$ and has previously collected the data set $\mathbf{x}$, the posterior conjugate distribution is computed as follows 
\begin{align}\label{eq:post_phi}
    f(\boldsymbol{\phi}^i|\psi^i(\{\mathbf{x}^+,\mathbf{x}\})) = \frac{P^i(\mathbf{x}^+|\boldsymbol{\phi}^i)P^i(\mathbf{x}|\boldsymbol{\phi}^i)f_0(\boldsymbol{\phi}^i)}{Z(\mathbf{x}^+,\mathbf{x})}, 
\end{align}
where 
\begin{align}
    Z(\mathbf{x}^+,\mathbf{x})=\int_{\boldsymbol{\Phi}}P^i(\mathbf{x}^+|\boldsymbol{\phi}^i)P^i(\mathbf{x}|\boldsymbol{\phi})f_0(\boldsymbol{\phi}^i)d\boldsymbol{\phi}^i
\end{align}
is the normalization factor. In many situations, $Z(\mathbf{x}^+,\mathbf{x})$ can be analytically computed, see~\cite{D2005}. However, if $Z(\mathbf{x}^+,\mathbf{x})$ does not have a closed form, the agents must use numerical methods to compute the normalization factor, which significantly increases the runtime performance. 

\textbf{Step 5: Compute the uncertain likelihood update.} Next, the agent can simply compute the uncertain likelihood update~\eqref{eq:ulu} as a ratio of normalization factors. In this case, $\mathbf{x}^+$ is substituted with the measurement $\omega_{t+1}^i$. Then, if we expand the priors in~\eqref{eq:ulu}, as shown in~\eqref{eq:post_phi}, and replace the data set $\mathbf{x}$ with $\{\boldsymbol{\omega}_{1:t}^i,\mathbf{r}_\theta^i\}$ and $\boldsymbol{\omega}_{1:t}^i$ for the numerator and denominator, respectively, the uncertain likelihood update becomes, 
\begin{align} \label{eq:ell_z}
    \ell_{\theta}^i(\omega_{t+1}^i) = \frac{Z(\omega_{t+1}^i,\{\boldsymbol{\omega}_{1:t}^i,\mathbf{r}_\theta^i\})Z(\boldsymbol{\omega}_{1:t}^i)}{Z(\omega_{t+1}^i,\boldsymbol{\omega}_{1:t}^i)Z(\{\boldsymbol{\omega}_{1:t}^i,\mathbf{r}_\theta^i\})}.
\end{align}

\subsection{Multinomial Uncertain Models} \label{sec:mult_um}
The first example consists of the multinomial uncertain models presented in \cite{HUKJ2020_TSP}. Here, we assume that the prior evidence $\mathbf{r}_\theta^i=\{r_{1\theta}^i,...,r_{K\theta}^i\}$ for each hypothesis $\theta\in\boldsymbol{\Theta}$ represent a set of counts and each observation $\omega_{t}^i\in \{1,...,K\}$ is categorical where $K\ge 2$. This results in the agents assuming the family of multinomial distributions as their likelihood functions with parameters $\boldsymbol{\phi}_\theta^i=\boldsymbol{\pi}_\theta^i=\{\pi_{k\theta}^i| \forall k = 1,...,K\}$, where $\pi_{k\theta}^i$ is the probability that an observation is drawn from the $k$-th category and $\sum_{k=1}^K \pi_{k\theta}^i = 1$. The natural conjugate prior of the multinomial distribution is the Dirichlet distribution 
\begin{align} \label{eq:dir_prior}
    f(\boldsymbol{\pi}^i|\psi^i(\mathbf{x})) = \frac{1}{Z(\psi^i(\mathbf{x}))}\prod_{k=1}^K (\pi_{k}^i)^{\psi^i_{k}(\mathbf{x})-1},
\end{align}
where the normalization factor $Z(\psi^i(\mathbf{x}))=B(\psi^i(\mathbf{x})) = (\prod_{k=1}^K \Gamma(\psi_{k}^i(\mathbf{x})))/\Gamma(\sum_{k=1}^K \psi_{k}^i(\mathbf{x}))$ is the multivariate beta function. Initially, the hyperparameters $\psi_0^i=(\psi^i_{10},...,\psi^i_{K0})$ are vacuous and set to $\psi^i_{k0}=1$ for each category $k$, which applies a uniform distribution over the probability simplex of the parameters $\boldsymbol{\pi}^i$. 

Next, the agent computes the initial hyperparameters due to the prior evidence as $\psi^i(\mathbf{r}_\theta^i)=\mathbf{r}_\theta^i+\psi^i_0 =(r_{1\theta}^i+1,...,r_{K\theta}^i+1)$. Then, when the agent collects a new observation $\boldsymbol{\omega}_t^i=k$ at time $t\ge1$, the hyperparameters are updated as follows
\begin{align}
    \psi^i(\{\omega_t^i=k,\boldsymbol{x}\}) = \psi^i(\boldsymbol{x}) + \delta_k,
\end{align}
where $\delta_k$ is a vector of zeros with a 1 located in element $k$, i.e., the hyperparameters represent a set of counts of the combined data set $\{\omega_t^i,\boldsymbol{x}\}$. 

Then, utilizing the fact that 
\begin{align}
    \frac{Z(x^+=k,\mathbf{x})}{Z(\mathbf{x})}=\frac{B(\psi^i(\{x^+=k,\mathbf{x}\}))}{B(\psi^i(\mathbf{x}))}=\frac{\psi_k^i(\mathbf{x})+1}{|\mathbf{x}|+K},
\end{align}
the uncertain likelihood update~\eqref{eq:ell_z} becomes,
\begin{align} \label{eq:multinomial_ell}
    \ell_\theta^i(\omega_{t+1}^i=k) = \frac{(n_{kt}^i + r_{k\theta}^i+1)(t+K)}{(|\mathbf{r}_\theta^i|+t+K)(n_{kt}^i + 1)},
\end{align}
where $n_{kt}^i$ is the set of counts of category $k$ from data set $\boldsymbol{\omega}_{1:t}^i$. For a detailed derivation, please see~\cite{HUKJ2020_TSP}.

\subsection{Multivariate Gaussian Uncertain Models} \label{sec:mvn_models}
Next, we consider that the prior evidence and observations for agent $i$ are drawn from a $d$-dimensional Multivariate Gaussian distribution with unknown mean and variance, i.e., $\boldsymbol{\phi}^i=\{\mathbf{m}^i,\boldsymbol{\Sigma}^i\}$. The natural conjugate prior of the Multivariate Gaussian distribution is the Normal Inverse Wishart distribution~\cite[p.~133]{M2012} defined as
\begin{align} \label{eq:normal_cp}
    f(&\boldsymbol{\phi}^i|\psi^i(\mathbf{x})) =  \mathcal{N}\left(\mathbf{m}^i|\boldsymbol{\varpi}(\mathbf{x}),\frac{1}{\kappa(\mathbf{x})}\boldsymbol{\Sigma}^i\right)IW\left(\boldsymbol{\Sigma}^i|\mathbf{S}(\mathbf{x}),\nu(\mathbf{x})\right) \nonumber \\
    &= \frac{1}{Z(\psi^i(\mathbf{x}))} \Big|\boldsymbol{\Sigma}^i\Big|^{-\frac{\nu(\mathbf{x})+d+2}{2}} \exp\left(-\frac{1}{2}tr\left((\boldsymbol{\Sigma}^i)^{-1}\mathbf{S}(\mathbf{x})\right)\right)\times \nonumber \\
      & \ \ \ \exp\left( -\frac{\kappa(\mathbf{x})}{2}\left(\mathbf{m}^i-\boldsymbol{\varpi}(\mathbf{x})\right)'(\boldsymbol{\Sigma}^i)^{-1}\left(\mathbf{m}^i-\boldsymbol{\varpi}(\mathbf{x})\right)\right), 
\end{align}
where 
\begin{align} \label{eq:norm_gauss}
    Z(\psi^i(\mathbf{x})) = 2^{\frac{\nu(\mathbf{x}) d}{2}}&\Gamma_D\left(\frac{\nu(\mathbf{x})}{2}\right)\left(\frac{2\pi}{\kappa(\mathbf{x})}\right)^{\frac{d}{2}} \times \Big|\mathbf{S}(\mathbf{x})\Big|^{-\frac{\nu(\mathbf{x})}{2}} 
\end{align}
is the normalization factor and $\Gamma_d(\alpha)=\pi^{d(d-1)/4}\prod_{i=1}^d \Gamma((2\alpha + 1 - i)/2)$ is the multivariate gamma function. The initial hyperparameters are defined as $\psi_0^i=\{\boldsymbol{\varpi}_0, \kappa_0, \nu_0, \mathbf{S}_0\}$, where $\boldsymbol{\varpi}_0$ is the prior mean, $\kappa_0$ and $\nu_0$ are factors of how strongly we believe in the priors, and $\mathbf{S}_0$ is the prior mean for $\boldsymbol{\Sigma}^i$. A noniformative prior \cite{gelman2013} (suggested by Jeffery's) would suggest selecting the parameters $\boldsymbol{\varpi}_0=\mathbf{0}$, $\kappa_0\to 0$, $\nu_0\to-1$, and $|\mathbf{S}_0|\to 0$, which results in an improper prior. However in practice, it is suggested to use a weakly informative prior \cite{M2012}, where $\kappa_0$ is set to some small number, $\nu_0=d+2$, and $\boldsymbol{\varpi}_0$ and $\mathbf{S}_0$ are set based on some intuition of the data. In this work, we set $\kappa_0=1$, $\nu_0=d+2$, $\boldsymbol{\varpi}_0=\mathbf{0}$, and $\mathbf{S}_0=\mathbf{I}$ to be the hyperparameters of the model of complete ignorance, where $\mathbf{I}$ is the identify matrix.

\begin{figure*}[t!]
    \subfigure[2x2 Grid]{
        \centering
        \includegraphics[width=0.23\textwidth]{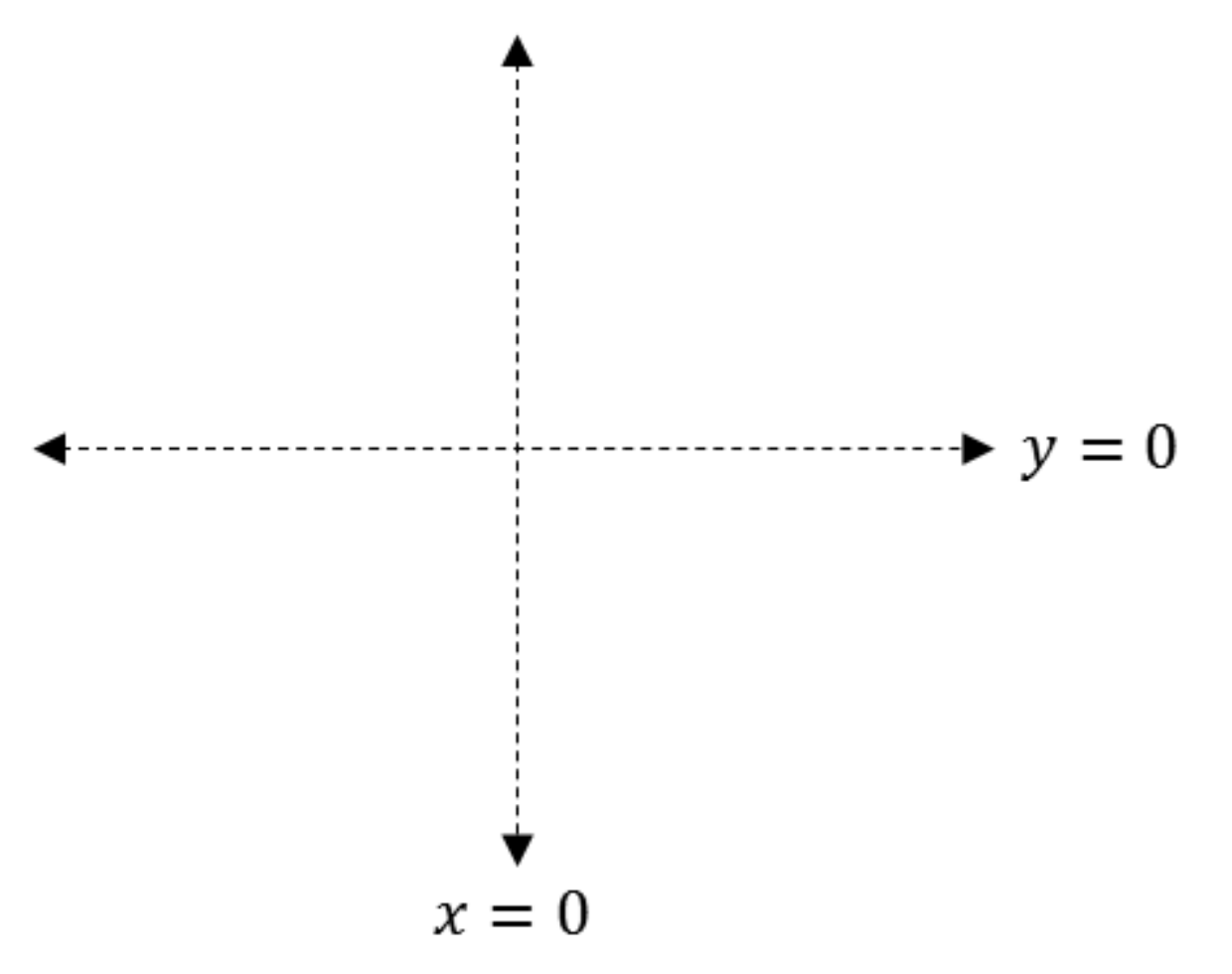}
    }%
    \subfigure[4x4 Grid]{
        \centering
        \includegraphics[width=0.23\textwidth]{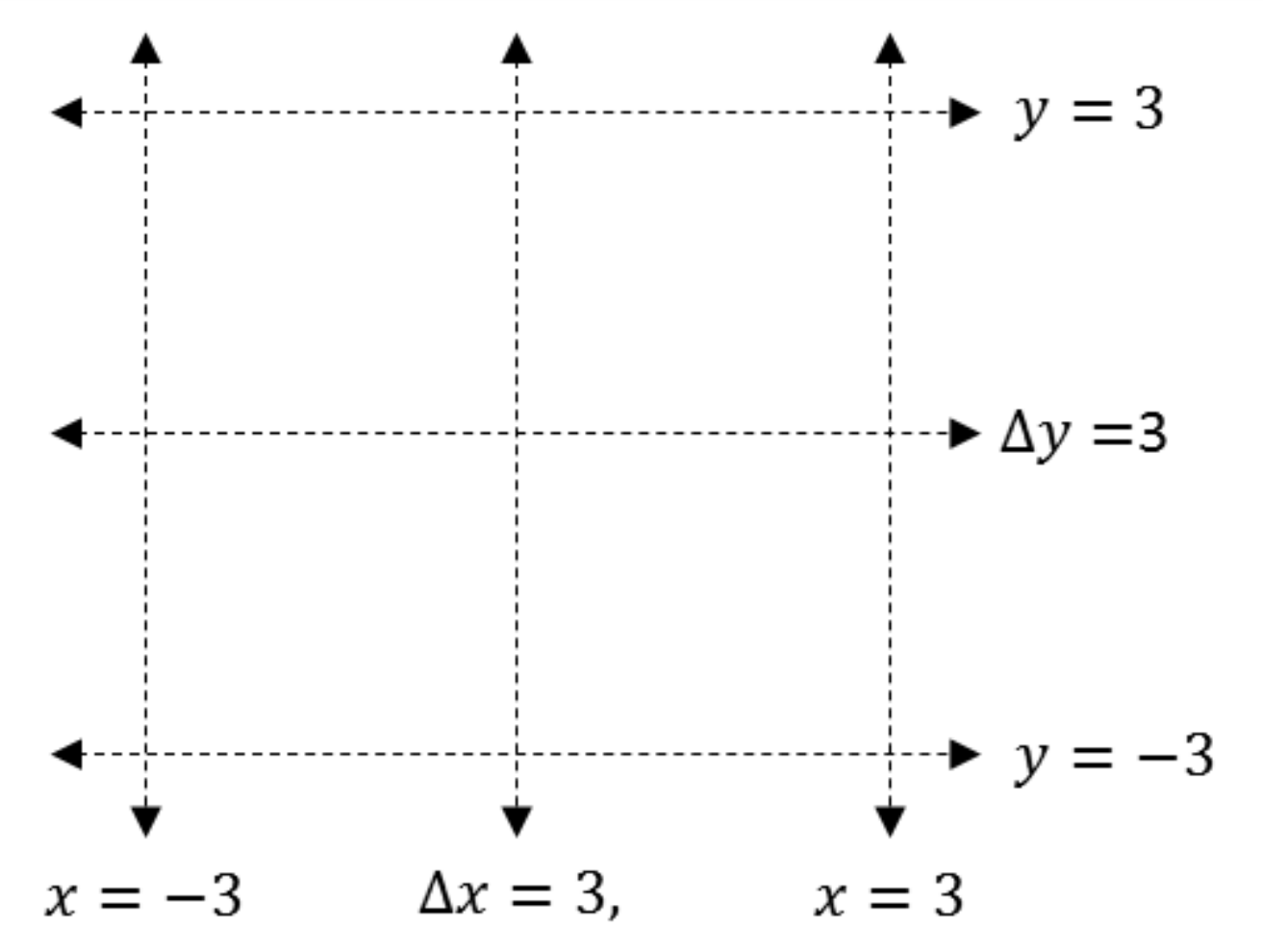}
    }%
    \subfigure[8x8 Grid]{
        \centering
        \includegraphics[width=0.23\textwidth]{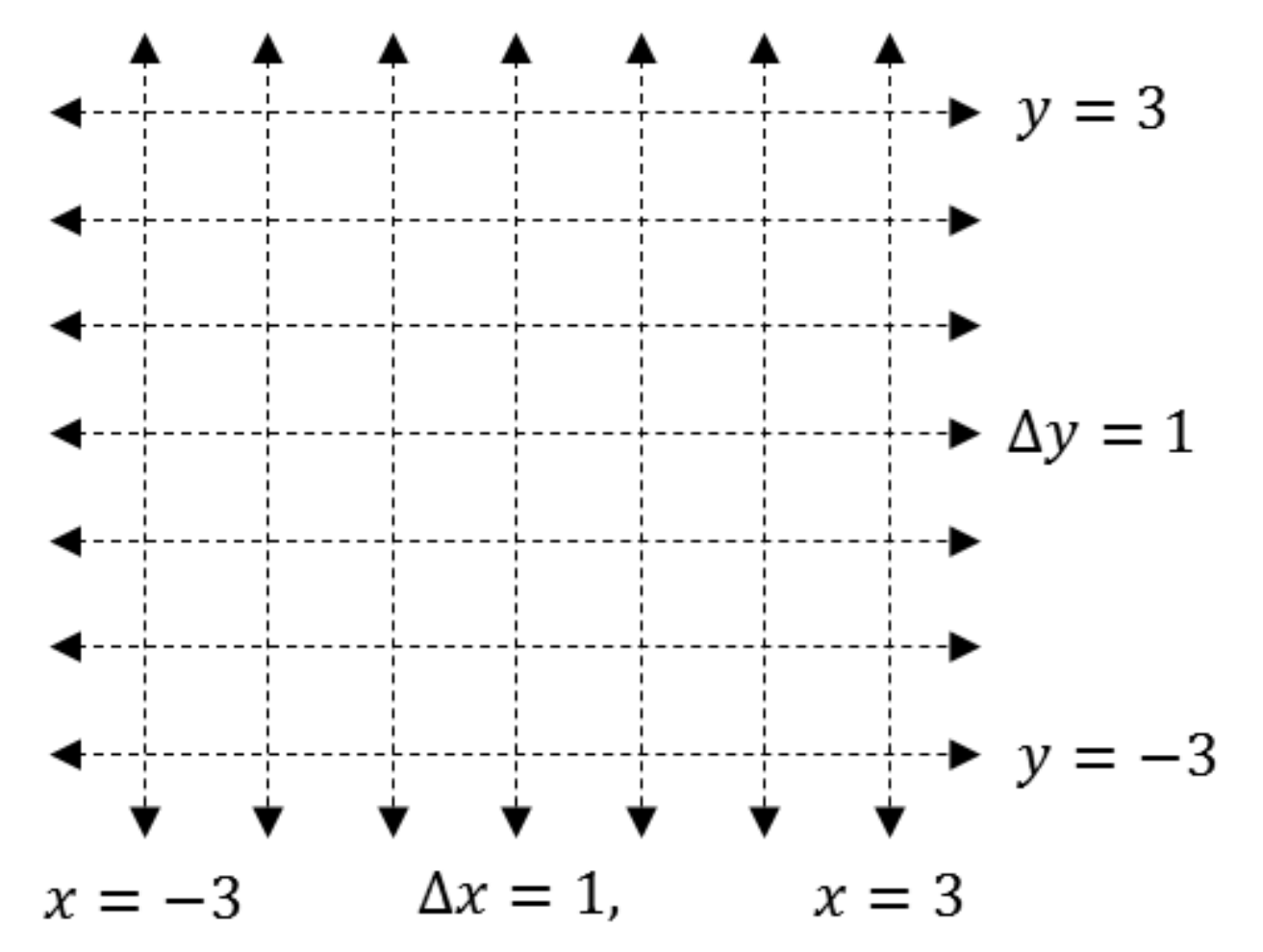}
    }%
    \subfigure[16x16 Grid]{
        \centering
        \includegraphics[width=0.23\textwidth]{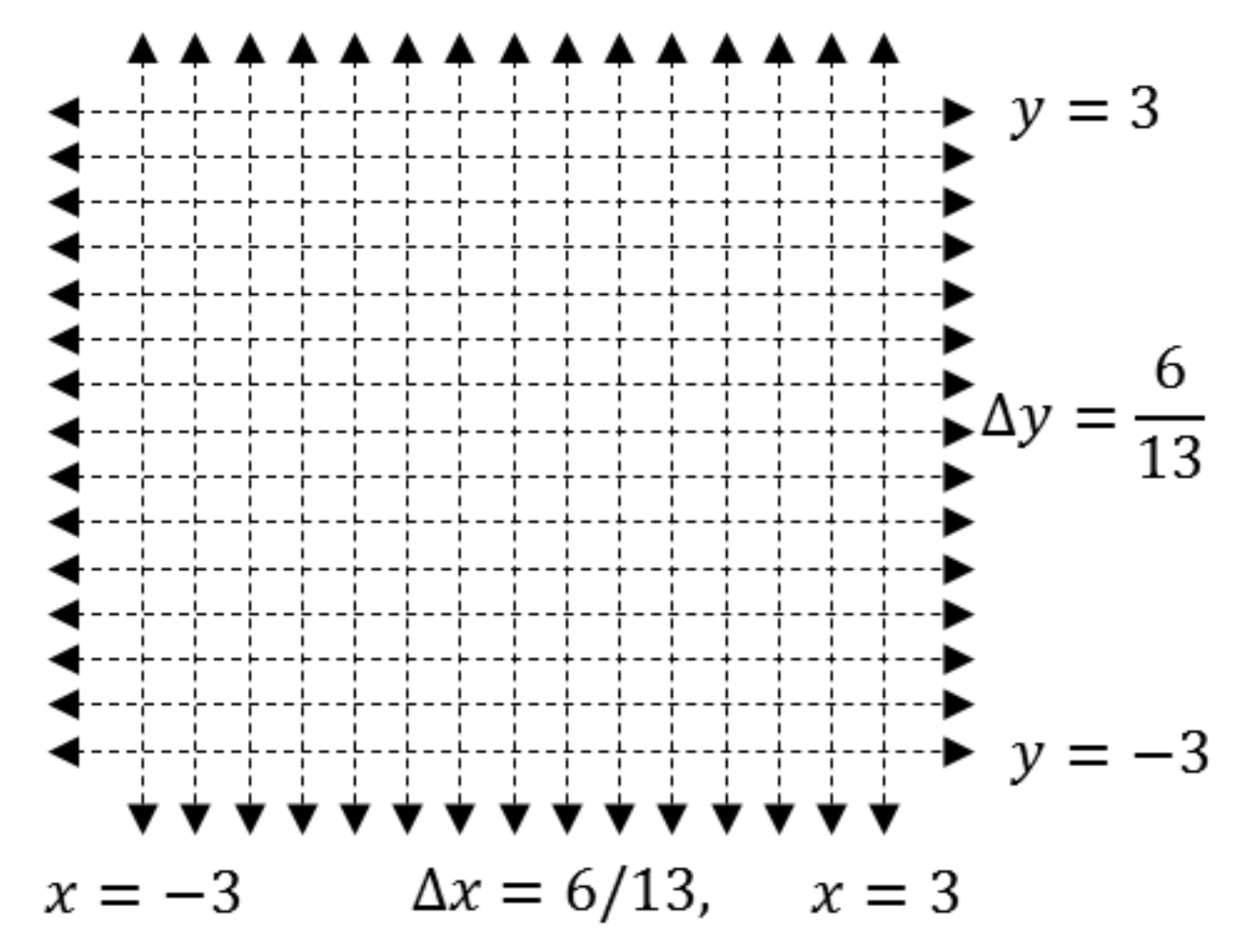}
    }

    \caption{Various partition structures tested} \vspace{-12pt}
    \label{fig:grid_partitions}
\end{figure*}

Next, the agent computes the initial hyperparameters due to the prior evidence as follows
\begin{align} \label{eq:niw_params}
    \kappa(\mathbf{r}_\theta^i)  & = \kappa_0 + |\mathbf{r}_\theta^i|; \ \ \
    \nu(\mathbf{r}_\theta^i) = \nu_0 + |\mathbf{r}_\theta^i|; \nonumber \\
    \boldsymbol{\varpi}(\mathbf{r}_\theta^i) & = \frac{\kappa_0 \boldsymbol{\varpi}_0 + |\mathbf{r}_\theta^i|\bar{\mathbf{r}}_\theta^i}{\kappa(\mathbf{r}_\theta^i)} \nonumber \\
    \mathbf{S}(\mathbf{r}_\theta^i) & = \mathbf{S}_0 + \bar{\mathbf{S}}(\mathbf{r}_\theta^i) + \kappa_0\mathbf{\varpi}_0\mathbf{\varpi}_0' - \kappa(\mathbf{r}_\theta^i) \boldsymbol{\varpi}(\mathbf{r}_\theta^i) \boldsymbol{\varpi}(\mathbf{r}_\theta^i)'
\end{align}
where $\bar{\mathbf{S}}(\mathbf{r}_\theta^i)=\sum_{k=1}^K r_{k\theta}^i (r_{k\theta}^i)'$ and $\bar{r}_\theta^i$ is the sample mean of the prior evidence.

Then, when the agent collects a new observation $\boldsymbol{\omega}_t^i$ at time $t\ge 1$, the hyperparameters are sequentially updated using the following recursive functions
\begin{align}
\kappa(\{\boldsymbol{\omega}_t^i, \mathbf{x}\}) & = \kappa(\mathbf{x})+1; \ \ \
    \nu(\{\boldsymbol{\omega}_t^i,\mathbf{x}\}) = \nu(\mathbf{x}) + 1; \nonumber \\
    \boldsymbol{\varpi}(\{\boldsymbol{\omega}_t^i,\mathbf{x}\}) & = \frac{\kappa(\mathbf{x}) \boldsymbol{\varpi}(\mathbf{x}) + \boldsymbol{\omega}_t^i}{\kappa(\{\boldsymbol{\omega}_t^i, \mathbf{x}\})}; \nonumber \\
    \mathbf{S}(\{\boldsymbol{\omega}_t^i,\mathbf{x}\}) & = \mathbf{S}(\mathbf{x}) + \boldsymbol{\omega}_t^i(\boldsymbol{\omega}_t^i)' + \kappa(\mathbf{x})\boldsymbol{\varpi}(\mathbf{x})\boldsymbol{\varpi}(\mathbf{x})' \nonumber \\ & \ \ \ - \kappa(\{\boldsymbol{\omega}_t^i,\mathbf{x}\})\boldsymbol{\varpi}(\{\boldsymbol{\omega}_t^i,\mathbf{x}\})\boldsymbol{\varpi}(\{\boldsymbol{\omega}_t^i,\mathbf{x}\})',
\end{align}
such that the data sets $\mathbf{x}$ are $\mathbf{x}=\{\boldsymbol{\omega}_{1:t-1}^i,\mathbf{r}_\theta^i\}$ and $\mathbf{x}=\boldsymbol{\omega}_{1:t-1}^i$, respectively. 

Next, we exploit the fact that a multivariate Student $t$-distribution can be written as a Gaussian mixture~\cite{M2012} resulting in the ratio of normalization factors~\eqref{eq:norm_gauss} taking the following form
\begin{align}
    \frac{Z(\mathbf{x}^+,\mathbf{x})}{Z(\mathbf{x})}= t_{\hat{\nu}(\mathbf{x})}(\mathbf{x}^+, \boldsymbol{\varpi}(\mathbf{x}), \hat{\mathbf{S}}(\mathbf{x})),
\end{align}
where $\hat{\nu}(\mathbf{x})=\nu(\mathbf{x})-d+1$ and
\begin{align}
    \hat{\mathbf{S}}(\mathbf{x}) = \frac{\kappa(\mathbf{x})+1}{\kappa(\mathbf{x})(\nu(\mathbf{x})-d+1)}\mathbf{S}(\mathbf{x}). \nonumber
\end{align}

Finally, the uncertain likelihood update is computed as,
\begin{align} \label{eq:mvn_ell}
    \ell_\theta^i(\boldsymbol{\omega}_{t+1}^i) = \frac{t_{\hat{\nu}(\{\boldsymbol{\omega}^i_{1:t},\mathbf{r}_\theta^i\})}(\boldsymbol{\omega}_{t+1}^i, \boldsymbol{\varpi}(\{\boldsymbol{\omega}^i_{1:t},\mathbf{r}_\theta^i\}), \hat{\mathbf{S}}(\{\boldsymbol{\omega}^i_{1:t},\mathbf{r}_\theta^i\}))}{t_{\hat{\nu}(\boldsymbol{\omega}^i_{1:t})}(\boldsymbol{\omega}_{t+1}^i, \boldsymbol{\varpi}^i(\boldsymbol{\omega}^i_{1:t}), \hat{\mathbf{S}}^i(\boldsymbol{\omega}^i_{1:t}))}.
\end{align}

\section{Nonparametric Framework for General Uncertain Models} \label{sec:example_non}

Thus far, we have restricted the parametric family of distributions $P^i(\cdot|\phi_\theta^i)$ for each agent $i$ to meet the regularity conditions of Assumption~\ref{assum:reg}. However, the best parametric family of distributions may not be known a priori.  While any distribution can be modeled by multimodal distributions, such as a mixture of Gaussians, the complexity of the resulting conjugate prior grows exponentially with more observations. Furthermore, it can be the case that the natural conjugate prior does not lead to an analytically computable normalization factor $Z(\mathbf{x}^+|\psi^i(\mathbf{x}))$. Therefore, this section presents a non-parametric approach to simplify the problem by modeling these challenging distributions as histograms with multinomial likelihoods.

\subsection{Modeling Uncertainty in Nonparametric Data using Multinomial Uncertain Models}
In this setting, we assume that each agent collects a set of prior evidence for each hypothesis, where each observation lies in a $d$-dimensional Euclidean space. Our goal is to partition the Euclidean space into a set of finite rectangular cuboids that allow prior evidence and future private observations to be mapped to a histogram. This then allows the agent to model the data as a multinomial distribution, which abides by the regularity conditions. 

Consider that each agent $i$ defines a rectangle $R^i=[c_1^1,c_{m_1}^1]\times ... \times [c_1^d, c_{m_d}^d]$ consisting of all $\mathbf{x}\in\mathbb{R}^d$ such that $c_1^h\le x^h< c_{m_h}^h$ for all $h=1,...,d$, which represents the space that contains majority of the prior evidence of every hypothesis. Then, let agent $i$ define a set of hyperplanes $\mathbf{c}_h = \{c_1^h, c_2^h,...,c_{m_d}^h\}$ for all $h=1,...,d$ such that the rectangle $R_i$ is partitioned into a rectilinear grid, which is a tessellation of the space into rectangular cuboids. We assume that the rectangular cuboids in $R^i$ are congruent. Now, consider that each hyperplane $l=1,...,m_h$ in dimension $h=1,...,d$, i.e., $c_l^h$, extends beyond the rectangle $R^i$ to partition the entire Euclidean space into a set of $K=\prod_{h=1}^d (m_h+1)$ rectangular cuboids. This results in rectangular cuboids outside of the rectangle $R_i$ that are not congruent with the inner cuboids. An example of a 2D Euclidean space partitioned into a rectilinear grid is shown in Figure~\ref{fig:grid_partitions}, where the black dotted lines represent the hyperplanes along each dimension.\footnote{In general, the dimension of the Euclidean space $d$, number of rectangular cuboids $K$, and location of hyperplanes $\mathbf{c}_h$ in each dimension $h$ could vary between agents. However, for ease of presentation, we assume in this work that they are the same for all agents.} 

\begin{figure*}[t!]
    \subfigure[$\theta_1=\theta^*$]{
        \centering
        \includegraphics[width=0.32\textwidth]{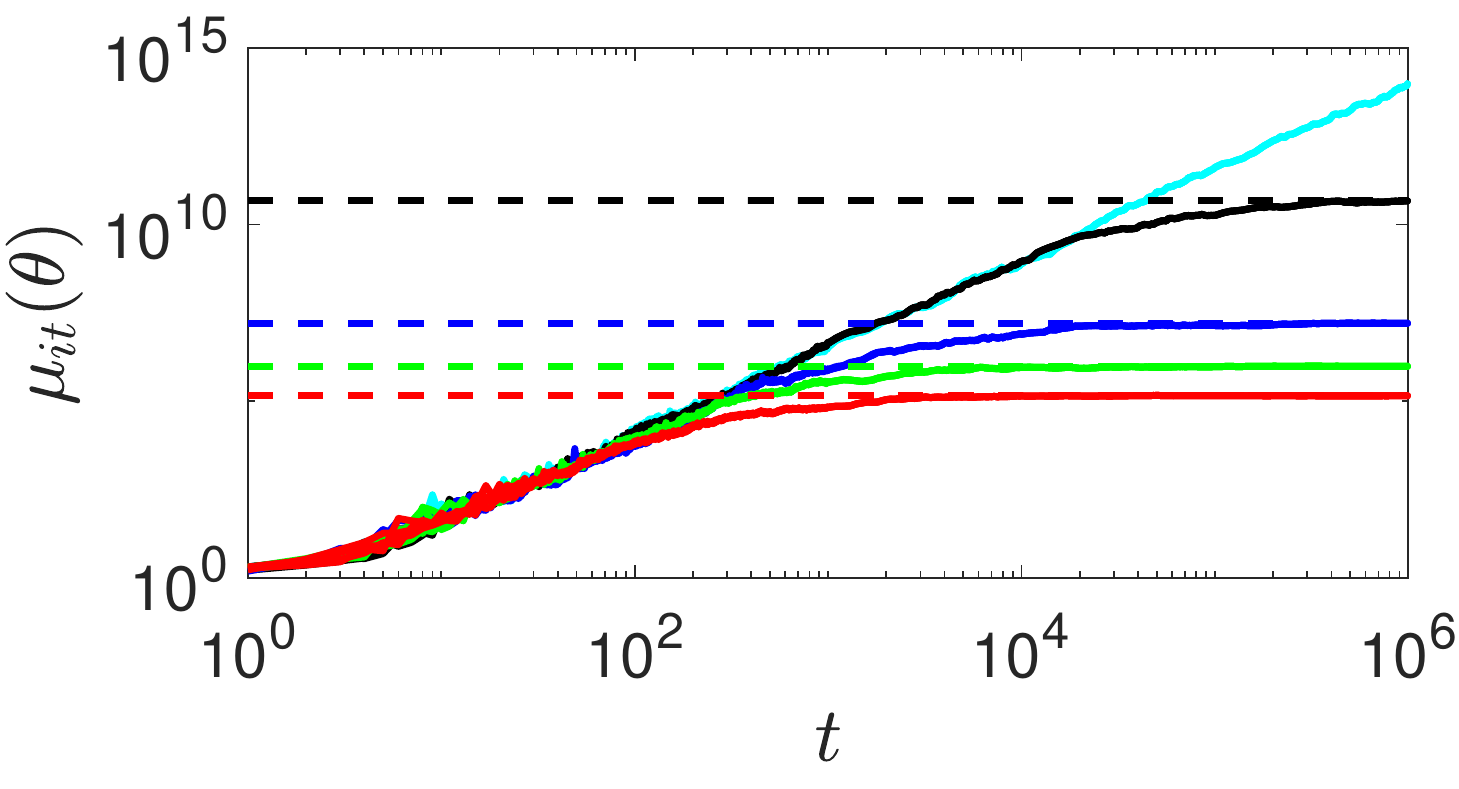}
    }%
    \subfigure[$\theta_2$]{
        \centering
        \includegraphics[width=0.32\textwidth]{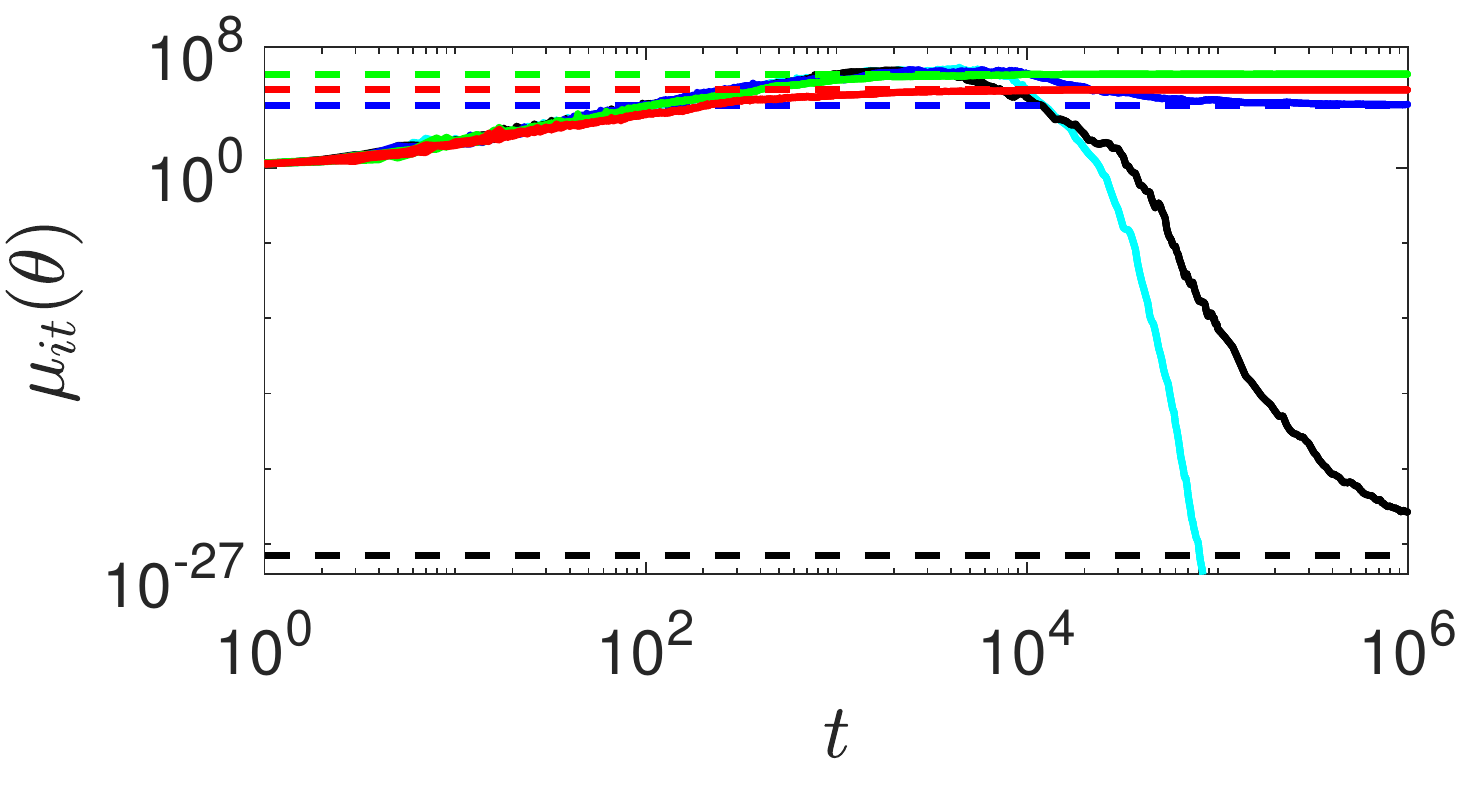}
    }%
    \subfigure[$\theta_3$]{
        \centering
        \includegraphics[width=0.32\textwidth]{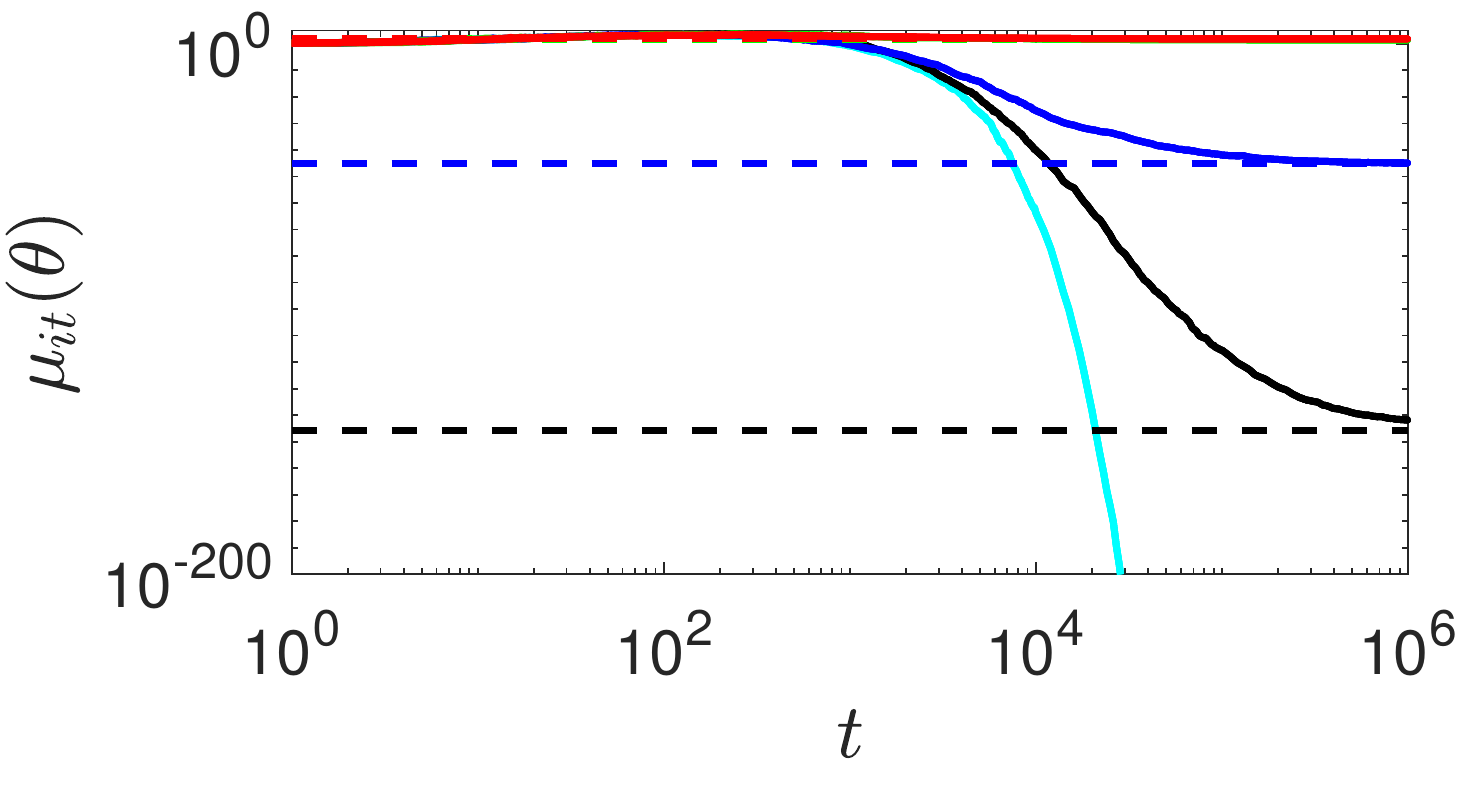}
    }

    \centering
    \subfigure{
        
        \includegraphics[width=0.75\textwidth]{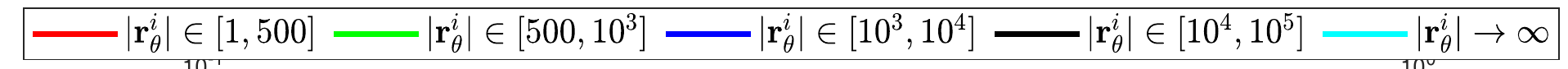}
    } \vspace{-6pt}
    \caption{Belief evolution of well-specified Gaussian uncertain models. The solid lines represent the beliefs $\mu_{t}^i(\theta)$, while the dashed lines represent the asymptotic point of convergence $(\prod_i^4 \widetilde{\Lambda}_{\theta}^i)^{(1/4)}$. } \vspace{-12pt}
    
    \label{fig:gauss_models_ws}
\end{figure*}

Once the Euclidean space is partitioned, the prior evidence for each hypothesis is mapped to a histogram where each bar represents the number of times the prior evidence falls within the specific rectangular cuboid. 
Now, we can represent the prior evidence as a vector of counts $\mathbf{r}_\theta^i=(r_{1\theta}^i, ..., r_{K\theta}^i)$, where $k=1,...,K$ is the index of the rectangular cuboid. This transformation allows the observations to be analogous to data being drawn from a multinomial distribution parameterized by a vector of probabilities $\boldsymbol{\pi}^i_\theta=(\pi_{1\theta}^i,...,\pi_{K\theta}^i)$. The posterior distribution of the parameters, i.e., $f(\boldsymbol{\pi}_\theta^i|\mathbf{r}_\theta^i)$ defined in~\eqref{eq:dir_prior}, are modeled according to a Dirichlet distribution, which abides by the regularity conditions, allowing our main results to hold \cite{HUKJ2020_TSP, UHLJ2019}. Therefore, the agents can implement the multinomial uncertain models, as presented in Section~\ref{sec:mult_um}. 

The main challenges associated with approximating a non-parametric distribution are as follows:
\begin{enumerate}
    \item How to design grids to distinguish the hypotheses, and what are the effects on the general beliefs as $K$ increases?
    \item Should the hyperplanes be uniformly spaced inside $\mathbb{R}^d$, or should they be selected based on the sample distribution of the prior evidence, e.g., a Voronoi cell? 
\end{enumerate}
In this preliminary study, we initiate the investigation of the first challenge above and leave a complete analysis of both challenges for future work.

\section{Numerical Analysis} \label{sec:results}
This section presents two numerical studies to validate the network's convergence properties with uncertain models empirically. First, we consider that the underlying ground truth distribution is Gaussian to verify that the results hold for well-specified likelihood models. Then, we consider that the ground truth distribution could be multimodal, non-parametric, or have a normalization factor that is hard to compute, to verify that the results hold for misspecified likelihood models. 

\begin{figure}[h!]
    \centering
    \includegraphics[width=0.5\columnwidth]{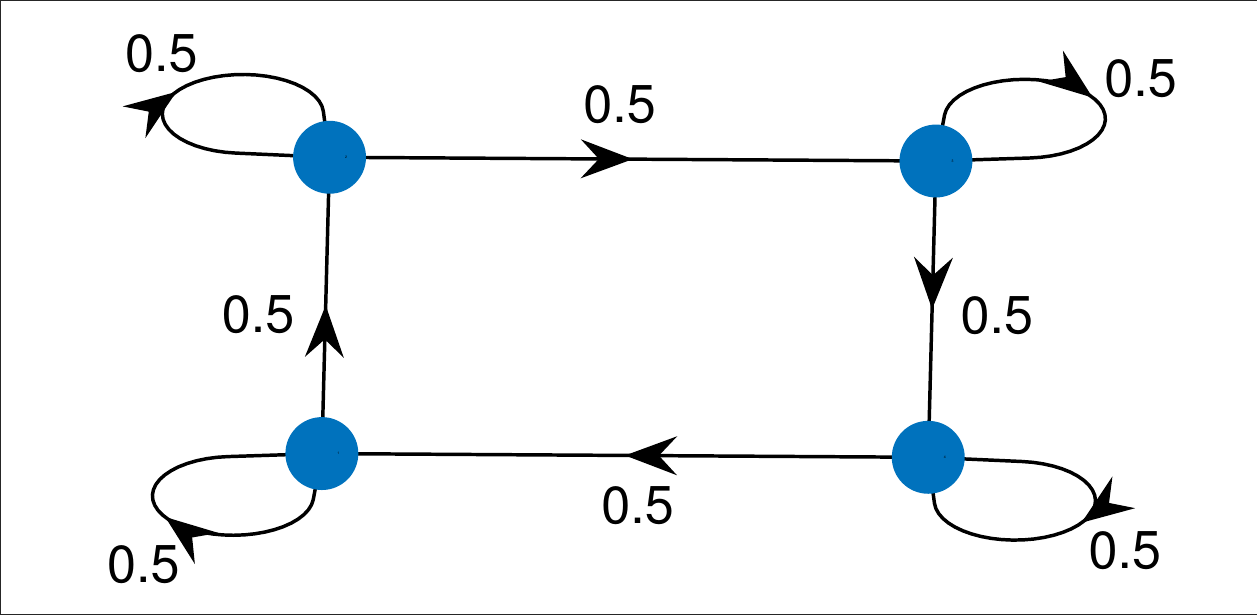}
    \caption{Social network structure considered in the numerical examples.} \label{fig:graph} \vspace{-12pt}
\end{figure}
\subsection{Well-specified likelihood models}
We begin by considering a network of 4 agents connected according to a directed cycle graph with self-loops, as seen in Figure~\ref{fig:graph}. In this example, we assume that the possible states of the world lead to three possible observation distributions, $Q_1$, $Q_2$, and $Q_3$. Each distribution is modeled as a multivariate Gaussian distribution, i.e., $Q_m \sim \mathcal{N}(\mathbf{m}_m,\boldsymbol{\Sigma}_m)$, where the mean and variance for each model is provided in Table~\ref{tab:gauss}. Furthermore, We assume that the observations for each agent $i$ are drawn from the first distribution, i.e., $\boldsymbol{\omega}_{1:t}^i\sim Q_1$. 

\begin{table}[]
    \centering 
    \caption{Definition of Multivariate Gaussian Parameters}
    \begin{tabular}{c|cc}
       $Q_1$  & $\mathbf{m}_1= \begin{bmatrix} 0 & 0 \end{bmatrix}$' &  $\boldsymbol{\Sigma}_1 = \begin{bmatrix} 1 & 0 \\ 0 & 1 \end{bmatrix}$ \\
       $Q_2$  & $\mathbf{m}_2= \begin{bmatrix} 0 & 0 \end{bmatrix}$' &  $\boldsymbol{\Sigma}_2 = \begin{bmatrix} 1.1 & 0 \\ 0 & 1.1 \end{bmatrix}$ \\ 
       $Q_3$  & $\mathbf{m}_3= \begin{bmatrix} 0 & 0 \end{bmatrix}$' & $\boldsymbol{\Sigma}_3 = \begin{bmatrix} 1.5 & 0 \\ 0 & 1.5 \end{bmatrix}$' \\
    \end{tabular} \vspace{-12pt}
    
    \label{tab:gauss}
\end{table}

Each agent is assumed to possess three hypotheses about the state of the world, i.e., $\boldsymbol{\Theta}=\{\theta_1, \theta_2, \theta_3\}$, such that $\theta_1=\theta^*$. The underlying distribution for each hypothesis seen by each agent $i$ is one of the three distributions given in Table~\ref{tab:hypotheses}. It is easy to see that social learning via the rule~\eqref{eq:LL_rule} is needed for the agents to collectively identify the true state of the world. 

\begin{table}[]
\caption{Definition of hypotheses for each agent}
\centering \label{tab:hypotheses}
\begin{tabular}{c|c|c|c|c}
& Agent 1 & Agent 2 & Agent 3 & Agent 4 \\ \hline
$\theta_1$ & $Q_1$  & $Q_1$ & $Q_1$ & $Q_1$ \\
$\theta_2$ & $Q_1$ & $Q_2$ & $Q_1$ & $Q_1$ \\
$\theta_3$ & $Q_1$  & $Q_1$ & $Q_3$ & $Q_1$ \\
\end{tabular} \vspace{-12pt}

\end{table}

We conducted five experiments, where each simulation consists of each agent collecting a random amount of prior evidence for each hypothesis drawn from a uniform random variable within the following ranges, $[1,500]$, $[500,10^3]$, $[10^3,10^4]$, $[10^4,10^5]$, and $\infty$, respectively. The first $4$ ranges are used to validate the results of Theorem~\ref{th:LL} numerically and present the effects of how the amount of prior evidence changes the overall point of convergence, while the final experiment is used to validate the results of Theorem~\ref{cor:dogmatic}.

We conducted 10 Monte Carlo simulation for each experiment, where the observations $\boldsymbol{\omega}_{1:t}^i$ were regenerated during each run. Here, we assume that the agents' likelihood models are well-specified, i.e., the likelihoods are Gaussian. Therefore, the agents implement the uncertain likelihood update~\eqref{eq:mvn_ell}, presented in Section~\ref{sec:mvn_models}.
\begin{figure*}[t!]
    \subfigure[$\theta_1=\theta^*$]{
        \centering
        \includegraphics[width=0.32\textwidth]{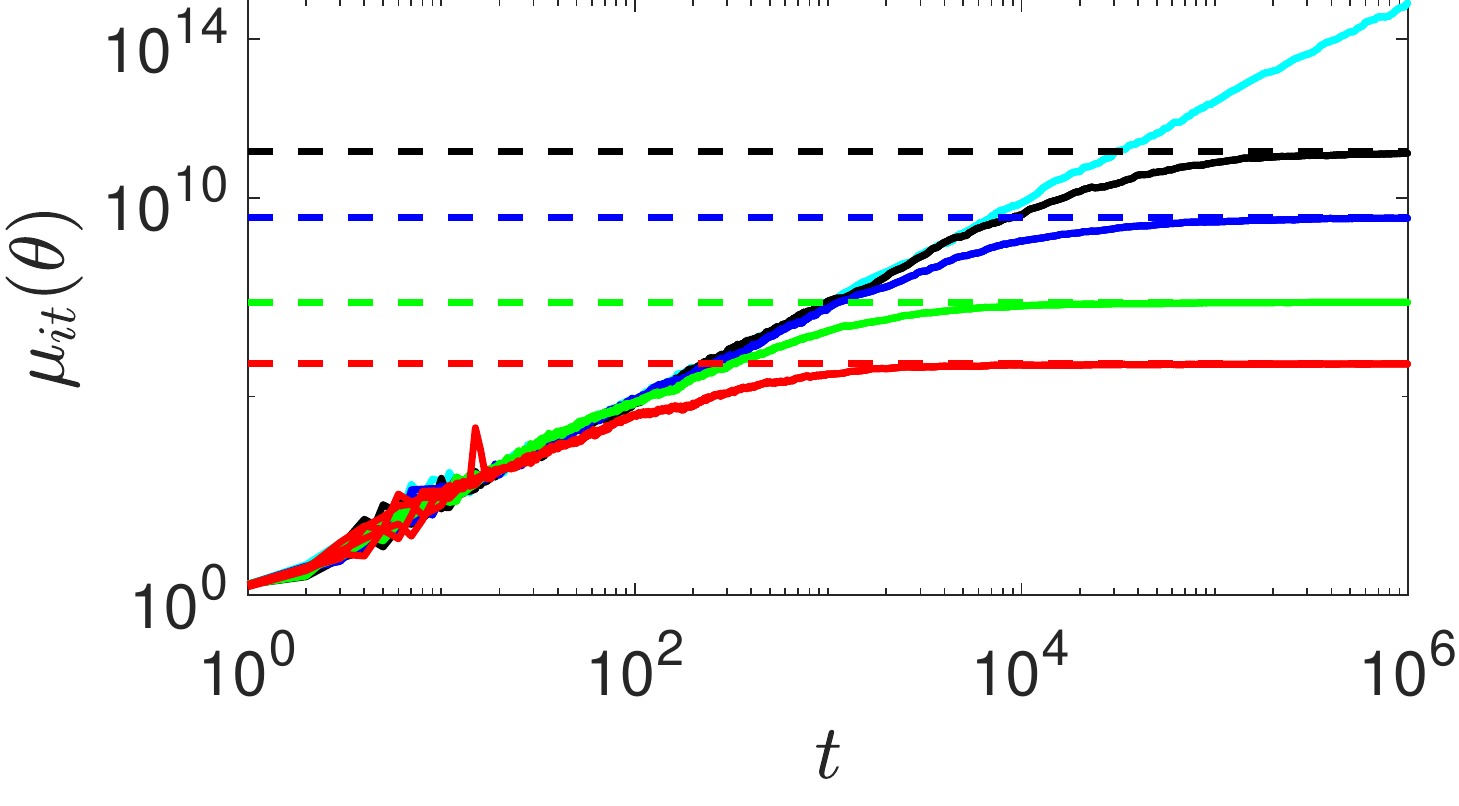}
    }%
    \subfigure[$\theta_2$]{
        \centering
        \includegraphics[width=0.32\textwidth]{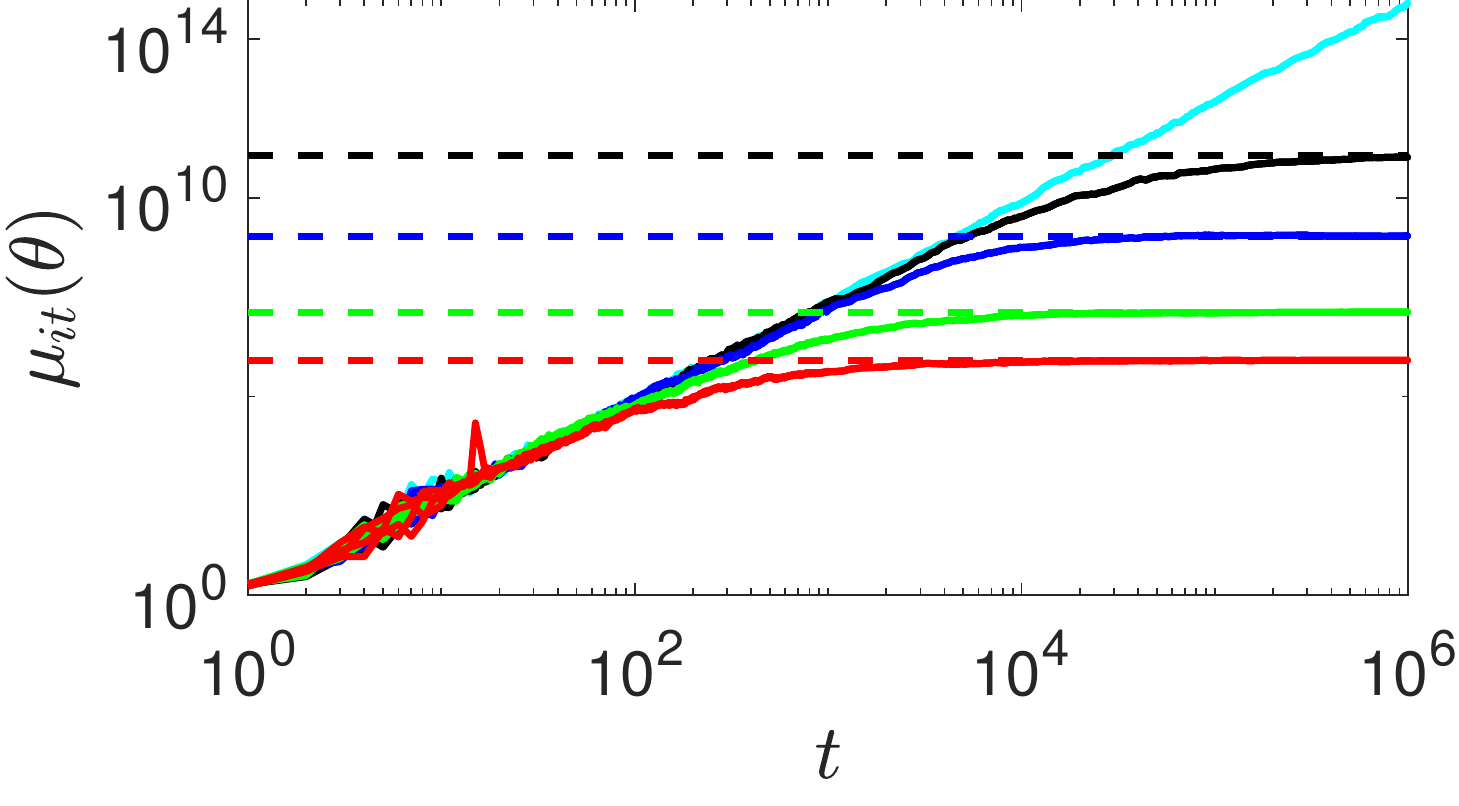}
    }%
    \subfigure[$\theta_3$]{
        \centering
        \includegraphics[width=0.32\textwidth]{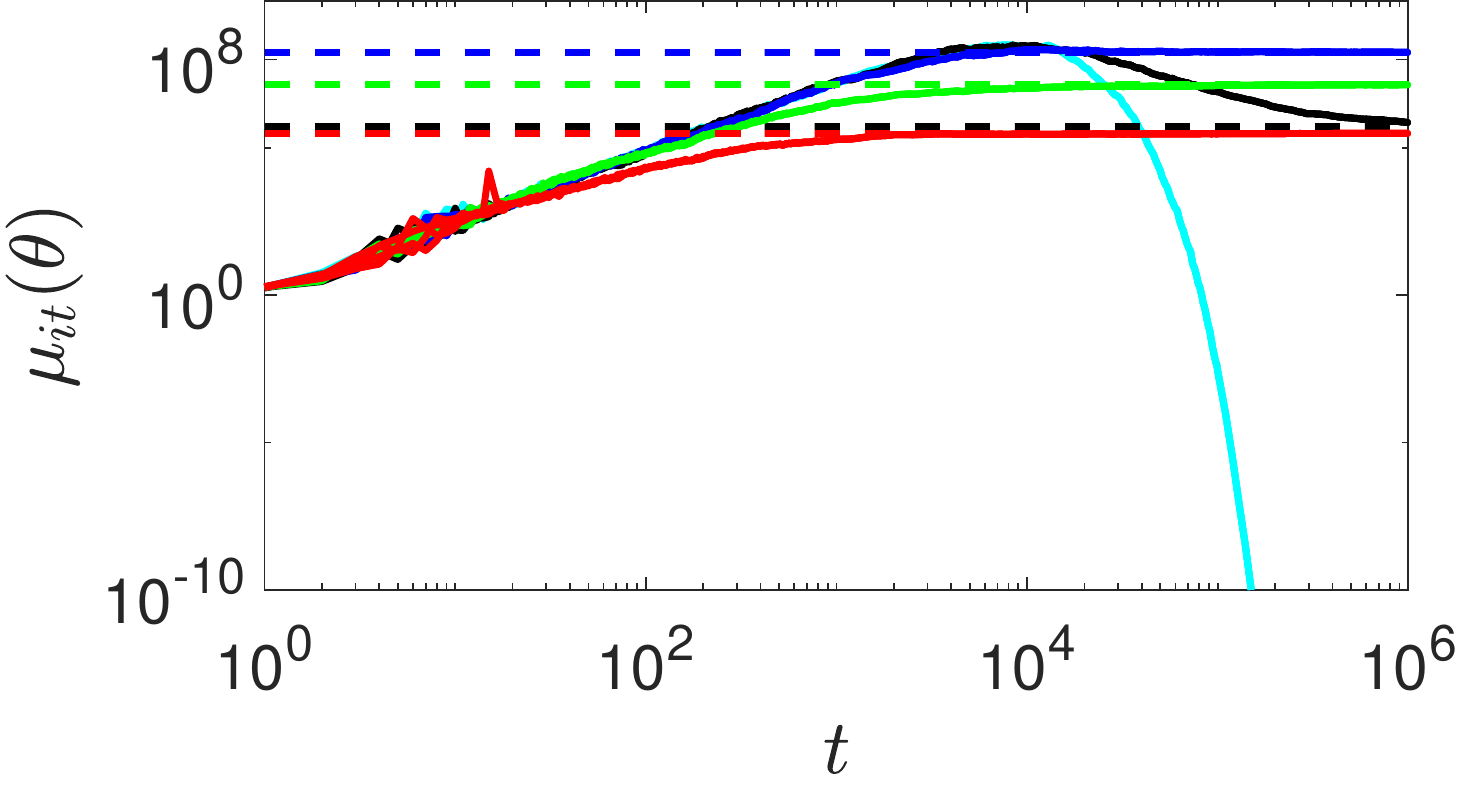}
    }

    \centering
    \subfigure{
        
        \includegraphics[width=0.75\textwidth]{leg_gauss_3-eps-converted-to.pdf}
    } \vspace{-6pt}
    \caption{Belief evolution of Gaussian uncertain models. The solid lines represent the beliefs $\mu_{t}^i(\theta)$, while the dashed lines represent the asymptotic point of convergence $(\prod_i^4 \widetilde{\Lambda}_{\theta}^i)^{(1/4)}$.} \vspace{-12pt}
    
    \label{fig:gauss_models}
\end{figure*}
\begin{table*}[]
\centering
\caption{Definition of mixture models.}\label{tab:mixtures}
\begin{tabular}{lc|lc|lc}
\multicolumn{2}{c|}{Mixture $Q_1$} & \multicolumn{2}{c|}{Mixture $Q_2$} & \multicolumn{2}{c}{Mixture $Q_3$}  \\
$\mathbf{m}_{11}=\begin{bmatrix} -1.5 & -1.5 \end{bmatrix}$' & \multirow{4}{*}{$\boldsymbol{\Sigma}_1 = \begin{bmatrix} 0.12 & 0 \\ 0 & 0.12 \end{bmatrix}$} & $\mathbf{m}_{21} = \begin{bmatrix} -1 & -1 \end{bmatrix}$' & \multirow{4}{*}{$\boldsymbol{\Sigma}_2 = \begin{bmatrix} 1.37 & 0 \\ 0 & 1.37 \end{bmatrix}$} & $\mathbf{m}_{31} = \mathbf{m}_{11}$' & \multirow{4}{*}{ $\boldsymbol{\Sigma}_3 = \begin{bmatrix} 0.25 & 0 \\ 0 & 0.25 \end{bmatrix}$} \\ 
$\mathbf{m}_{12} = \begin{bmatrix} -1.5 & 1.5 \end{bmatrix}$' &  & $\mathbf{m}_{22} = \begin{bmatrix} -1 & 1 \end{bmatrix}$' &  & $\mathbf{m}_{32} = \mathbf{m}_{12}$ &  \\
$\mathbf{m}_{13}=\begin{bmatrix} 1.5 & -1.5 \end{bmatrix}$' & & $\mathbf{m}_{23} = \begin{bmatrix} 1 & -1 \end{bmatrix}$' & & $\mathbf{m}_{33} = \mathbf{m}_{13}$ & \\
$\mathbf{m}_{14}=\begin{bmatrix} 1.5 & 1.5 \end{bmatrix}$' & & $\mathbf{m}_{24} = \begin{bmatrix} 1 & 1 \end{bmatrix}$' & & $\mathbf{m}_{34} = \mathbf{m}_{14}$ & \\
\multicolumn{2}{c|}{$p_1 = \begin{bmatrix} 0.25 & 0.25 & 0.25 & 0.25 \end{bmatrix}$} & \multicolumn{2}{c|}{$p_2 = p_1$} & \multicolumn{2}{c}{$p_3 = p_1$}  \\ \hline
\multicolumn{2}{c|}{Gaussian Fit of Mixture $Q_1$} & \multicolumn{2}{c|}{Gaussian Fit of Mixture $Q_2$} & \multicolumn{2}{c}{Gaussian Fit of Mixture $Q_3$}  \\
\multicolumn{1}{c}{$\tilde{\mathbf{m}}_1 = \begin{bmatrix} 0, 0 \end{bmatrix}$'} &  $\tilde{\boldsymbol{\Sigma}}_1 = \begin{bmatrix} 2.37 & 0 \\ 0 & 2.37 \end{bmatrix}$ & \multicolumn{1}{c}{$\tilde{\mathbf{m}}_2 = \begin{bmatrix} 0, 0 \end{bmatrix}$'} &  $\tilde{\boldsymbol{\Sigma}}_2 = \begin{bmatrix} 2.37 & 0 \\ 0 & 2.37 \end{bmatrix}$ & \multicolumn{1}{c}{$\tilde{\mathbf{m}}_3 = \begin{bmatrix} 0, 0 \end{bmatrix}$'} &  $\tilde{\boldsymbol{\Sigma}}_3 = \begin{bmatrix} 2.5 & 0 \\ 0 & 2.5 \end{bmatrix}$ 
\end{tabular} \vspace{-16pt}
\end{table*}

Figure~\ref{fig:gauss_models_ws} shows the ensemble average beliefs of each agent for each hypothesis. The first result seen is that the agents' beliefs are converging to the asymptotic point of convergence $(\prod_{i=1}^4 \tilde{\Lambda}_{i\theta})^{(1/4)}$ for each experiment and hypothesis, numerically indicating the correctness of Theorem~\ref{th:LL}. Additionally, as the amount of prior evidence grows, the beliefs for $\theta_1$ are diverging toward infinity, while the beliefs on $\theta_2$ and $\theta_3$ are decaying to $0$ at a rate of the average KL divergence, indicating the correctness of Theorem~\ref{cor:dogmatic}. 

Figure~\ref{fig:gauss_models_ws} also shows that when the agents have a low amount of evidence, the beliefs converge to value $>1$ and are considered consistent with the ground truth hypothesis and cannot be ruled out. When the agents acquire enough prior evidence for $\theta\ne\theta^*$, the beliefs eventually converge to a value $<1$ allowing them to identify hypotheses that are inconsistent with the ground truth. 

An attractive property of the beliefs generated by the uncertain likelihood ratio can be seen in Figure~\ref{fig:gauss_models_ws}(b). For prior evidence $|\mathbf{r}_\theta^i|\le 10^3$, the beliefs monotonically increase to the asymptotic point of convergence. Then, when $|\mathbf{r}_\theta^i|>10^3$, the agents have acquired enough evidence such that the beliefs ``peak-out" and reach a maximum value before decreasing. Even though they reach this peak value, it is still possible that the beliefs will converge to value $>1$, as indicated by the experiment with $|\mathbf{r}_\theta^i|\in [10^3,10^4]$. This property is also in Figure~\ref{fig:gauss_models_ws}(c), except here, as the KL divergence increases, the amount of prior evidence needed to evolve to the peak value decreases. This property indicates that the hypothesis with the largest peak value is the closest to $\theta^*$. This will be explored further with active learning approaches as future work. 

\begin{figure*}[t!]
    \subfigure[$\theta_1=\theta^*, 2\times 2$ Grid]{
        \centering
        \includegraphics[width=0.23\textwidth]{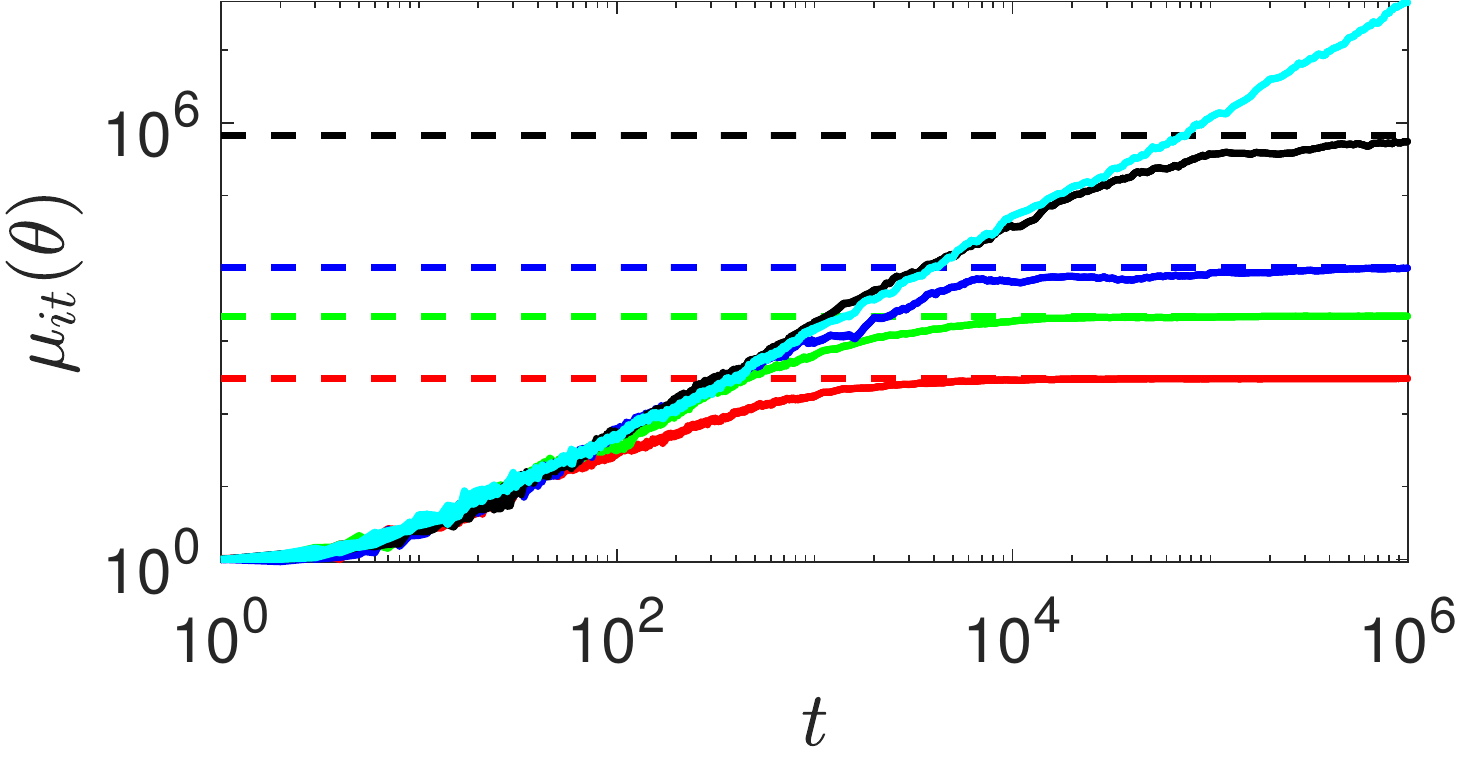}
    }%
    \subfigure[$\theta_1=\theta^*, 4\times 4$ Grid]{
        \centering
        \includegraphics[width=0.23\textwidth]{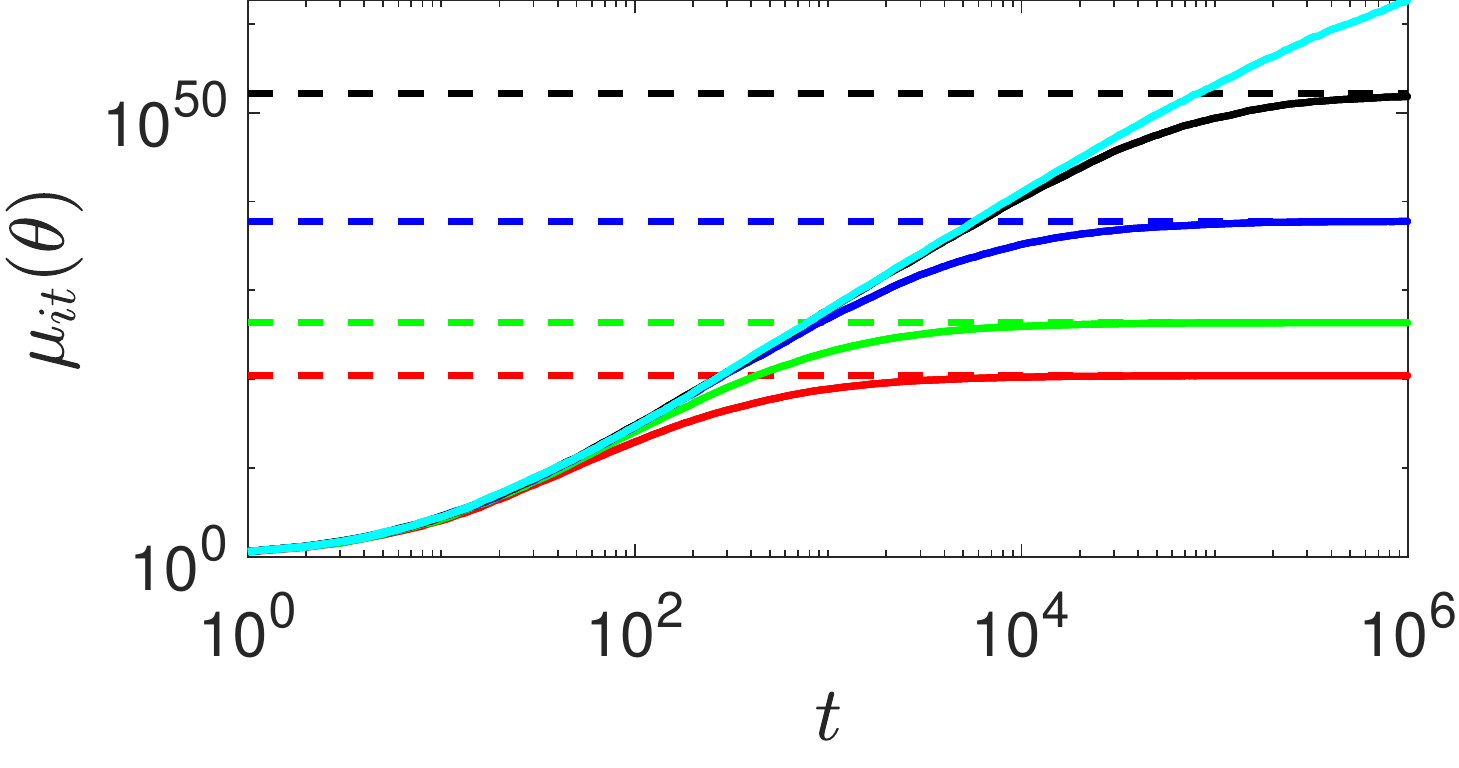}
    }%
    \subfigure[$\theta_1=\theta^*, 8\times 8$ Grid]{
        \centering
        \includegraphics[width=0.24\textwidth]{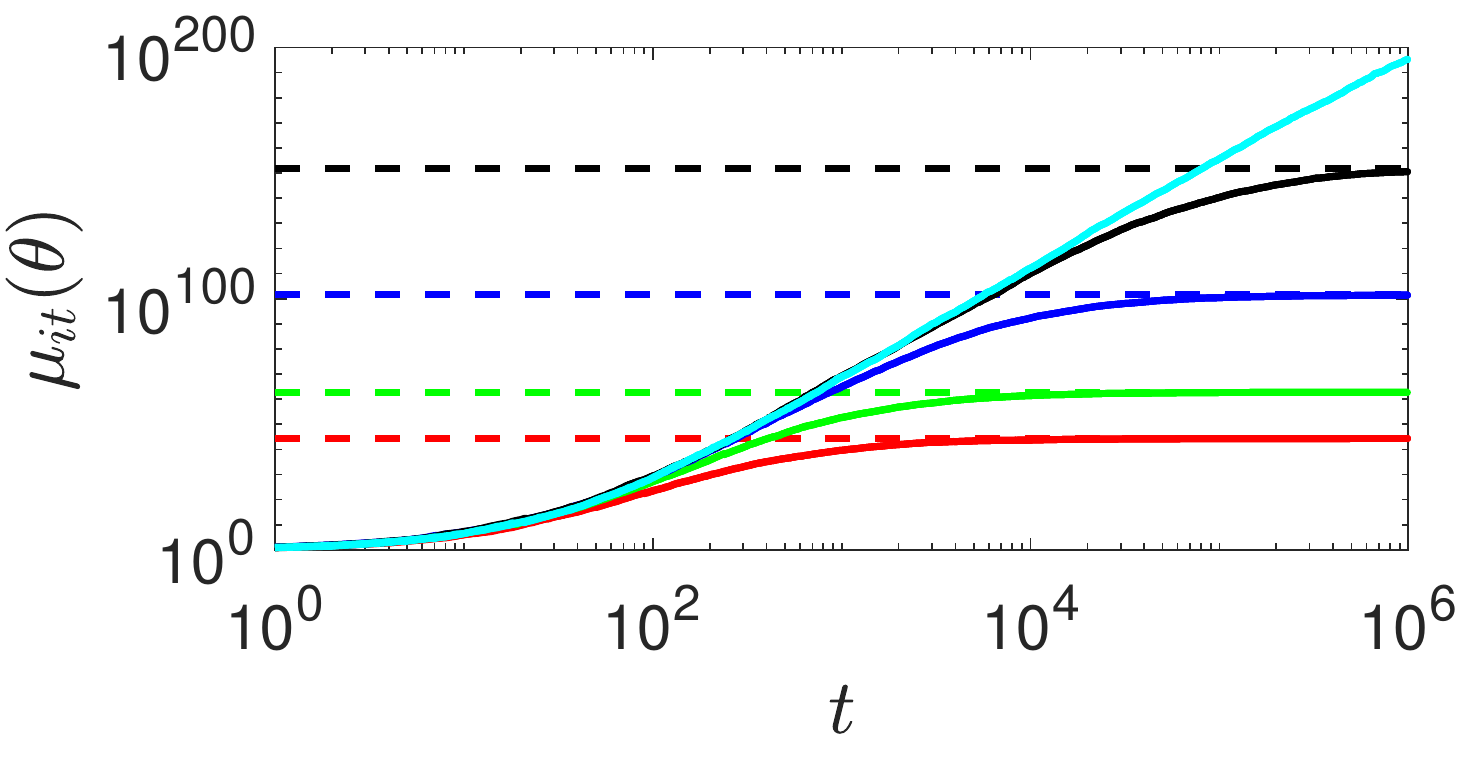}
    }%
    \subfigure[$\theta_1=\theta^*, 16\times 16$ Grid]{
        \centering
        \includegraphics[width=0.24\textwidth]{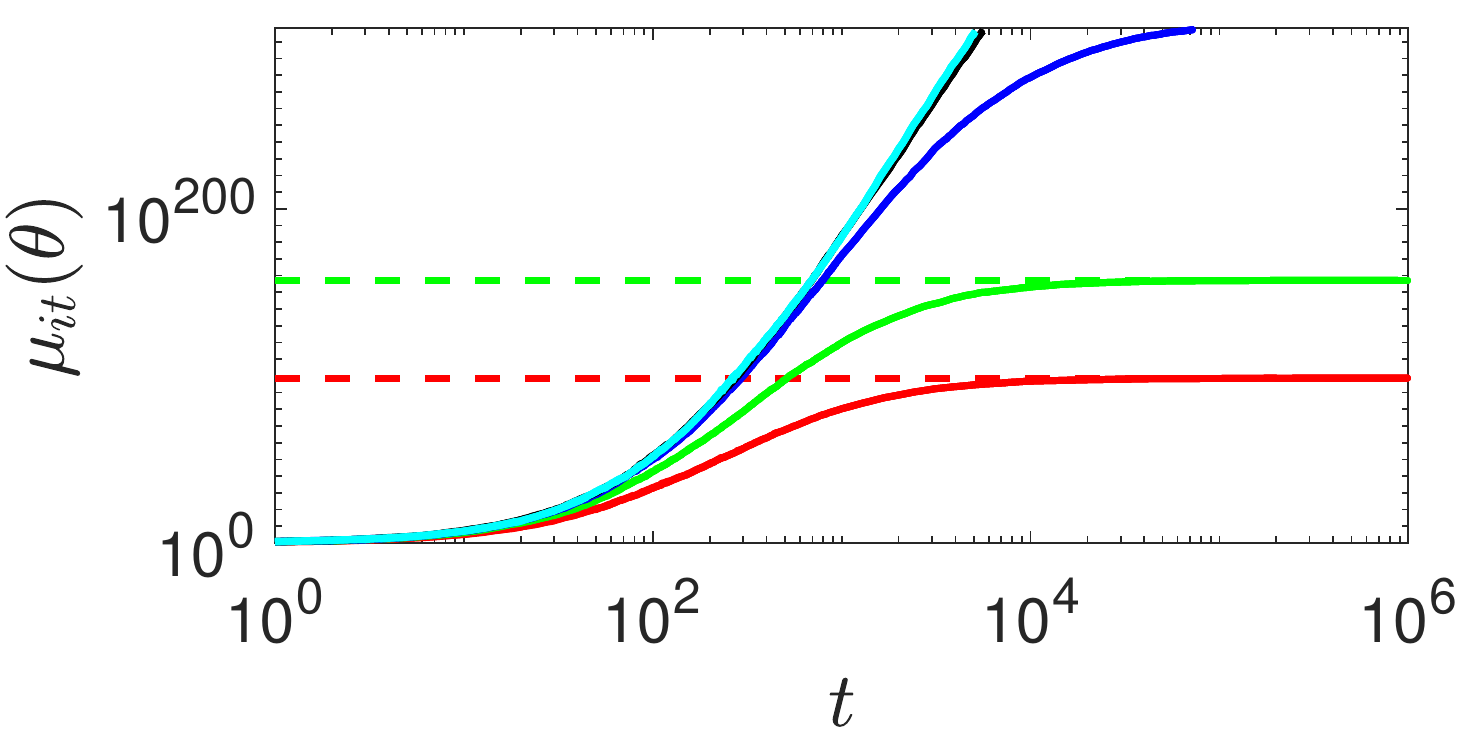}
    }
    \subfigure[$\theta_2, 2\times 2$ Grid]{
        \centering
        \includegraphics[width=0.23\textwidth]{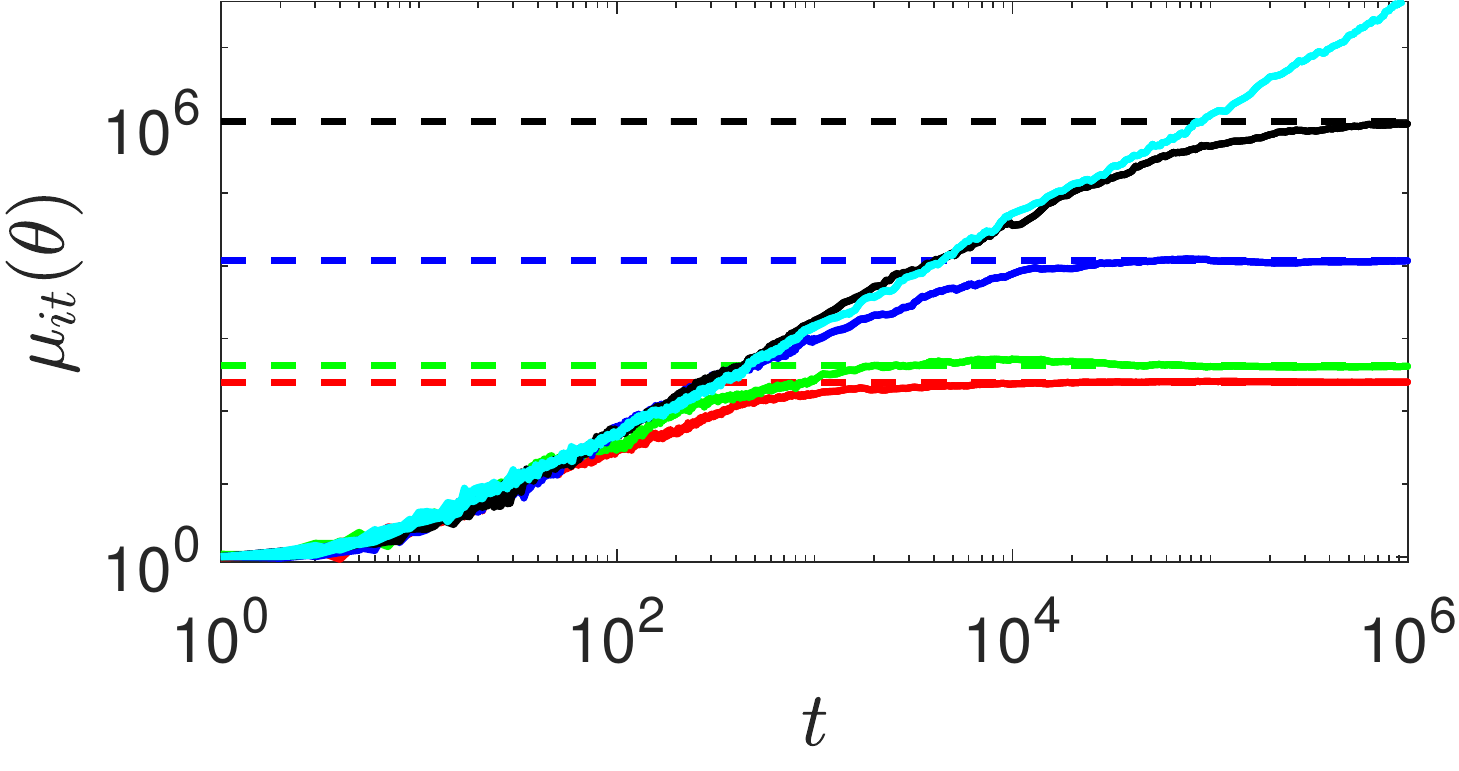}
    }%
    \subfigure[$\theta_2, 4\times 4$ Grid]{
        \centering
        \includegraphics[width=0.24\textwidth]{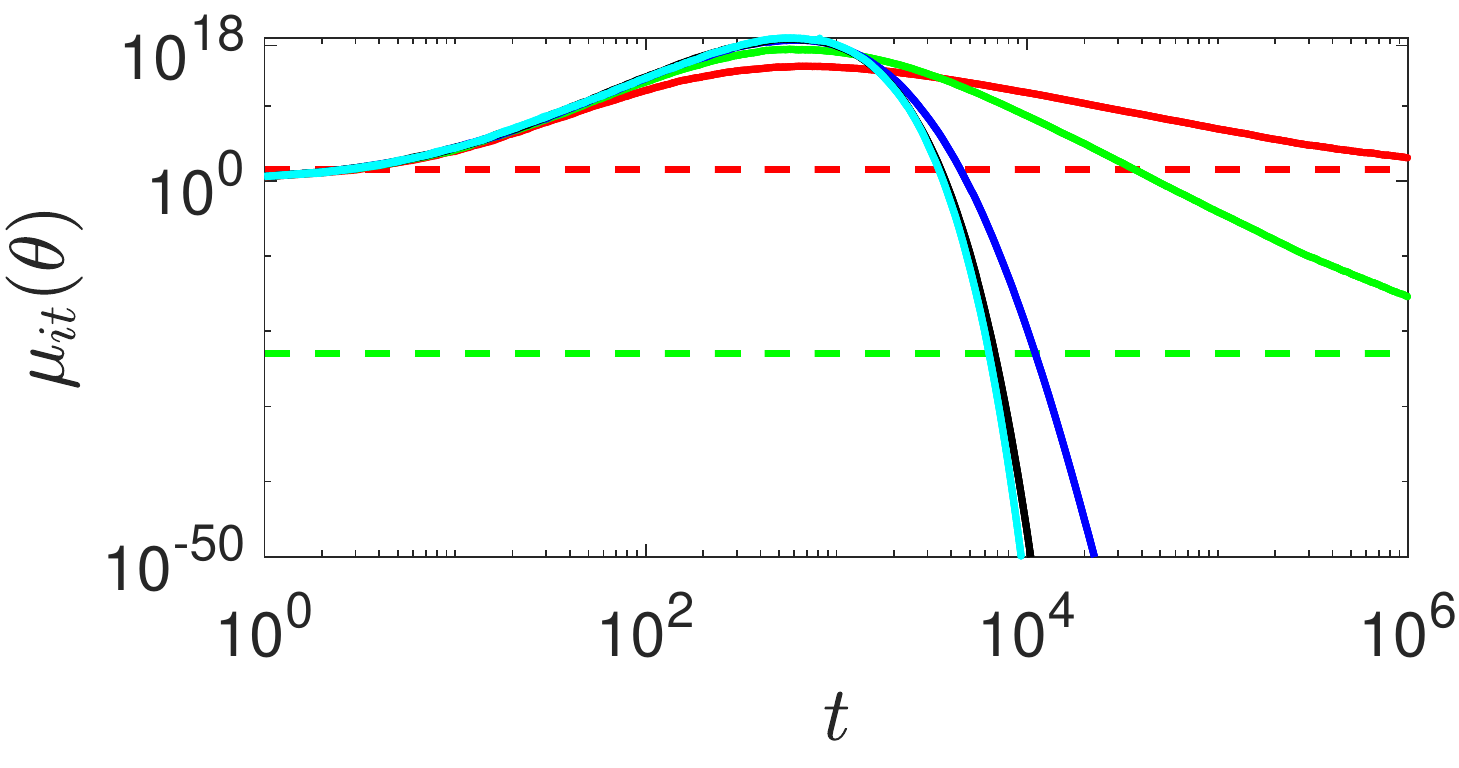}
    }%
    \subfigure[$\theta_2, 8\times 8$ Grid]{
        \centering
        \includegraphics[width=0.24\textwidth]{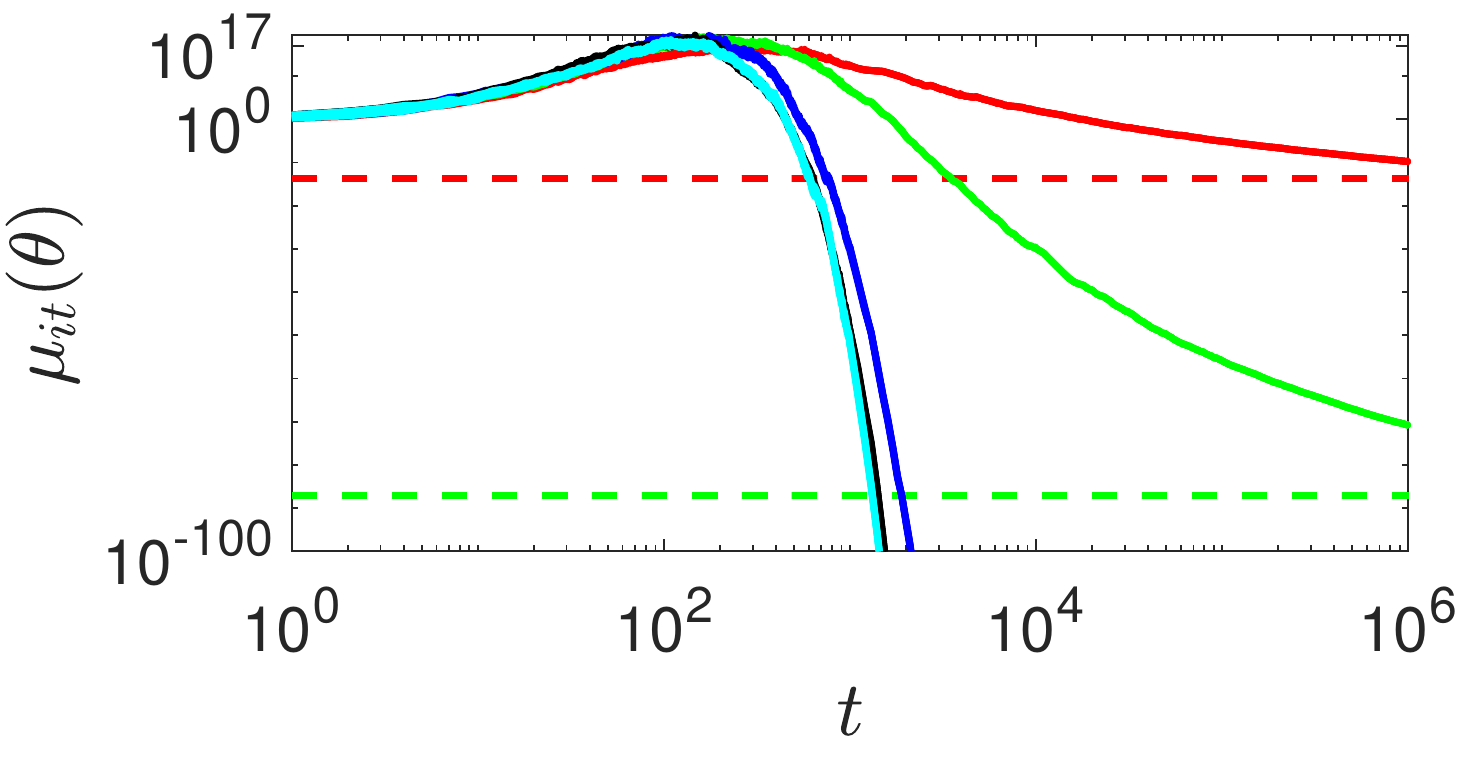}
    }%
    \subfigure[$\theta_2, 16\times 16$ Grid]{
        \centering
        \includegraphics[width=0.24\textwidth]{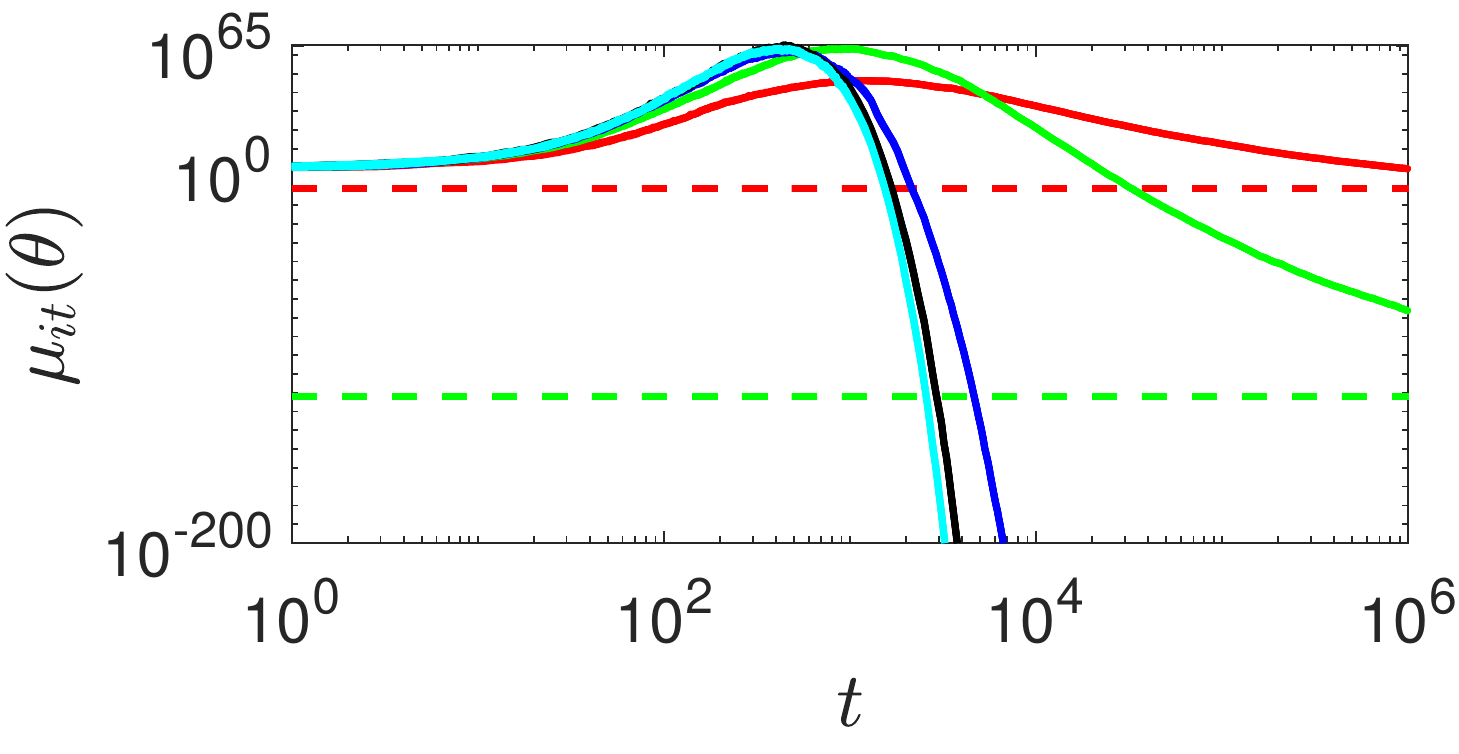}
    }
    \subfigure[$\theta_3, 2\times 2$ Grid]{
        \centering
        \includegraphics[width=0.23\textwidth]{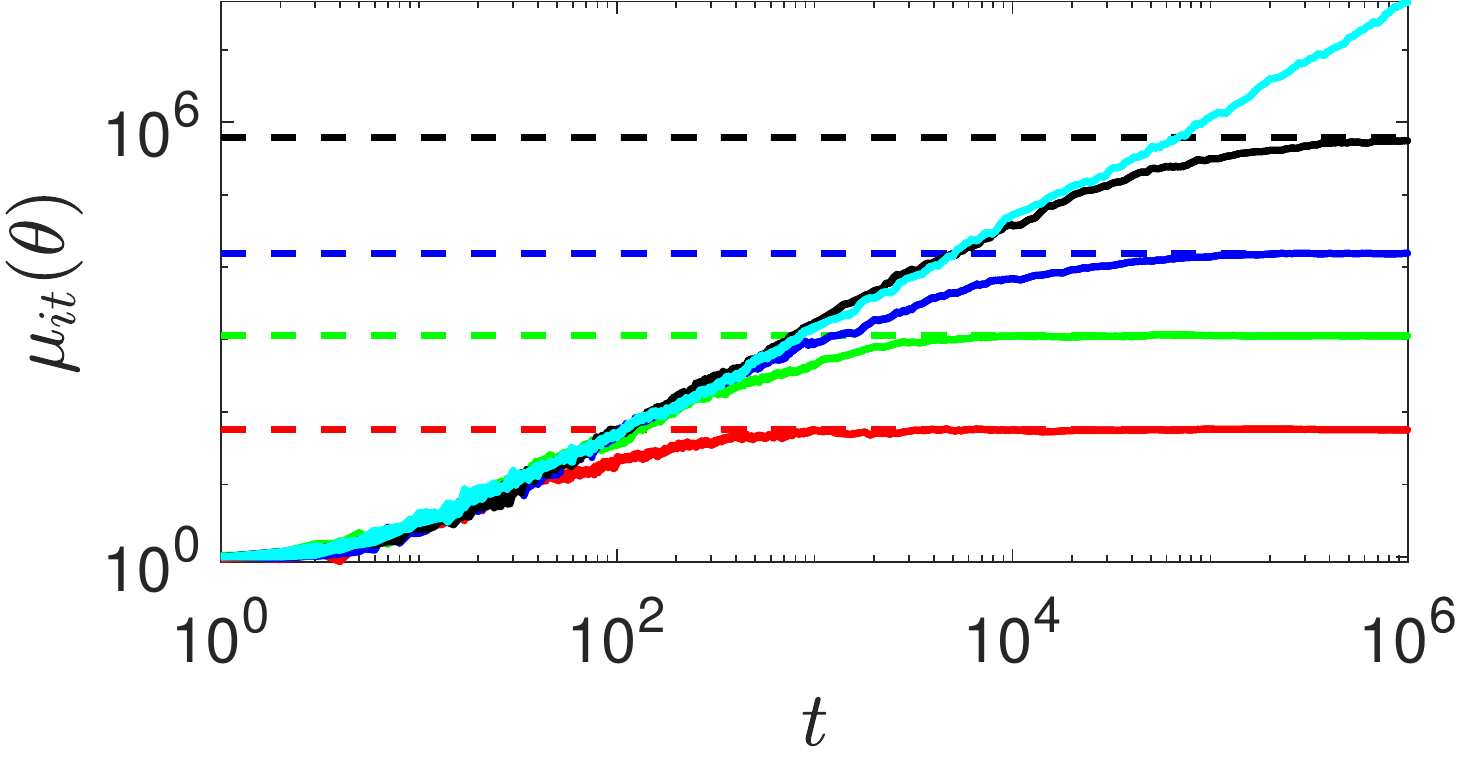}
    }%
    \subfigure[$\theta_3, 4\times 4$ Grid]{
        \centering
        \includegraphics[width=0.24\textwidth]{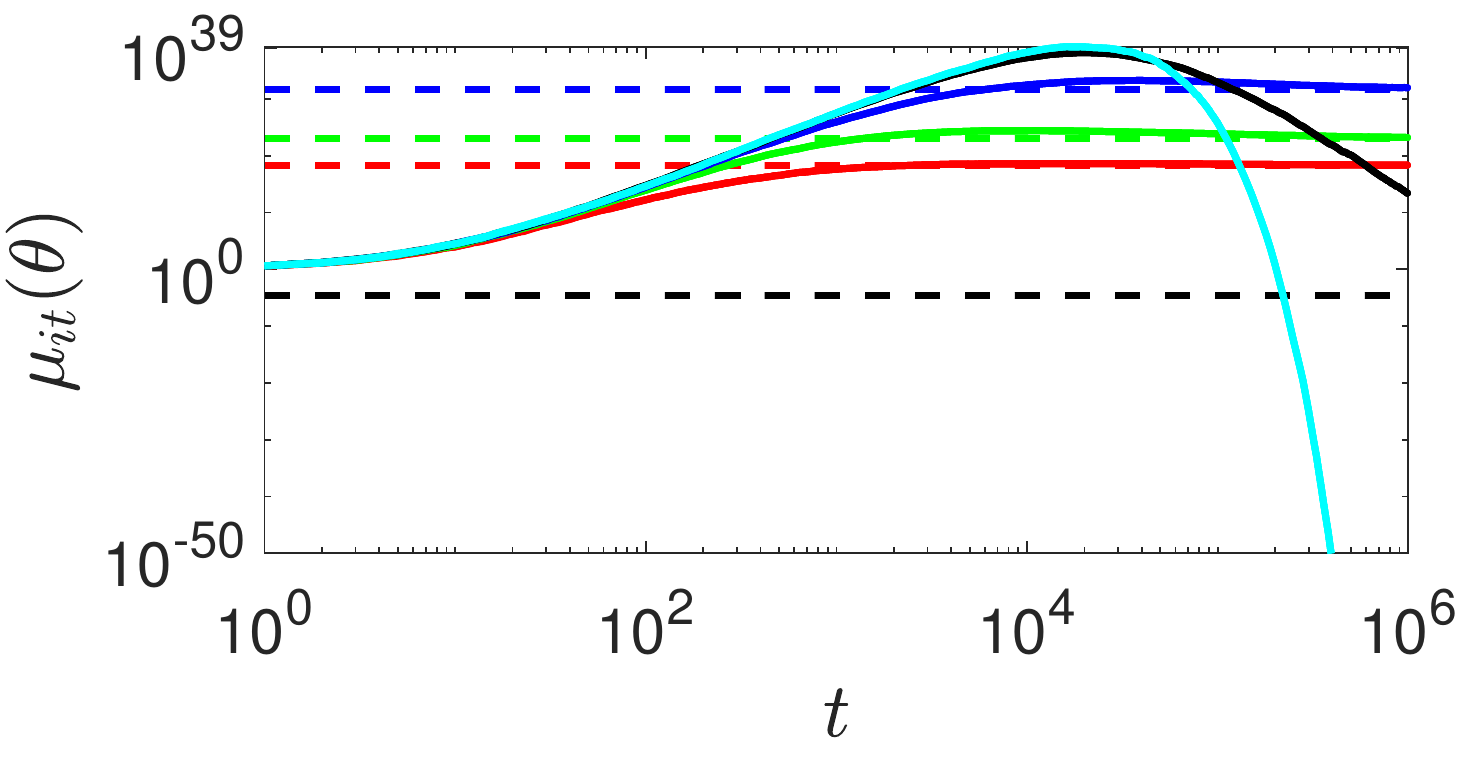}
    }%
    \subfigure[$\theta_3, 8\times 8$ Grid]{
        \centering
        \includegraphics[width=0.24\textwidth]{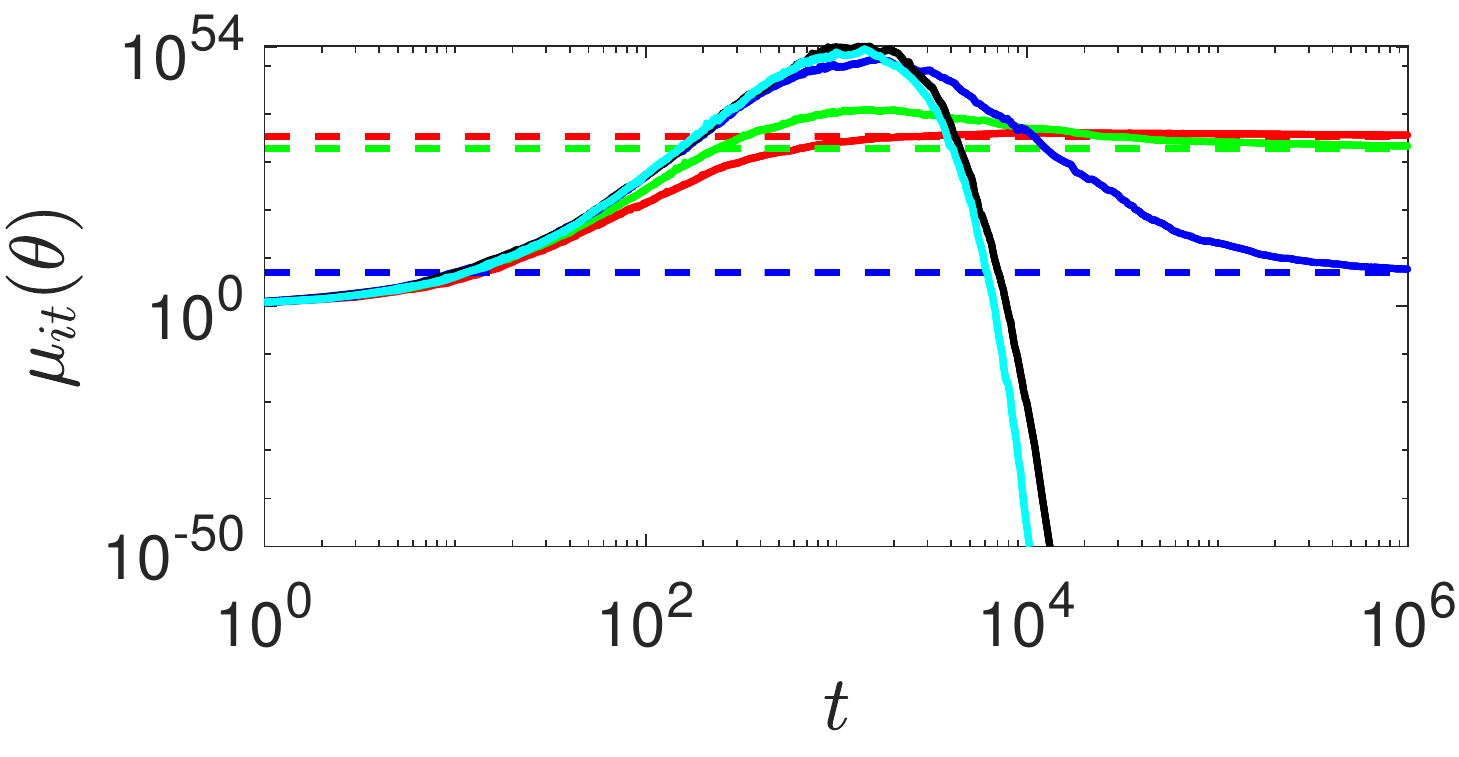}
    }%
    \subfigure[$\theta_3, 16\times 16$ Grid]{
        \centering
        \includegraphics[width=0.24\textwidth]{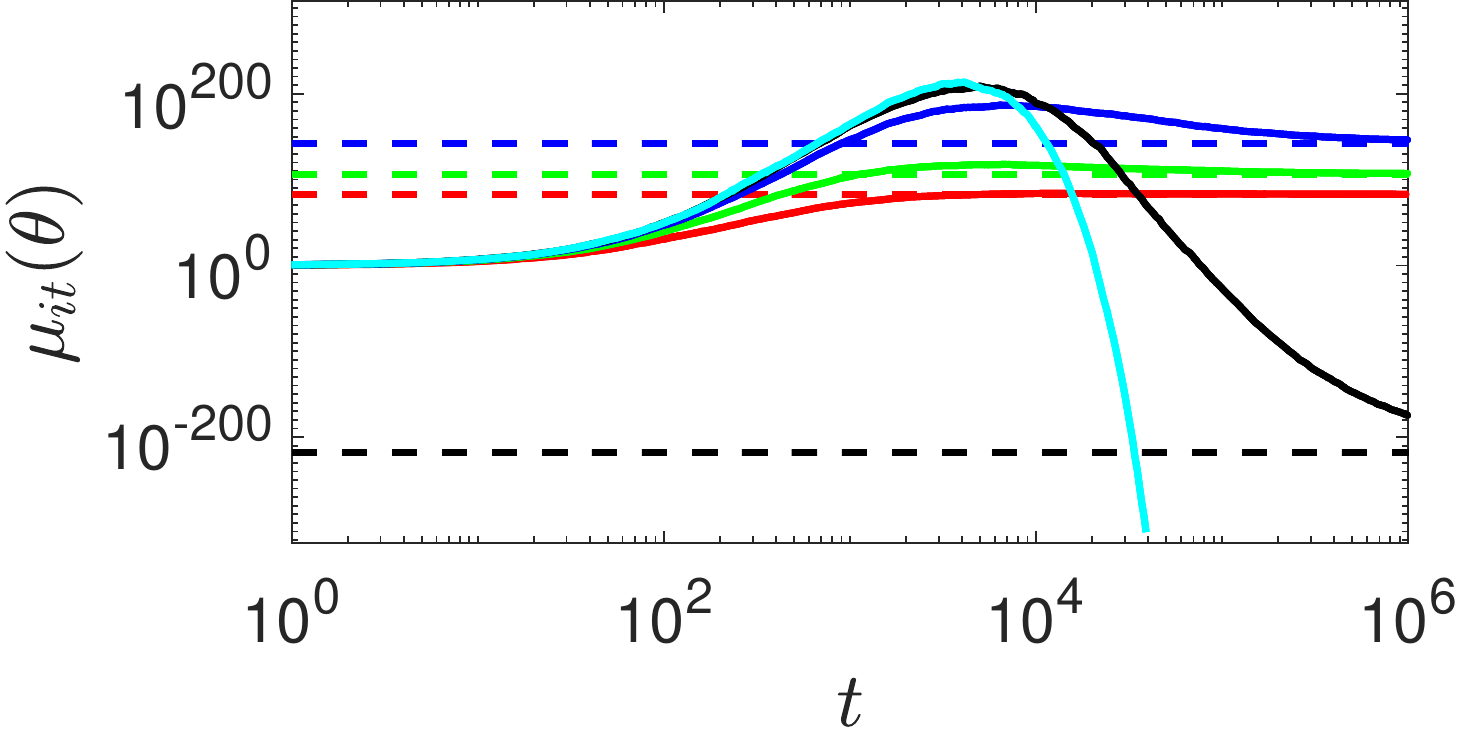}
    }
    \centering
    \subfigure{
        
        \includegraphics[width=0.65\textwidth]{leg_gauss_3-eps-converted-to.pdf}
    } \vspace{-6pt}
    \caption{Belief evolution of multinomial uncertain models utilizing the nonparametric approach presented in Section~\ref{sec:example_non}. The solid lines represent the beliefs $\mu_{t}^i(\theta)$ for various grid sizes, while the dashed lines represent the asymptotic point of convergence $(\prod_i^4 \widetilde{\Lambda}_{\theta}^i)^{(1/4)}$.} \vspace{-12pt}
    
    \label{fig:np_models}
\end{figure*}
\subsection{Misspecified likelihood models}
In this experiment, we followed the same setup as before, however, we assume that the three possible states of the world, $Q_1$, $Q_2$, and $Q_3$ are distribution according to a four-component Gaussian mixture model, i.e., $Q_m = \sum_{c=1}^4 p_{m,c}\mathcal{N}(\mathbf{m}_{m,c},\boldsymbol{\Sigma}_{m})$, where $p_{m,c}$ is the mixture probabilities. The parameters of the mixture models considered are defined in Table~\ref{tab:mixtures}. First, we consider that the agents make a naive assumption that the observations are drawn from the family of Gaussian parametric models, although the possible hypotheses are multi-modal. The Gaussian fit of each mixture is shown in Table~\ref{tab:mixtures} and represents the parameters that minimize the KL divergence between the underlying mixture and a Gaussian distribution. In this setting, the agents implement the uncertain likelihood update~\eqref{eq:mvn_ell} presented in Section~\ref{sec:mvn_models}. 

The belief evolution for the misspecified Gaussian models is shown in Figure~\ref{fig:gauss_models}. As seen, all of the properties discussed for well-specified Gaussian models hold here, numerically indicating that Theorems~\ref{th:LL} and~\ref{cor:dogmatic} hold. However, due to the abstraction of the underlying distribution, the agents cannot distinguish between $\theta_1$ and $\theta_2$ for any amount of prior evidence because both hypotheses exhibit the same mean and covariance. Therefore, when the assumed parametric family of distributions is misspecified, modeling uncertainty may result in confusion between hypotheses with a probability of $>0$. 

To overcome this issue, we then implemented the non-parametric approach presented in Section~\ref{sec:example_non}, where we partition the observation space and treat the cell occupants as samples drawn from an unknown multinomial distribution. We varied the number of rectangular cuboids between $K=\{4, 16, 64, 256\}$ by constructing a $2\times 2$, $4\times 4$, $8\times 8$, and $16\times 16$ grid as shown in Figure~\ref{fig:grid_partitions}. Here, we set the center of the grid to $(0,0)$ and the rectangle $R^i=[-3,-3]\times [3,3]$ for all $i\in\mathcal{M}$. The boundary hyperplanes were chosen to encompass most of the samples generated from the distribution for each hypothesis. Then, the agents utilize the uncertain likelihood update~\eqref{eq:multinomial_ell}.

Figure~\ref{fig:np_models} presents the evolution of the beliefs for each hypothesis and each grid. The rows represent a single hypothesis, while the columns represent the grid size. Starting from the left, we can see that the beliefs with the coarsest grid size for all hypotheses, i.e., $2\times 2$, result in the agents not distinguishing the ground truth. Then, as one increases the number of grids, the maximum value of the beliefs increases for each experiment. This is because of the slope increases as the dimension of the multinomial distribution increases. 

In this case, as the grids become finer, the non-parametric approach can distinguish the ground truth from $\theta_2$, unlike the naive Gaussian assumption presented in Figure~\ref{fig:gauss_models}. However, the beliefs of $\theta_3$ require sufficient prior evidence and finer grid size to distinguish it from the ground truth. With a $4\times 4$ grid and finite prior evidence, the beliefs converge to a value of $>1$ or close to $1$, indicating that the hypothesis may be consistent with the ground truth. Once we move to the $8\times 8$ grid, the agents can distinguish $\theta_3$ for the case when $|\mathbf{r}_\theta^i|>10^3$, indicating that a finer grid size might result in the beliefs converging to a smaller number. However, when we implement the $16\times 16$ grid for $\theta_3$, more prior evidence is required to distinguish the hypothesis from the ground truth. The grid size must be fine enough to distinguish the prior evidence but not too fine to require more than the available prior evidence. Furthermore, the computational complexity increases for finer grids. The selection of best grid size in terms of type I and II errors will be investigated as future work.

\section{Conclusion and Future Work} \label{sec:con}
This paper generalizes earlier work on uncertain models~\cite{HUKJ2020_TSP}, which utilizes the theory of non-Bayesian social learning to provide a distributed hypothesis testing framework with a network of heterogeneous agents receiving a stream of observations from a general class of measurement distributions. We assume that the agents have limited amounts of training data to learn the parameters of the likelihood models for each hypothesis, requiring uncertain models to act as their surrogate likelihood function. We first present the general uncertain models and identify a set of regularity conditions that guarantee that the agents will learn the true likelihood model asymptotically, while accurately encompassing their epistemic uncertainty when training data is limited. Then, we implemented the uncertain models into non-Bayesian social learning and showed that the agents could infer the true state of the world asymptotically. Additionally, we show that as the agents become certain about the parameters of the likelihood models, the beliefs are consistent with traditional non-Bayesian social learning, and the network learns the ground truth hypothesis. We then provided a detailed road map to guide the network designer to implement uncertain models and examples for continuous and discrete measurement distributions. Finally, we provided an initial study on extending the uncertain models to non-parametric measurement distributions and showed numerically that our framework provides the same properties as the parametric case. 

As future work, we will expand upon the non-parametric framework to further study the main challenges presented at the end of Section~\ref{sec:example_non}. This includes identifying the trade-offs in grid design and the number of bins based on Type I and Type II errors and providing error bounds. Additionally, we will investigate active learning approaches that utilize these error bounds to determine if an agent must collect more prior evidence to distinguish the ground truth hypothesis.

\bibliographystyle{IEEEtran}
\bibliography{ref}

\begin{thebibliography}{10}
\providecommand{\url}[1]{#1}
\csname url@samestyle\endcsname
\providecommand{\newblock}{\relax}
\providecommand{\bibinfo}[2]{#2}
\providecommand{\BIBentrySTDinterwordspacing}{\spaceskip=0pt\relax}
\providecommand{\BIBentryALTinterwordstretchfactor}{4}
\providecommand{\BIBentryALTinterwordspacing}{\spaceskip=\fontdimen2\font plus
\BIBentryALTinterwordstretchfactor\fontdimen3\font minus
  \fontdimen4\font\relax}
\providecommand{\BIBforeignlanguage}[2]{{%
\expandafter\ifx\csname l@#1\endcsname\relax
\typeout{** WARNING: IEEEtran.bst: No hyphenation pattern has been}%
\typeout{** loaded for the language `#1'. Using the pattern for}%
\typeout{** the default language instead.}%
\else
\language=\csname l@#1\endcsname
\fi
#2}}
\providecommand{\BIBdecl}{\relax}
\BIBdecl

\bibitem{JMST2012}
A.~Jadbabaie, P.~Molavi, A.~Sandroni, and A.~Tahbaz-Salehi, ``Non-{B}ayesian
  social learning,'' \emph{Games and Economic Behavior}, vol.~76, no.~1, pp.
  210--225, 2012.

\bibitem{GK2003}
D.~Gale and S.~Kariv, ``Bayesian learning in social networks,'' \emph{Games and
  Economic Behavior}, vol.~45, no.~2, pp. 329--346, 2003.

\bibitem{ADLO2011}
D.~Acemoglu, M.~A. Dahleh, I.~Lobel, and A.~Ozdaglar, ``{B}ayesian learning in
  social networks,'' \emph{The Review of Economic Studies}, vol.~78, no.~4, pp.
  1201--1236, 2011.

\bibitem{KT2013}
Y.~Kanoria and O.~Tamuz, ``Tractable {B}ayesian social learning on trees,''
  \emph{IEEE Journal on Selected Areas in Communications}, vol.~31, no.~4, pp.
  756--765, 2013.

\bibitem{RJM2017}
M.~A. Rahimian, A.~Jadbabaie, and E.~Mossel, ``Complexity of {B}ayesian belief
  exchange over a network,'' in \emph{IEEE Conference on Decision and
  Control}.\hskip 1em plus 0.5em minus 0.4em\relax IEEE, 2017, pp. 2611--2616.

\bibitem{SJ2013}
S.~Shahrampour and A.~Jadbabaie, ``Exponentially fast parameter estimation in
  networks using distributed dual averaging,'' in \emph{IEEE Conference on
  Decision and Control}.\hskip 1em plus 0.5em minus 0.4em\relax IEEE, 2013, pp.
  6196--6201.

\bibitem{RT2010}
K.~R. Rad and A.~Tahbaz-Salehi, ``Distributed parameter estimation in
  networks,'' in \emph{IEEE Conference on Decision and Control}.\hskip 1em plus
  0.5em minus 0.4em\relax IEEE, 2010, pp. 5050--5055.

\bibitem{RMJ2014}
M.~A. Rahimian, P.~Molavi, and A.~Jadbabaie, ``(non-) {B}ayesian learning
  without recall,'' in \emph{IEEE Conference on Decision and Control}.\hskip
  1em plus 0.5em minus 0.4em\relax IEEE, 2014, pp. 5730--5735.

\bibitem{LJS2018}
A.~Lalitha, T.~Javidi, and A.~D. Sarwate, ``Social learning and distributed
  hypothesis testing,'' \emph{IEEE Transactions on Information Theory},
  vol.~64, no.~9, pp. 6161--6179, 2018.

\bibitem{MTJ18}
P.~Molavi, A.~Tahbaz-Salehi, and A.~Jadbabaie, ``A theory of non-{B}ayesian
  social learning,'' \emph{Econometrica}, vol.~86, no.~2, pp. 445--490, 2018.

\bibitem{MRS2019_ACC}
A.~Mitra, J.~A. Richards, and S.~Sundaram, ``A new approach for distributed
  hypothesis testing with extensions to {B}yzantine-resilience,'' in
  \emph{American Control Conference (ACC)}.\hskip 1em plus 0.5em minus
  0.4em\relax IEEE, 2019, pp. 261--266.

\bibitem{MRS2019_CDC}
------, ``A communication-efficient algorithm for exponentially fast
  non-{B}ayesian learning in networks,'' in \emph{IEEE 58th Conference on
  Decision and Control (CDC)}.\hskip 1em plus 0.5em minus 0.4em\relax IEEE,
  2019, pp. 8347--8352.

\bibitem{NOU2015}
A.~Nedi{\'c}, A.~Olshevsky, and C.~A. Uribe, ``Nonasymptotic convergence rates
  for cooperative learning over time-varying directed graphs,'' in
  \emph{American Control Conference (ACC)}.\hskip 1em plus 0.5em minus
  0.4em\relax IEEE, 2015, pp. 5884--5889.

\bibitem{NOU2017}
------, ``Fast convergence rates for distributed non-{B}ayesian learning,''
  \emph{IEEE Transactions on Automatic Control}, vol.~62, no.~11, pp.
  5538--5553, 2017.

\bibitem{SYS2017}
H.~Salami, B.~Ying, and A.~H. Sayed, ``Social learning over weakly connected
  graphs,'' \emph{IEEE Transactions on Signal and Information Processing over
  Networks}, vol.~3, no.~2, pp. 222--238, 2017.

\bibitem{SYS2018}
------, ``Belief control strategies for interactions over weakly-connected
  graphs,'' \emph{arXiv preprint arXiv:1801.05479}, 2018.

\bibitem{UJ2019}
C.~A. Uribe and A.~Jadbabaie, ``On increasing self-confidence in non-{B}ayesian
  social learning over time-varying directed graphs,'' in \emph{American
  Control Conference}.\hskip 1em plus 0.5em minus 0.4em\relax IEEE, 2019, pp.
  3532--3537.

\bibitem{NOU2017_compact}
A.~Nedi{\'c}, A.~Olshevsky, and C.~A. Uribe, ``Distributed learning for
  cooperative inference,'' \emph{arXiv preprint arXiv:1704.02718}, 2017.

\bibitem{BT2018}
M.~Bhotto and W.~P. Tay, ``Non-{B}ayesian social learning with observation
  reuse and soft switching,'' \emph{ACM Transactions on Sensor Networks},
  vol.~14, no.~2, p.~14, 2018.

\bibitem{SV2018}
L.~Su and N.~H. Vaidya, ``Defending non-{B}ayesian learning against adversarial
  attacks,'' \emph{Distributed Computing}, vol.~32, no.~4, pp. 1--13, 2018.

\bibitem{VSV2019}
P.~Vyavahare, L.~Su, and N.~H. Vaidya, ``Distributed learning with adversarial
  agents under relaxed network condition,'' \emph{arXiv preprint
  arXiv:1901.01943}, 2019.

\bibitem{HULJ2019}
J.~Hare, C.~Uribe, L.~Kaplan, and A.~Jadbabaie, ``On malicious agents in
  non-{B}ayesian social learning with uncertain models,'' in \emph{ISIF/IEEE
  International Conference on Information Fusion}, 2019.

\bibitem{CG1999}
J.~M. Corcuera and F.~Giummol{\`e}, ``A generalized {B}ayes rule for
  prediction,'' \emph{Scandinavian Journal of Statistics}, vol.~26, no.~2, pp.
  265--279, 1999.

\bibitem{DP2012}
D.~Dubois and H.~Prade, \emph{Possibility theory: an approach to computerized
  processing of uncertainty}.\hskip 1em plus 0.5em minus 0.4em\relax Springer
  Science \& Business Media, 2012.

\bibitem{W1997}
P.~Walley, ``Statistical inferences based on a second-order possibility
  distribution,'' \emph{International Journal of General System}, vol.~26,
  no.~4, pp. 337--383, 1997.

\bibitem{S1976}
G.~Shafer, \emph{A mathematical theory of evidence}.\hskip 1em plus 0.5em minus
  0.4em\relax Princeton university press, 1976, vol.~42.

\bibitem{SK1994}
P.~Smets and R.~Kennes, ``The transferable belief model,'' \emph{Artificial
  intelligence}, vol.~66, no.~2, pp. 191--234, 1994.

\bibitem{J2018}
A.~J{\o}sang, \emph{Subjective Logic: A formalism for reasoning under
  uncertainty}.\hskip 1em plus 0.5em minus 0.4em\relax Springer Publishing
  Company, Incorporated, 2018.

\bibitem{R1984}
D.~B. Rubin \emph{et~al.}, ``Bayesianly justifiable and relevant frequency
  calculations for the applied statistician,'' \emph{The Annals of Statistics},
  vol.~12, no.~4, pp. 1151--1172, 1984.

\bibitem{HUKJ2020_TSP}
J.~Z. {Hare}, C.~A. {Uribe}, L.~{Kaplan}, and A.~{Jadbabaie}, ``Non-bayesian
  social learning with uncertain models,'' \emph{IEEE Transactions on Signal
  Processing}, vol.~68, pp. 4178--4193, 2020.

\bibitem{UHLJ2019}
C.~A. Uribe, J.~Z. Hare, L.~Kaplan, and A.~Jadbabaie, ``Non-{B}ayesian social
  learning with uncertain models over time-varying directed graphs,'' in
  \emph{IEEE 58th Conference on Decision and Control (CDC)}.\hskip 1em plus
  0.5em minus 0.4em\relax IEEE, 2019, pp. 3635--3640.

\bibitem{HUKJ2020_Gauss}
J.~Z. {Hare}, C.~A. {Uribe}, L.~{Kaplan}, and A.~{Jadbabaie}, ``Non-bayesian
  social learning with {G}aussian uncertain models,'' in \emph{American Control
  Conference (ACC)}.\hskip 1em plus 0.5em minus 0.4em\relax IEEE, 2020, pp.
  4484--4490.

\bibitem{HUKJ2020_ICASSP}
J.~Z. Hare, C.~A. Uribe, L.~M. Kaplan, and A.~Jadbabaie, ``Communication
  constrained learning with uncertain models,'' in \emph{IEEE International
  Conference on Acoustics, Speech and Signal Processing (ICASSP)}.\hskip 1em
  plus 0.5em minus 0.4em\relax IEEE, 2020, pp. 8609--8613.

\bibitem{HW2019}
E.~H{\"u}llermeier and W.~Waegeman, ``Aleatoric and epistemic uncertainty in
  machine learning: A tutorial introduction,'' \emph{arXiv preprint
  arXiv:1910.09457}, 2019.

\bibitem{nedic2017distributed}
A.~Nedi{\'c}, A.~Olshevsky, and C.~A. Uribe, ``Distributed learning for
  cooperative inference,'' \emph{arXiv preprint arXiv:1704.02718}, 2017.

\bibitem{W2013}
S.~G. Walker, ``Bayesian inference with misspecified models,'' \emph{Journal of
  Statistical Planning and Inference}, vol. 143, no.~10, pp. 1621--1633, 2013.

\bibitem{gelman2013}
A.~Gelman, J.~B. Carlin, H.~S. Stern, D.~B. Dunson, A.~Vehtari, and D.~B.
  Rubin, \emph{Bayesian data analysis}.\hskip 1em plus 0.5em minus 0.4em\relax
  Chapman and Hall/CRC, 2013.

\bibitem{D2005}
M.~H. DeGroot, \emph{Optimal statistical decisions}.\hskip 1em plus 0.5em minus
  0.4em\relax John Wiley \& Sons, 2005, vol.~82.

\bibitem{W1969}
A.~M. Walker, ``On the asymptotic behaviour of posterior distributions,''
  \emph{Journal of the Royal Statistical Society: Series B (Methodological)},
  vol.~31, no.~1, pp. 80--88, 1969.

\bibitem{KV2012}
B.~J.~K. Kleijn, A.~W. Van~der Vaart \emph{et~al.}, ``The
  {B}ernstein-von-{M}ises theorem under misspecification,'' \emph{Electronic
  Journal of Statistics}, vol.~6, pp. 354--381, 2012.

\bibitem{LeCam1956}
L.~Le~Cam, ``On the asymptotic theory of estimation and testing hypotheses,''
  in \emph{Proceedings of the Third Berkeley Symposium on Mathematical
  Statistics and Probability, Volume 1: Contributions to the Theory of
  Statistics}.\hskip 1em plus 0.5em minus 0.4em\relax The Regents of the
  University of California, 1956.

\bibitem{D1976}
I.~R. Dunsmore, ``Asymptotic prediction analysis,'' \emph{Biometrika}, vol.~63,
  no.~3, pp. 627--630, 1976.

\bibitem{Ait1975}
J.~Aitchison, ``Goodness of prediction fit,'' \emph{Biometrika}, vol.~62,
  no.~3, pp. 547--554, 1975.

\bibitem{Kom1996}
F.~Komaki, ``On asymptotic properties of predictive distributions,''
  \emph{Biometrika}, vol.~83, no.~2, pp. 299--313, 1996.

\bibitem{F1997}
D.~Fink, ``A compendium of conjugate priors,'' \emph{See http://www. people.
  cornell. edu/pages/df36/CONJINTRnew\% 20TEX. pdf}, vol.~46, 1997.

\bibitem{M2012}
K.~P. Murphy, \emph{Machine learning: a probabilistic perspective}.\hskip 1em
  plus 0.5em minus 0.4em\relax MIT press, 2012.

\bibitem{ram10}
S.~S. Ram, A.~Nedi{\'c}, and V.~V. Veeravalli, ``Distributed stochastic
  subgradient projection algorithms for convex optimization,'' \emph{Journal of
  optimization theory and applications}, vol. 147, no.~3, pp. 516--545, 2010.

\end{thebibliography}

\appendices
\section{Proof of Theorem~\ref{th:LL}}
First, we present the following lemma to show that the general models have the same properties as the multinomial and Gaussian uncertain models presented in \cite{HUKJ2020_TSP, HUKJ2020_Gauss}

\begin{lemma} \label{lem:ulru_finite}
Let Assumption~\ref{assum:support}-\ref{assum:reg} hold. Then, the uncertain likelihood update for each hypothesis $\theta\in\boldsymbol{\Theta}$ and each agent $i\in\mathcal{M}$ have the following properties: 
\begin{align}
    \lim_{t\to\infty} \ell_\theta^i(\omega|\boldsymbol{\omega}_{1:t-1}^i) = \left\{ \begin{array}{cl}
        1 & \text{if} \ |\mathbf{r}_\theta^i|<\infty, \\
        \frac{P^i(\omega|\boldsymbol{\phi}_\theta^i)}{P^i(\omega|\boldsymbol{\phi}_{\theta^*}^i)} & \ \text{otherwise},
    \end{array} \right.
\end{align}
in probability. 
\end{lemma}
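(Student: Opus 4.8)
The plan is to reduce the statement to Proposition~\ref{lem:normal_posterior} by recognizing the numerator and denominator of the uncertain likelihood update~\eqref{eq:ulu} as posterior predictive distributions of a single observation $\omega$, and then pushing the limit $t\to\infty$ inside the two integrals over $\boldsymbol{\Phi}$. Concretely, rewrite
\begin{align*}
\ell_\theta^i(\omega\,|\,\boldsymbol{\omega}_{1:t-1}^i)
= \frac{\int_{\boldsymbol{\Phi}} P^i(\omega|\boldsymbol{\phi})\, f(\boldsymbol{\phi}|\psi(\{\boldsymbol{\omega}_{1:t-1}^i,\mathbf{r}_\theta^i\}))\,d\boldsymbol{\phi}}
       {\int_{\boldsymbol{\Phi}} P^i(\omega|\boldsymbol{\phi})\, f(\boldsymbol{\phi}|\psi(\boldsymbol{\omega}_{1:t-1}^i))\,d\boldsymbol{\phi}}.
\end{align*}
Both the numerator and the denominator condition on a data set whose cardinality grows without bound as $t\to\infty$, so each of the two posterior measures is governed by Proposition~\ref{lem:normal_posterior}, and the task is to control the two integrals separately and then take the ratio.

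\textbf{Case $|\mathbf{r}_\theta^i|<\infty$.} By Proposition~\ref{lem:normal_posterior}(a), a finite amount of prior evidence does not change the limiting concentration point, so both $f(\boldsymbol{\phi}|\psi(\{\boldsymbol{\omega}_{1:t-1}^i,\mathbf{r}_\theta^i\}))$ and $f(\boldsymbol{\phi}|\psi(\boldsymbol{\omega}_{1:t-1}^i))$ converge in probability, as measures on $\boldsymbol{\Phi}$, to the point mass $\delta_{\boldsymbol{\phi}_{\theta^*}^i}$ as $t\to\infty$. Since $\boldsymbol{\phi}\mapsto P^i(\omega|\boldsymbol{\phi})$ is bounded (Assumption~\ref{assum:support}) and continuous in $\boldsymbol{\phi}$ (Assumption~\ref{assum:reg}), the limit can be passed inside each integral, so numerator and denominator both converge in probability to $P^i(\omega|\boldsymbol{\phi}_{\theta^*}^i)$. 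This quantity is strictly positive for every $\omega$ in the support of $Q^i$ by the absolute-continuity clause of Assumption~\ref{assum:support}, so the ratio is well defined in the limit, equals $1$, and the continuous-mapping theorem yields $\ell_\theta^i(\omega|\boldsymbol{\omega}_{1:t-1}^i)\to 1$ in probability.

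\textbf{Case $|\mathbf{r}_\theta^i|=\infty$.} Here we read the limit as the joint limit in $t$ and $|\mathbf{r}_\theta^i|$ and first argue, exactly as in the proof of Corollary~\ref{cor:ULR_Dog}, that since $\mathbf{r}_\theta^i$ and $\boldsymbol{\omega}_{1:t-1}^i$ are i.i.d.\ the processing order is immaterial and the two limits may be interchanged. Taking $|\mathbf{r}_\theta^i|\to\infty$ first in the numerator, Proposition~\ref{lem:normal_posterior}(b) shows the posterior concentrates on $\delta_{\boldsymbol{\phi}_\theta^i}$ (the prior evidence dominates), so the numerator converges to $P^i(\omega|\boldsymbol{\phi}_\theta^i)$; the denominator only sees observations, so by part (a) it converges to $P^i(\omega|\boldsymbol{\phi}_{\theta^*}^i)>0$ as $t\to\infty$. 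Combining gives $\ell_\theta^i(\omega|\boldsymbol{\omega}_{1:t-1}^i)\to P^i(\omega|\boldsymbol{\phi}_\theta^i)/P^i(\omega|\boldsymbol{\phi}_{\theta^*}^i)$ in probability.

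The main obstacle is the exchange of limit and integral: Proposition~\ref{lem:normal_posterior} only gives weak convergence of the posterior to a Dirac mass, and only in probability over the observation sequence, so one must check that $\boldsymbol{\phi}\mapsto P^i(\omega|\boldsymbol{\phi})$ is an admissible (bounded, continuous) test function — which is precisely what Assumptions~\ref{assum:support} and~\ref{assum:reg} provide — and then track the ``in probability'' qualifier through the continuous-mapping theorem, which requires the denominator's limit to be bounded away from $0$ (again supplied by absolute continuity). A secondary, more routine point is the justification of interchanging the $t\to\infty$ and $|\mathbf{r}_\theta^i|\to\infty$ limits in the second case, which mirrors the argument already given for Corollary~\ref{cor:ULR_Dog}.
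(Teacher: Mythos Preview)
Your proposal is correct and follows essentially the same route as the paper: both cases are handled by invoking Proposition~\ref{lem:normal_posterior} to collapse the numerator and denominator posteriors in~\eqref{eq:ulu} to point masses (at $\boldsymbol{\phi}_{\theta^*}^i$ for both in the finite-evidence case, and at $\boldsymbol{\phi}_\theta^i$ and $\boldsymbol{\phi}_{\theta^*}^i$ respectively in the certain case), then reading off the resulting ratio. You supply more explicit justification for passing the limit through the integrals and for the well-definedness of the ratio than the paper does, but the underlying argument is identical.
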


\begin{proof}
The first condition when $|\mathbf{r}_\theta^i|<\infty$ is easily obtained since the uncertain likelihood update converges in probability to 
\begin{align}
    \lim_{t\to\infty}\ell_{\theta}^i(\omega_{t}^i)&=\frac{\int_{\boldsymbol{\Phi}^i} P^i(\omega_{t}^i|\boldsymbol{\phi}^i)\delta_{\boldsymbol{\phi}_{\theta^*}^i
}(\boldsymbol{\phi}^i) d\boldsymbol{\phi}^i}{\int_{\boldsymbol{\Phi}^i} P^i(\omega_{t}^i|\boldsymbol{\phi}^i)\delta_{\boldsymbol{\phi}_{\theta^*}^i}(\boldsymbol{\phi}^i) d\boldsymbol{\phi}^i} = \frac{P^i(\omega_{t}^i|\boldsymbol{\phi}_{\theta^*}^i)}{P^i(\omega_{t}^i|\boldsymbol{\phi}_{\theta^*}^i)}, \nonumber
\end{align}
according to Proposition~\ref{lem:normal_posterior} property (a). Similarly, the second condition is trivially obtained due to property (b) of Proposition~\ref{lem:normal_posterior}.  
\end{proof}

Next, we include the following lemmas to help prove Theorem~\ref{th:LL}.
Finally, we include the following two lemmas. 

\begin{lemma}[Lemma 5 in~\cite{NOU2017}] \label{lem:doubly}
For a stationary doubly stochastic matrix, we have for all $t>0$
\begin{eqnarray}
\left \| \mathbf{A}^t - \frac{1}{m} \mathbf{11}' \right\| \le \sqrt{2}m \lambda^t
\end{eqnarray}
	where $\|\cdot\|$ is the spectral norm, $\lambda= 1-\frac{\eta}{4m^2}$, and $\eta$ is a positive constant s.t. if $[\mathbf{A}]_{ij}>0$, then $[\mathbf{A}]_{ij}\ge \eta$.
\end{lemma}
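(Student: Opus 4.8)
The plan is to exploit that $\mathbf{A}$ being doubly stochastic makes $\mathbf{J} := \frac{1}{m}\mathbf{1}\mathbf{1}'$ an orthogonal projection that commutes with $\mathbf{A}$. Concretely, $\mathbf{A}\mathbf{1} = \mathbf{1}$ and $\mathbf{1}'\mathbf{A} = \mathbf{1}'$ give $\mathbf{A}\mathbf{J} = \mathbf{J}\mathbf{A} = \mathbf{J}$, and clearly $\mathbf{J}^2 = \mathbf{J}$. Expanding the binomial and cancelling then yields the algebraic identity $\mathbf{A}^t - \mathbf{J} = (\mathbf{A} - \mathbf{J})^t$, so it suffices to bound $\|(\mathbf{A}-\mathbf{J})^t\|$. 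Since $\mathbf{A}-\mathbf{J}$ annihilates $\mathrm{span}(\mathbf{1})$, because $(\mathbf{A}-\mathbf{J})\mathbf{1}=\mathbf{0}$, and preserves the orthogonal complement $\mathbf{1}^\perp$, because $\mathbf{1}'\mathbf{A}x = \mathbf{1}'x = 0$ whenever $\mathbf{1}'x=0$, the problem reduces to quantifying how strongly $\mathbf{A}$ contracts on $\mathbf{1}^\perp$.

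First I would establish the contraction rate through a mixing argument on the rows of $\mathbf{A}^t$, which are probability vectors converging to the uniform law $\frac{1}{m}\mathbf{1}'$. The natural tool is the coefficient of ergodicity $\tau(\mathbf{P}) = \frac{1}{2}\max_{i,k}\sum_j \left| [\mathbf{P}]_{ij} - [\mathbf{P}]_{kj} \right|$, which is sub-multiplicative, $\tau(\mathbf{P}\mathbf{Q}) \le \tau(\mathbf{P})\tau(\mathbf{Q})$, and directly controls $\|\mathbf{A}^t - \mathbf{J}\|_\infty$. Using that every positive entry of $\mathbf{A}$ is at least $\eta$, that the diagonal is positive, and that the graph is connected, one shows that over a window of length at most $m$ the chain becomes \emph{scrambling} — any two rows place mass on a common column — which forces a strict contraction of $\tau$; tracking this across the connected graph of $m$ agents produces the stated per-step rate $\lambda = 1 - \frac{\eta}{4m^2}$. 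I would then pass from the total-variation contraction to the spectral norm via the standard inequality $\|\mathbf{M}\|_2 \le \sqrt{\|\mathbf{M}\|_1\,\|\mathbf{M}\|_\infty}$, noting that $(\mathbf{A}')^t - \mathbf{J}$ is governed by the same constants since $\mathbf{A}'$ is again ergodic doubly stochastic with the same $\eta$, and the dimensional factors collected in these conversions are absorbed into the prefactor $\sqrt{2}\,m$, giving $\|\mathbf{A}^t - \mathbf{J}\|_2 \le \sqrt{2}\,m\,\lambda^t$.

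The main obstacle is this middle step: deriving the \emph{explicit} constant $\frac{\eta}{4m^2}$ rather than merely ``some $\lambda < 1$.'' This requires a quantitative accounting of how connectivity together with the uniform lower bound $\eta$ on positive transitions guarantees that, within $O(m)$ steps, the total-variation distance between any two rows decreases by a factor bounded away from $1$ by $\Theta(\eta/m^2)$, and then amortizing that window contraction back to a clean per-step rate valid for every $t>0$. A cleaner but less self-contained alternative would bound $\|\mathbf{A}-\mathbf{J}\|_2$ directly by the square root of the second-largest eigenvalue of the symmetric doubly stochastic matrix $\mathbf{A}'\mathbf{A}$ and invoke sub-multiplicativity $\|(\mathbf{A}-\mathbf{J})^t\|_2 \le \|\mathbf{A}-\mathbf{J}\|_2^{\,t}$; this yields geometric decay with prefactor $1 \le \sqrt{2}\,m$, but still requires a separate spectral-gap estimate to recover the precise value of $\lambda$.
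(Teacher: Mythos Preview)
The paper does not prove this lemma at all: it is quoted verbatim as Lemma~5 of \cite{NOU2017} and used as a black box, so there is no ``paper's own proof'' to compare against. Your sketch is a reasonable outline of the standard argument behind that result; the algebraic reduction $\mathbf{A}^t-\mathbf{J}=(\mathbf{A}-\mathbf{J})^t$ via $\mathbf{A}\mathbf{J}=\mathbf{J}\mathbf{A}=\mathbf{J}=\mathbf{J}^2$ is correct, and the Dobrushin/ergodicity-coefficient route together with the norm conversion $\|\mathbf{M}\|_2\le\sqrt{\|\mathbf{M}\|_1\|\mathbf{M}\|_\infty}$ is exactly how the cited reference (and its antecedents in the distributed-averaging literature) obtain the bound.

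You have correctly identified the only nontrivial part as pinning down the explicit rate $\lambda=1-\eta/(4m^2)$ rather than a generic $\lambda<1$. Your description of that step is accurate in spirit but remains a sketch: the precise accounting uses that the positive-diagonal, connected, $\eta$-lower-bounded chain has every entry of $\mathbf{A}^{m-1}$ bounded below (or, equivalently, that any two rows share a common positive column within $m$ steps with overlap mass $\ge\eta^{m}$-type quantities), and then amortizes to a per-step factor. If you want a self-contained proof you would need to fill in that combinatorial bound carefully; otherwise, citing \cite{NOU2017} as the paper does is entirely appropriate here, since the lemma is a standard tool and not a contribution of this work.
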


\begin{lemma}[Lemma $3.1$ in \cite{ram10}]\label{lemm:ram}
		Let $\{\gamma_k \}$ be a scalar sequence. If  $\lim_{k \to \infty} \gamma_k = \gamma$ and $0\leq \beta \leq 1$, then $\lim_{k\to \infty} \sum_{l=0}^{k} \beta^{k{-}l} \gamma_l = \frac{\gamma}{1{-}\beta}$.
\end{lemma}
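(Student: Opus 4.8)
The plan is to recognize this as a Toeplitz-type (Abel/Cesàro) averaging lemma and prove it directly by an $\eps$--$N$ splitting argument; since the claimed limit $\gamma/(1-\beta)$ is finite only when $\beta<1$, I treat the stated range as $0\le\beta<1$. First I would record the geometric identity
\begin{align}
    \sum_{l=0}^{k}\beta^{k-l} = \sum_{j=0}^{k}\beta^{j} = \frac{1-\beta^{k+1}}{1-\beta} \xrightarrow[k\to\infty]{} \frac{1}{1-\beta},
\end{align}
which identifies $\gamma/(1-\beta)$ as the limit of $\gamma\sum_{l=0}^{k}\beta^{k-l}$ and reduces the claim to showing that the centered weighted sum vanishes, i.e.
\begin{align}
    S_k := \sum_{l=0}^{k}\beta^{k-l}(\gamma_l-\gamma) \xrightarrow[k\to\infty]{} 0,
\end{align}
because $\sum_{l=0}^{k}\beta^{k-l}\gamma_l-\tfrac{\gamma}{1-\beta}=S_k-\gamma\,\tfrac{\beta^{k+1}}{1-\beta}$ and the last term tends to $0$.

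Next I would fix $\eps>0$ and use $\gamma_l\to\gamma$ to choose $N$ with $|\gamma_l-\gamma|<\eps$ for all $l>N$, then split $S_k$ (for $k>N$) into a \emph{head} $\sum_{l=0}^{N}$ and a \emph{tail} $\sum_{l=N+1}^{k}$. The tail is controlled uniformly in $k$: bounding each centered term by $\eps$ and extending the geometric weights gives
\begin{align}
    \Big|\sum_{l=N+1}^{k}\beta^{k-l}(\gamma_l-\gamma)\Big| \le \eps\sum_{l=N+1}^{k}\beta^{k-l} \le \eps\sum_{j=0}^{\infty}\beta^{j} = \frac{\eps}{1-\beta}.
\end{align}
The head has only the fixed finite number $N+1$ of terms, each with bounded error $|\gamma_l-\gamma|\le B$ (a convergent sequence is bounded) and weight $\beta^{k-l}\le\beta^{k-N}$, so the head is at most $(N+1)B\beta^{k-N}$, which tends to $0$ as $k\to\infty$ with $N$ held fixed.

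Combining the two bounds yields $\limsup_{k\to\infty}|S_k|\le \eps/(1-\beta)$, and since $\eps>0$ is arbitrary this forces $S_k\to0$, which is exactly the reduced claim; the stated limit follows. The only real subtlety — and the step I would treat most carefully — is the head: the early terms $\gamma_l-\gamma$ need not be small, so the argument must lean entirely on the exponential decay of their weights $\beta^{k-l}$ as $k\to\infty$ (this is precisely where $\beta<1$ is essential), and it is the order of quantifiers, fixing $N$ first and only then sending $k\to\infty$, that makes the head vanish. Equivalently, one could observe that the array $w_{k,l}=\beta^{k-l}\mathbf{1}_{\{l\le k\}}$ has nonnegative entries, uniformly bounded row sums $\le 1/(1-\beta)$, and columns tending to $0$, so after normalizing by the row sums it is a regular summation matrix and the result is a direct instance of the Silverman--Toeplitz theorem.
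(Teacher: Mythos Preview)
Your proof is correct and is the standard $\eps$--$N$ splitting argument for this Toeplitz-type averaging result; your observation that the statement tacitly requires $0\le\beta<1$ is also well taken. Note, however, that the paper does not supply its own proof of this lemma: it is quoted verbatim as Lemma~3.1 of \cite{ram10} and invoked as a black box in the proofs of Theorems~\ref{th:LL} and~\ref{cor:dogmatic}, so there is no in-paper argument to compare against. Your write-up would serve perfectly well as the omitted justification.
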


\begin{proof}[Proof of Theorem~\ref{th:LL}]
We start by taking the standard $2$-norm between the log-beliefs and the log-average asymptotic uncertain likelihood ratios, i.e.,
\begin{align}
    &\Big\| \log(\boldsymbol{\mu}_{t+1}(\theta)) - \frac{1}{m}\sum_{i=1}^m\log(\Lambda_{i\theta}(t+1))\mathbf{1}\Big\| \nonumber \\
    &= \bigg\| \left(1-\frac{1}{m}\right)\mathbf{I}\log(\boldsymbol{\ell}_{\theta}(t+1)) + \nonumber \\
    & \ \ \ \ \ \  \sum_{\tau=1}^t\left(\mathbf{A}^{t-\tau}-\frac{1}{m}\mathbf{11'}\right)\log(\boldsymbol{\ell}_{\theta}(\tau))\bigg\| \nonumber \\
    &\le \big\|\log(\boldsymbol{\ell}_{\theta}(t+1)) \big\| + \sum_{\tau=1}^t\bigg\|\mathbf{A}^{t-\tau}-\frac{1}{m}\bigg\| \big\|\log(\boldsymbol{\ell}_{\theta}(\tau))\big\|  \nonumber \\
    &\le  \sqrt{2}m\left(\sum_{\tau=0}^t \lambda^{t-\tau}\big\|\log(\boldsymbol{\ell}_{\theta}(\tau))\big\| - \lambda^t\big\|\log(\boldsymbol{\ell}_{\theta}(0))\big\|\right),
\end{align}
where $\mathbf{I}$ is the identity matrix and $\boldsymbol{\ell}_{\theta}(\tau) = [\ell_{\theta}^1(\omega_{\tau}^1),...,\ell_{\theta}^m(\omega_{\tau}^m)]'$. The first inequality follows since the norm of $(1-(1/m))\mathbf{I}$ is less than $1$, while the the final inequality is achieved using Lemma~\ref{lem:doubly}. Furthermore, since $\lim_{t\to\infty} \left\|\log\left(\boldsymbol{\ell}_{\theta}(t)\right)\right\|=0$ in probability from Lemma~\ref{lem:ulru_finite}, then 
\begin{eqnarray}
\lim_{t\to\infty} \sum_{\tau=0}^{t} \lambda^{t-\tau}  \left\|\log\left(\boldsymbol{\ell}_{\theta}(\tau)\right)\right\| = 0 \nonumber
\end{eqnarray}
in probability from Lemma~\ref{lemm:ram}. Finally, since $\lambda<1$ from Lemma~\ref{lem:doubly} and $\left\|\log\left(\boldsymbol{\ell}_{\theta}(0)\right)\right\|$ is bounded according to Lemma~\ref{lem:ell_bounded}
\begin{eqnarray}
    \lim_{t\to\infty} \lambda^t \left\|\log\left(\boldsymbol{\ell}_{\theta}(0)\right)\right\| = 0 \ \ \ \text{in probability.} \nonumber
\end{eqnarray}
Then, by the continuity of the logarithmic function, this implies that $\lim_{t\to\infty} \boldsymbol{\mu}_t(\theta)\oslash \left(\left( \prod_{j=1}^m \Lambda_{\theta}^j(t)\right)^{1/m}\right)\mathbf{1} = \mathbf{1}$ in probability and the desired result is achieved, where $\oslash$ indicates the Hadamard division.

\end{proof}

\section{Proof of Theorem~\ref{cor:dogmatic}}
Next, we prove Theorem~\ref{cor:dogmatic} where $|\mathbf{r}_\theta^i|\to\infty$ $\forall i\in\mathcal{M}$. First, we present the following corollary and lemmas that show that the general uncertain models are consistent with multinomial and Gaussian uncertain models presented in~\cite{HUKJ2020_TSP, HUKJ2020_Gauss}. 

\begin{corollary} \label{cor:ell_matching}
Let Assumptions~\ref{assum:support}-\ref{assum:reg} hold and agent $i$ is certain for every hypothesis $\theta\in\boldsymbol{\Theta}_i^*$, i.e., $|\mathbf{r}_\theta^i|\to\infty$. Then, the uncertain likelihood update $\ell_{\theta}^i(\omega_{t}^i)$ converges in probability to $1$ as $t\to\infty$.%
\end{corollary}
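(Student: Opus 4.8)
The plan is to read the limit of $\ell_\theta^i(\omega_t^i)$ off of Lemma~\ref{lem:ulru_finite} and then show that this limit is identically $1$ on the indistinguishable set $\boldsymbol{\Theta}_i^*$, because there the best parametric fit to the prior evidence coincides with the best parametric fit to the observations. Concretely, I would first invoke the second (``certain'') branch of Lemma~\ref{lem:ulru_finite}: since $|\mathbf{r}_\theta^i|\to\infty$, Proposition~\ref{lem:normal_posterior}(b) makes the posterior in the numerator of the uncertain likelihood update~\eqref{eq:ulu} concentrate at $\boldsymbol{\phi}_\theta^i$, while Proposition~\ref{lem:normal_posterior}(a) makes the model-of-complete-ignorance posterior in the denominator -- built only from $\boldsymbol{\omega}_{1:t-1}^i\sim Q^i$ -- concentrate at $\boldsymbol{\phi}_{\theta^*}^i$ as $t\to\infty$. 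Hence $\lim_{t\to\infty}\ell_\theta^i(\omega_t^i)=P^i(\omega_t^i\,|\,\boldsymbol{\phi}_\theta^i)/P^i(\omega_t^i\,|\,\boldsymbol{\phi}_{\theta^*}^i)$ in probability, the denominator being finite and strictly positive on the support of $Q^i$ by Assumption~\ref{assum:support}.

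The remaining step is to verify that $\boldsymbol{\phi}_\theta^i=\boldsymbol{\phi}_{\theta^*}^i$ whenever $\theta\in\boldsymbol{\Theta}_i^*$. By the definition~\eqref{eq:main_local}, every such $\theta$ attains $D_{KL}\big(Q^i\|P^i(\cdot|\boldsymbol{\phi}_\theta^i)\big)=\min_{\theta'\in\boldsymbol{\Theta}}D_{KL}\big(Q^i\|P^i(\cdot|\boldsymbol{\phi}_{\theta'}^i)\big)$. Since $\boldsymbol{\phi}_{\theta^*}^i$ is the unconstrained minimizer of $\boldsymbol{\phi}\mapsto D_{KL}\big(Q^i\|P^i(\cdot|\boldsymbol{\phi})\big)$ (Assumption~\ref{assum:mle_kl}, cf.~\eqref{eq:phi_star}) and is itself the parameter of one of the classes, this constrained minimum equals the unconstrained one; thus $\boldsymbol{\phi}_\theta^i$ is also an unconstrained minimizer, and the uniqueness in Assumption~\ref{assum:mle_kl} forces $\boldsymbol{\phi}_\theta^i=\boldsymbol{\phi}_{\theta^*}^i$. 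Substituting into the displayed limit gives $\lim_{t\to\infty}\ell_\theta^i(\omega_t^i)=1$ in probability, as claimed.

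I expect the only delicate point to be the identification of the two roles played by $\boldsymbol{\phi}_\theta^i$: the class-$\theta$ model parameter appearing in~\eqref{eq:main_local}, and the point at which the conjugate posterior $f(\boldsymbol{\phi}|\psi(\mathbf{r}_\theta^i))$ concentrates as $|\mathbf{r}_\theta^i|\to\infty$, namely $\argmin_{\boldsymbol{\phi}}D_{KL}\big(Q_\theta^i\|P^i(\cdot|\boldsymbol{\phi})\big)$ from Proposition~\ref{lem:normal_posterior}(b). These agree under the modeling convention that prior evidence labeled $\theta$ is generated by $Q_\theta^i$ and that the agent's class-$\theta$ parameter is its infinite-data maximum-likelihood fit, consistent with the interpretation underlying Assumption~\ref{assum:mle_kl}; once this is granted, the corollary follows immediately from Lemma~\ref{lem:ulru_finite} and the uniqueness assumptions, with no convergence-rate estimates required.
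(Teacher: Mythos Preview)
Your proposal is correct and matches the paper's intended argument. The paper states this corollary without proof, treating it as an immediate consequence of the second (``certain'') branch of Lemma~\ref{lem:ulru_finite}: once $|\mathbf{r}_\theta^i|\to\infty$ the update converges to $P^i(\cdot|\boldsymbol{\phi}_\theta^i)/P^i(\cdot|\boldsymbol{\phi}_{\theta^*}^i)$, and for $\theta\in\boldsymbol{\Theta}_i^*$ one has $\boldsymbol{\phi}_\theta^i=\boldsymbol{\phi}_{\theta^*}^i$, so the ratio is $1$. Your write-up simply makes explicit the step the paper leaves tacit---namely, that membership in $\boldsymbol{\Theta}_i^*$ forces equality of the parameters via the uniqueness clause in Assumption~\ref{assum:mle_kl} and the identification $Q^i=Q_{\theta^*}^i$---which is a welcome clarification rather than a different route.
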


\begin{lemma} \label{cor:exp_log_ell}
Let Assumptions~\ref{assum:support}-\ref{assum:reg} hold and the agent $i$ is certain about a hypothesis $\theta\in\boldsymbol{\Theta}$, i.e., $|\mathbf{r}_\theta^i|\to\infty$. Then, the expected value of the log-uncertain likelihood update has the following properties,
 \begin{align}\label{eq:exp_bound_ell}
\mathbb{E}_{Q^i}[\log(\ell_{\theta}^i(\omega_{t}^i))] = D_{KL}&\Big(Q^i\Big\|\hat{P}^i(\omega_{t}^i|\boldsymbol{\omega}_{1:t-1}^i))\Big)- \nonumber \\ &D_{KL}\Big(Q^i\Big\|P^i(\omega_{t}^i|\boldsymbol{\phi}_{i\theta}))\Big), 
\end{align}
in probability, where $\hat{P}^i(\omega_{t}^i|\boldsymbol{\omega}_{1:t-1}^i)=\int_{\boldsymbol{\Phi}^i} P^i(\omega_{t}^i|\boldsymbol{\phi}^i)f(\boldsymbol{\phi}^i|\psi(\boldsymbol{\omega}_{1:t-1}^i))d\boldsymbol{\phi}$ is the denominator of the uncertain likelihood update~\eqref{eq:ulu}, and
 \begin{multline}
\lim_{t\to\infty} \mathbb{E}_{Q^i}[\log(\ell_{\theta}^i(\omega_{t}^i))] = D_{KL}\Big(Q^i\Big\|P^i(\cdot|\boldsymbol{\phi}_{\theta^*}^i))\Big) \\- D_{KL}\Big(Q^i\Big\|P^i(\cdot|\boldsymbol{\phi}_\theta^i))\Big),
\end{multline}
in probability.

\end{lemma}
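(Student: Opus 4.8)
The plan is to unfold the definition of the uncertain likelihood update in~\eqref{eq:ulu}, use the posterior-collapse statements of Proposition~\ref{lem:normal_posterior} to replace the numerator (and, in the limit, the denominator) by likelihood functions evaluated at the KL-minimizing parameters, and then recognize the resulting expectation as a difference of two Kullback--Leibler divergences via the elementary identity $\mathbb{E}_{Q^i}[\log(a/b)] = \mathbb{E}_{Q^i}[\log(Q^i/b)] - \mathbb{E}_{Q^i}[\log(Q^i/a)]$, valid whenever $a$ and $b$ are densities over the observation space with respect to which $Q^i$ is absolutely continuous.

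First I would write $\ell_\theta^i(\omega_t^i)$ as the ratio of $\int_{\boldsymbol{\Phi}^i} P^i(\omega_t^i|\boldsymbol{\phi}^i) f(\boldsymbol{\phi}^i|\psi(\{\boldsymbol{\omega}_{1:t-1}^i,\mathbf{r}_\theta^i\}))\,d\boldsymbol{\phi}^i$ over $\hat{P}^i(\omega_t^i|\boldsymbol{\omega}_{1:t-1}^i)$. Since the agent is certain, $|\mathbf{r}_\theta^i|\to\infty$, and Proposition~\ref{lem:normal_posterior}(b) (with the finite observation block $\boldsymbol{\omega}_{1:t-1}^i$ asymptotically negligible relative to the unboundedly growing prior evidence) gives that the posterior in the numerator converges in probability to $\delta_{\boldsymbol{\phi}_\theta^i}(\boldsymbol{\phi}^i)$; because $P^i(\omega|\cdot)$ is bounded for every $\omega$ by Assumption~\ref{assum:support}, this passes through the $\boldsymbol{\phi}$-integral and the numerator converges in probability to $P^i(\omega_t^i|\boldsymbol{\phi}_\theta^i)$. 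Taking logs, taking $\mathbb{E}_{Q^i}[\cdot]$, and invoking the identity above with $a = P^i(\cdot|\boldsymbol{\phi}_\theta^i)$ and $b = \hat{P}^i(\cdot|\boldsymbol{\omega}_{1:t-1}^i)$ (both proper densities) produces the first display, with finiteness of both divergences coming from the boundedness and absolute-continuity conditions of Assumption~\ref{assum:support}, the denominator being a posterior predictive built from the full-support prior $f_0$.

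For the second display I would then send $t\to\infty$ in the first display. The denominator $\hat{P}^i(\cdot|\boldsymbol{\omega}_{1:t-1}^i)$ is exactly the model of complete ignorance after $t-1$ observations drawn from $Q^i$, so Proposition~\ref{lem:normal_posterior}(a) applied with empty prior evidence gives $f(\boldsymbol{\phi}^i|\psi(\boldsymbol{\omega}_{1:t-1}^i))\to\delta_{\boldsymbol{\phi}_{\theta^*}^i}(\boldsymbol{\phi}^i)$ in probability; pushing this through the $\boldsymbol{\phi}$-integral yields $\hat{P}^i(\omega|\boldsymbol{\omega}_{1:t-1}^i)\to P^i(\omega|\boldsymbol{\phi}_{\theta^*}^i)$ in probability, whence $D_{KL}(Q^i\|\hat{P}^i(\omega_t^i|\boldsymbol{\omega}_{1:t-1}^i))\to D_{KL}(Q^i\|P^i(\cdot|\boldsymbol{\phi}_{\theta^*}^i))$ by continuity of $b\mapsto D_{KL}(Q^i\|b)$. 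Since the term $-D_{KL}(Q^i\|P^i(\cdot|\boldsymbol{\phi}_\theta^i))$ does not depend on $t$, the second display follows; the interchange of the two limiting operations ($|\mathbf{r}_\theta^i|\to\infty$ and $t\to\infty$) is justified exactly as in the proof of Corollary~\ref{cor:ULR_Dog}, using that the observations and prior evidence are i.i.d.

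The hard part will be making rigorous the two interchanges of a limit with an integral/expectation: pushing the in-probability delta-convergence of the conjugate posteriors inside the $\boldsymbol{\phi}$-integral against the kernel $P^i(\omega|\cdot)$, and pushing the resulting in-probability convergence inside the outer expectation $\mathbb{E}_{Q^i}[\cdot]$ and inside the divergence functional. Each step requires a uniform-integrability / dominated-convergence argument, and this is precisely where Assumption~\ref{assum:support} (boundedness of $Q^i$ and $P^i(\omega|\boldsymbol{\phi}_\theta)$, absolute continuity of $Q^i$ with respect to $P^i(\cdot|\boldsymbol{\phi}_\theta)$) together with the regularity conditions of Assumptions~\ref{assum:mle_kl}--\ref{assum:reg} are used; one should also check that ``in probability'' is preserved by the continuous maps $\log(\cdot)$, integration against the bounded kernel, and $b\mapsto D_{KL}(Q^i\|b)$. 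The algebraic rearrangement into KL divergences is otherwise routine.
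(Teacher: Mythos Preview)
Your proposal is correct and follows essentially the same route as the paper: use Proposition~\ref{lem:normal_posterior}(b) to collapse the numerator of~\eqref{eq:ulu} to $P^i(\omega_t^i\mid\boldsymbol{\phi}_\theta^i)$, add and subtract $Q^i\log Q^i$ inside $\mathbb{E}_{Q^i}[\cdot]$ to obtain the KL difference, and then invoke Proposition~\ref{lem:normal_posterior}(a) on the denominator for the $t\to\infty$ limit. The paper's proof is in fact terser than yours---it does not spell out the dominated-convergence/uniform-integrability justifications you flag as ``the hard part''---so your attention to those interchanges is a welcome addition rather than a deviation.
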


\begin{proof}
First, using condition (b) of Proposition~\ref{lem:normal_posterior}, as $|\mathbf{r}_\theta^i|\to\infty$, the uncertain likelihood update converges in probability to 
\begin{align} \label{eq:ell_r_infty}
\lim_{|\mathbf{r}_\theta^i|\to\infty}\ell_{\theta}^i(\omega_{t}^i) = \frac{P^i(\omega_{t}^i|\boldsymbol{\phi}_\theta^i)}{\hat{P}^i(\omega_{t}^i|\boldsymbol{\omega}_{1:t-1}^i)}.
\end{align}
Then, taking the expected value of~\eqref{eq:ell_r_infty} results in 
\begin{align}
 &\mathbb{E}_{Q^i}[\log(\ell_{\theta}^i(\omega_{t}^i))] = \int_{\boldsymbol{\Omega}} Q^i(\omega)\log\left(\frac{P^i(\omega|\boldsymbol{\phi}_\theta^i)}{\hat{P}^i(\omega_{t}^i|\boldsymbol{\omega}_{1:t-1}^i)}\right)d\omega. \nonumber
\end{align}
After adding and subtracting $Q^i(\omega)\log(Q^i(\omega))$ inside the integral, we achieve
\begin{align*}
\mathbb{E}_{Q^i}[\log(\ell_{\theta}^i(\omega))] =  D_{KL}(Q^i\|\hat{P}^i(\omega_{t}^i|\boldsymbol{\omega}_{1:t-1}^i))- \nonumber \\
 D_{KL}(Q^i\|P^i(\omega|\boldsymbol{\phi}_\theta^i))).
\end{align*}
Then, from condition (a) of Proposition~\ref{lem:normal_posterior}, our desired result is achieved since $\hat{P}^i(\omega_{t}^i|\boldsymbol{\omega}_{1:t-1}^i)$ converges in probability to $P^i(\cdot|\boldsymbol{\phi}_{\theta^*}^i)$ as $t\to\infty$. 
\end{proof}

Lemma~\ref{cor:exp_log_ell} indicates that as time $t$ becomes very large, the accumulation of likelihood updates $\log(\ell_{\theta}^i(\omega))$ can be approximated as $\exp(t(D_{KL}(Q^i\|P^i(\cdot|\boldsymbol{\phi}_{\theta^*}^i)-D_{KL}(Q^i\|P^i(\cdot|\boldsymbol{\phi}_\theta^i)) + \epsilon))$ for some $\epsilon>0$, where $\epsilon\to0$ as $t\to\infty$. This means that if $D_{KL}(Q^i\|P^i(\cdot|\boldsymbol{\phi}_\theta^i)) >\epsilon+D_{KL}(Q^i\|P^i(\cdot|\boldsymbol{\phi}_{\theta^*}^i)$, then the beliefs will decrease exponentially based on the KL divergence. 

Next, we provide the final property of the uncertain likelihood update that is necessary to prove our main result. 
\begin{lemma} \label{lem:ell_bounded}
Let Assumptions~\ref{assum:support} and~\ref{assum:reg} hold. Then, the uncertain likelihood update is finite and strictly positive value with probability 1 for any $t\ge1$ and all $\theta\in\boldsymbol{\Theta}$ and $i\in\mathcal{M}$, i.e., $0<\ell_{\theta}^i(\omega_{t}^i)<\infty$. 
\end{lemma}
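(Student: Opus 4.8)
The plan is to treat the numerator and denominator of the uncertain likelihood update~\eqref{eq:ulu} separately, since $\ell_\theta^i(\omega_t^i)$ is their ratio: it suffices to show that each is finite and strictly positive with probability one, which in particular guarantees the ratio is well defined. Write $N = \int_{\boldsymbol{\Phi}^i} P^i(\omega_t^i|\boldsymbol{\phi}) f(\boldsymbol{\phi}|\psi(\{\boldsymbol{\omega}_{1:t-1}^i,\mathbf{r}_\theta^i\}))\,d\boldsymbol{\phi}$ and $D = \int_{\boldsymbol{\Phi}^i} P^i(\omega_t^i|\boldsymbol{\phi}) f(\boldsymbol{\phi}|\psi(\boldsymbol{\omega}_{1:t-1}^i))\,d\boldsymbol{\phi}$; both are posterior predictive densities evaluated at the freshly observed $\omega_t^i$, conditioned respectively on $\{\boldsymbol{\omega}_{1:t-1}^i,\mathbf{r}_\theta^i\}$ and on $\boldsymbol{\omega}_{1:t-1}^i$ alone.

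For finiteness I would first note that under Assumption~\ref{assum:reg} (equivalently, using the weakly informative proper priors of Section~\ref{sec:implement}) each conditioning posterior $f(\boldsymbol{\phi}|\psi(\cdot))$ is a proper probability density on $\boldsymbol{\Phi}^i$. By Tonelli's theorem, $\int_{\boldsymbol{\Omega}} \big(\int_{\boldsymbol{\Phi}^i} P^i(\omega|\boldsymbol{\phi}) f(\boldsymbol{\phi}|\psi(\cdot))\,d\boldsymbol{\phi}\big)\, d\omega = \int_{\boldsymbol{\Phi}^i} f(\boldsymbol{\phi}|\psi(\cdot))\,d\boldsymbol{\phi} = 1$, so each of $N$ and $D$, viewed as a function of its argument $\omega_t^i$, is a bona fide probability density and hence finite outside a set that is null for the dominating measure of the observation space. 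Since $Q^i$ is absolutely continuous with respect to $P^i(\cdot|\boldsymbol{\phi}_\theta^i)$ (Assumption~\ref{assum:support}), which is itself dominated by that base measure, the realized $\omega_t^i\sim Q^i$ avoids this null set with probability one; hence $N<\infty$ and $D<\infty$ almost surely.

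For strict positivity I would use that the vacuous prior $f_0$ has full support on $\boldsymbol{\Phi}^i$, so any posterior obtained from it by multiplying by the nonnegative likelihood factors $P^i(\boldsymbol{\omega}_{1:t-1}^i|\boldsymbol{\phi})$ and $P^i(\mathbf{r}_\theta^i|\boldsymbol{\phi})$ remains supported on a set of positive measure. Because $\omega_t^i\sim Q^i$ with $Q^i\ll P^i(\cdot|\boldsymbol{\phi}_\theta^i)$, the point $\omega_t^i$ lies in the common support of the family $\{P^i(\cdot|\boldsymbol{\phi})\}_{\boldsymbol{\phi}\in\boldsymbol{\Phi}^i}$ with probability one --- a property contained in the regularity conditions of Assumption~\ref{assum:reg} and manifest in the multinomial and multivariate Gaussian examples of Section~\ref{sec:implement} --- so $P^i(\omega_t^i|\boldsymbol{\phi})>0$ for every $\boldsymbol{\phi}$ in that support. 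The integrand defining $N$ (resp. $D$) is then strictly positive on a set of positive $f(\boldsymbol{\phi}|\psi(\cdot))$-measure, which forces $N>0$ (resp. $D>0$).

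Finally, the events above hold, for each fixed $t$, conditionally on $\boldsymbol{\omega}_{1:t-1}^i$ with conditional probability one; integrating out $\boldsymbol{\omega}_{1:t-1}^i$ and taking a union over the countably many time indices $t\ge 1$ and the finitely many pairs $(\theta,i)$ preserves probability one, yielding $0<\ell_\theta^i(\omega_t^i)<\infty$ simultaneously for all $t\ge1$, $\theta\in\boldsymbol{\Theta}$, $i\in\mathcal{M}$. The main obstacle is the strict-positivity direction: one must rule out the degenerate possibility that the posterior, built from the particular realized data, concentrates all its mass on parameters $\boldsymbol{\phi}$ for which $\omega_t^i$ lies outside the support of $P^i(\cdot|\boldsymbol{\phi})$ --- precisely where the common-support and smoothness content of Assumption~\ref{assum:reg} together with the absolute continuity of Assumption~\ref{assum:support} must be invoked. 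A secondary technical point is ensuring the $t=1$ denominator integral $\int_{\boldsymbol{\Phi}^i} P^i(\omega_1^i|\boldsymbol{\phi})f_0(\boldsymbol{\phi})\,d\boldsymbol{\phi}$ is finite when $f_0$ is taken improper, which is handled by adopting the proper weakly informative prior of Section~\ref{sec:implement}.
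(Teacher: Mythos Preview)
Your argument is correct and follows essentially the same line as the paper's: both rely on the prior $f_0$ having full support so that the posterior retains a set of positive measure on which the likelihood of the fresh observation is strictly positive, and both use the boundedness/support consistency of Assumption~\ref{assum:support} to rule out $N=\infty$ or $D=0$. The core mechanism is identical.

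Where you differ is in rigor and scope. You invoke Tonelli to show the posterior predictive is a genuine density (hence finite a.e.) and then use $Q^i\ll P^i(\cdot|\boldsymbol{\phi}_\theta^i)$ to transfer ``a.e.'' to ``with probability one under $Q^i$''; the paper simply asserts that the posterior predictive is ``finite and nonzero'' without isolating the null set. You also take a countable union over $t$ and finite union over $(\theta,i)$ to get the simultaneous statement, and you flag the improper-prior issue at $t=1$; the paper does neither. Conversely, the paper's proof separately treats the limiting regimes $t\to\infty$ (where $\ell_\theta^i\to 1$ by Lemma~\ref{lem:ulru_finite}) and $|\mathbf{r}_\theta^i|\to\infty$ (where the numerator collapses to $P^i(\omega_t^i|\boldsymbol{\phi}_\theta^i)$), which is not required by the lemma as stated but is used downstream in the proof of Theorem~\ref{cor:dogmatic}. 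Your version is cleaner for the lemma itself; the paper's case split is tailored to how the lemma is later applied.
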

\begin{proof}
First, consider the condition when $|\mathbf{r}_\theta^i|<\infty$ and $t<\infty$. Then, under the regularity conditions, the prior $f_0(\boldsymbol{\phi}^i)$ has full support over the parameter space $\boldsymbol{\Phi}^i$ and the posterior distributions $f(\boldsymbol{\phi}^i|\psi(\{\boldsymbol{\omega}_{1:t-1},\mathbf{r}_\theta\}))$ and $f(\boldsymbol{\phi}^i|\psi(\boldsymbol{\omega}_{1:t-1}^i))$ have not converged to a delta function. Given that the support of the likelihood and the prior are consistent as stated in Assumption~\ref{assum:support}, the support of the posteriors $f(\boldsymbol{\phi}^i|\psi(\{\boldsymbol{\omega}_{1:t-1},\mathbf{r}_\theta\}))$ and $f(\boldsymbol{\phi}^i|\psi(\boldsymbol{\omega}_{1:t}^i))$ remain intact and the posterior predictive distributions $\hat{P}^i(\omega_t^i|\{\boldsymbol{\omega}_{1:t-1},\mathbf{r}_\theta\})=\int_{\boldsymbol{\Phi}}P^i(\omega_t^i|\boldsymbol{\phi}^i)f(\boldsymbol{\phi}^i|\psi(\{\boldsymbol{\omega}_{1:t-1},\mathbf{r}_\theta\}))d\boldsymbol{\phi}^i$ and $\hat{P}^i(\omega_t^i|\boldsymbol{\omega}_{1:t-1}^i)=\int_{\boldsymbol{\Phi}}P^i(\omega_t^i|\boldsymbol{\phi}^i)f(\boldsymbol{\phi}^i|\psi(\boldsymbol{\omega}_{1:t-1}^i))d\boldsymbol{\phi}^i$ are finite and nonzero. Thus, $0<\ell_\theta^i(\omega_{t}^i)<\infty$ for all $\theta\in\boldsymbol{\Theta}$.

When the number of observation grows unboundedly and there is a finite amount of prior evidence, i.e., $t\to\infty$ and $|\mathbf{r}_\theta^i|<\infty$, the uncertain likelihood update converges to $\lim_{t\to\infty} \ell_{\theta}^i(\omega_{t}^i) = 1$ according to Lemma~\ref{lem:ulru_finite}. Furthermore, when the agent $i$ becomes certain and the number of observations is finite, i.e., $|\mathbf{r}_\theta^i|\to\infty$ and $t<\infty$, the uncertain likelihood update is $\ell_{\theta}^i(\omega_{t}^i)=P^i(\omega_{t}^i|\boldsymbol{\phi}_\theta^i)/\hat{P}^i(\omega_{t}^i|\boldsymbol{\omega}_{1:t-1}^i)$. As stated above, the posterior predictive distribution $\hat{P}^i(\omega_{t}^i|\boldsymbol{\omega}_{1:t-1}^i)$ is finite and non zero and $P^i(\omega_{t}^i|\boldsymbol{\phi}_\theta^i)$ is finite and non zero according to Assumption~\ref{assum:support}. Thus, combining the above three conditions indicates that $0<\ell_{\theta}^i(\omega_{t}^i)<\infty$. 
\end{proof}

Now we can prove Theorem~\ref{cor:dogmatic}.

\begin{proof}[Proof of Theorem~\ref{cor:dogmatic}]
First, suppose that agent $i$ has collected an infinite amount of evidence for a hypothesis $\theta$. Then, from Lemma~\ref{lem:normal_posterior} condition 2, the numerator of the uncertain likelihood update~\eqref{eq:ulu} converges in probability to $P^i(\cdot|\phi_\theta^i)$ as $|\mathbf{r}_\theta^i|\to\infty$. 
When $P^i(\cdot|\phi_\theta^i)=P^i(\cdot|\phi_{\theta^*}^i)$, the result is trivially achieved following the proof of Theorem~\ref{th:LL} above since $\ell_{\theta}^i(\cdot)\overset{p}{\to} 1$ in probability and $0<\ell_{\theta}^i(\omega_{t}^i)<\infty$ for all $t$ and $i\in\mathcal{M}$ according to Corollary~\ref{cor:ell_matching} and Lemma~\ref{lem:ell_bounded} respectively. 

Next, suppose that $P^i(\cdot|\phi_\theta^i)\ne P^i(\cdot|\phi_{\theta^*}^i)$. Then, if we divide the log-beliefs by time $t$ we have
\begin{align}
    \frac{1}{t}\log(\boldsymbol{\mu}_t(\theta)) = \frac{1}{t}\mathbf{A}^t\log(\boldsymbol{\mu}_{0}(\theta)) + \frac{1}{t}\sum_{\tau = 1}^t \mathbf{A}^{t-\tau}\log(\boldsymbol{\ell}_{\theta}(\tau)), \nonumber
\end{align}
where $\boldsymbol{\mu}_0(\theta) = \mathbf{1}$ as stated at the beginning of Section~\ref{sec:nbsl_um}. Then, breaking the summation up into three parts for any $T>0$ results in,
\begin{align} \label{eq:rate_log_mu}
    \frac{1}{t}\log&(\boldsymbol{\mu}_{t+1}(\theta))= \nonumber \\ &\frac{1}{t}\left[\sum_{\tau = 1}^T \mathbf{A}^{t-\tau}\log(\boldsymbol{\ell}_{\theta}(\tau)) + \sum_{\tau = T+1}^{t-T} \mathbf{A}^{t-\tau}\log(\boldsymbol{\ell}_{\theta}(\tau)) \right. \nonumber \\
     &  \left. + \sum_{\tau=t-T+1}^{t} \mathbf{A}^{t-\tau}\log(\boldsymbol{\ell}_{\theta}(\tau)) \right]
\end{align}
The first term on the right-hand side of the preceding equation for any finite $T>0$ is bounded above as follows.
\begin{align}
    \frac{1}{t}\sum_{\tau = 1}^T \mathbf{A}^{t-\tau}\log(\boldsymbol{\ell}_{\theta}(\tau)) &\le \frac{1}{t}\sum_{\tau = 1}^T \Big\|\mathbf{A}^{t-\tau}\Big\|_2 \|\log(\boldsymbol{\ell}_{\theta}(\tau))\|_1 \nonumber \\
    &\le \frac{m C_1 T}{t} ,
\end{align}
where $C_1 = \max_{\tau=1,...,T; i\in\mathcal{M}} \log(\ell_{\theta}^i(\omega_{i\tau}))<\infty$ by Lemma~\ref{lem:ell_bounded}, and $\|\mathbf{A}\|_2\le 1$ since $\mathbf{A}$ is doubly stochastic according to Assumption~\ref{assum:graph}. Thus, for any finite $T$, the limit of the first term on the right-hand side of~\eqref{eq:rate_log_mu} deterministically converges in probability to 
\begin{align}
    \lim_{t\to\infty} \frac{1}{t}\sum_{\tau = 1}^T \mathbf{A}^{t-\tau}\log(\boldsymbol{\ell}_{\theta}(\tau)) = 0.
\end{align}

Similarly, the third term on the right-hand side of~\eqref{eq:rate_log_mu} is bounded as follows. \begin{align}
    \frac{1}{t}&\sum_{\tau = t-T+1}^t \mathbf{A}^{t-\tau}\log(\boldsymbol{\ell}_{\theta}(\tau)) \nonumber \\ 
    &\le \frac{1}{t}\sum_{\tau = t-T+1}^t \Big\|\mathbf{A}^{t-\tau}\Big\|_2 \|\log(\boldsymbol{\ell}_{\theta}(\tau))\|_1 \le \frac{m C_2 T}{t} ,
\end{align}
where $C_2 = \max_{\tau=t-T+1,...,t; i\in\mathcal{M}} \log(\ell_{\theta}^i(\omega_{i\tau}))<\infty$ by Lemma~\ref{lem:ell_bounded}. Thus, the limit of the above deterministically converges in probability to 
\begin{align}
    \lim_{t\to\infty} \frac{1}{t}\sum_{\tau = t-T+1}^t \mathbf{A}^{t-\tau}\log(\boldsymbol{\ell}_{\theta}(\tau)) \jzh{=} 0.
\end{align}

Next, we consider the second term on the right-hand side of~\eqref{eq:rate_log_mu}. First, we define the following quantities 
\begin{align}
    \mathbf{e}_\ell(\tau) &= \log(\boldsymbol{\ell}_{\theta}(\tau)) - \log\left(\mathcal{L}^\theta_{\theta^*}\right) \nonumber \\
    \mathbf{e}_\lambda(\tau)& = \mathbf{A}^{t-\tau} - \frac{1}{m}\mathbf{11'},
\end{align}
where $\mathcal{L}^\theta_{\theta^*} = \mathbf{P}_\theta \oslash \mathbf{P}_{\theta^*}$ is the likelihood ratio of $\mathbf{P}_\theta = [P^1(\cdot|\phi^1(\theta)),...,P^m(\cdot|\phi^m(\theta))]'$ and $\mathbf{P}_{\theta^*} = [P^1(\cdot|\phi^1(\theta^*)),...,P^m(\cdot|\phi^m(\theta^*))]'$; and $\oslash$ is the Hadamard division. Then, the second term becomes
\begin{align} \label{eq:second}
    \frac{1}{t}&\sum_{\tau = T+1}^{t-T} \mathbf{A}^{t-\tau}\log(\boldsymbol{\ell}_{\theta}(\tau))  \nonumber \\
    &= \frac{1}{t} \sum_{\tau=T+1}^{t-T} \left(\frac{1}{m}\mathbf{11'}+\mathbf{e}_\lambda(\tau)\right)\left( \log\left(\mathcal{L}^\theta_{\theta^*}\right) + \mathbf{e}_\ell(\tau)\right) \nonumber \\
     & \le \frac{1}{t}\sum_{\tau=T+1}^{t-T} \frac{1}{m}\sqrt{2}m\lambda^{t-\tau} \|\mathbf{e}_\ell(\tau)\| + \frac{1}{m}\mathbf{11'}\|\mathbf{e}_\ell(\tau)\| \nonumber \\ 
     &  + \sqrt{2}m\lambda^{t-\tau} \bigg\|\log\left(\mathcal{L}^\theta_{\theta^*}\right)\bigg\| + \mathbf{11'}\log\left(\mathcal{L}^\theta_{\theta^*}\right), 
\end{align}
where the upper bound is achieved by expanding the second line and taking the spectral norm of the second third and fourth terms and applying Lemma~\ref{lem:doubly}.

Then, we can take the limit of~\eqref{eq:second} as $t\to\infty$. Since $\lim_{t\to\infty}e_{\ell}^i(t) = 0$ in probability for all $i\in\mathcal{M}$ according to Lemma~\ref{lem:ulru_finite} and $0<\lambda<1$, it follows from Lemma~\ref{lemm:ram} that the first term on the right-hand side of~\eqref{eq:second} converges to zero. 

Then, let $\epsilon>0$ be arbitrary. Since $\lim_{t\to\infty}e_{\ell}^i(t) = 0$ $\forall i\in\mathcal{M}$, there exists a time $T$ s.t. for all $t\ge T$, $\mathbf{e}_{\ell}(t)\le \epsilon\mathbf{1}$. Thus, we can bound the second term on the right-hand side of~\eqref{eq:second} in probability by 
\begin{align}
    \lim_{t\to\infty}\frac{1}{t}\sum_{\tau=T+1}^{t-T}\frac{1}{m}\mathbf{11'}\|\mathbf{e}_\ell(\tau)\| \le \lim_{t\to\infty}\frac{\epsilon (t-2T)}{t} = \epsilon. 
\end{align}

Additionally, $T$ is chosen sufficiently large s.t. $\lambda^T \le \epsilon$ allowing the third term on the right-hand side of~\eqref{eq:second} to be bounded by
\begin{align}
    \lim_{t\to\infty}&\frac{1}{t}\sum_{\tau=T+1}^{t-T}\sqrt{2}m\lambda^{t-\tau} \bigg\|\log\left(\mathcal{L}^\theta_{\theta^*}\right)\bigg\| \nonumber \\
    &\le \lim_{t\to\infty}\frac{\sqrt{2}m\epsilon(t-2T)}{t}\sum_{\tau=T+1}^{t-T} \sum_{i=1}^m\bigg|\log\left(\frac{P^i(\cdot|\phi_\theta^i)}{P^i(\cdot|\phi_{\theta^*}^i)}\right)\bigg| \nonumber \\
    & = \sqrt{2}m\epsilon C_3,
\end{align}
where 
\begin{align}
    C_3&=\sum_{i=1}^m \int_{\boldsymbol{\Omega}} Q^i(\omega)\bigg|\log\left(\frac{P^i(\omega|\phi_{\theta^*}^i)}{P^i(\omega|\phi_\theta^i)}\right)\bigg|d\omega \nonumber \\
    & = \sum_{i=1}^m \left[ \int_{\boldsymbol{\Omega}_-} Q^i(\omega)\log\left(\frac{Q^i(\omega)}{P^i(\omega|\phi_\theta^i)}\right)d\omega \right.\nonumber \\
    &\ \ \ \ \ \ \  -\int_{\boldsymbol{\Omega}_-} Q^i(\omega)\log\left(\frac{Q^i(\omega)}{P^i(\omega|\phi_{\theta^*}^i)}\right)d\omega \nonumber \\
    & \ \ \ \ \ \ \ - \int_{\boldsymbol{\Omega}_+} Q^i(\omega)\log\left(\frac{Q^i(\omega)}{P^i(\omega|\phi_\theta^i)}\right)d\omega \nonumber \\
    & \ \ \ \ \ \ \ \left.+ \int_{\boldsymbol{\Omega}_+} Q^i(\omega)\log\left(\frac{Q^i(\omega)}{P^i(\omega|\phi_{\theta^*}^i)}\right)d\omega \right] \nonumber
\end{align}
by the strong law of large numbers; and $\boldsymbol{\Omega}_+ = \{\omega\in\boldsymbol{\Omega}|P^i(\omega|\phi_\theta^i)\ge P^i(\omega|\phi_{\theta^*}^i)\}$ and $\boldsymbol{\Omega}_- = \{\omega\in\boldsymbol{\Omega}|P^i(\omega|\phi_\theta^i)< P^i(\omega|\phi_{\theta^*}^i)\}$ are the sets of observations s.t. the log-likelihood ratio $\log(P^i(\omega|\phi_\theta^i)/P^i(\omega|\phi_{\theta^*}^i))$ is positive or negative respectively. Then, by Assumption~\ref{assum:support} and since the probability of agent $i$ drawing an $\omega\in\boldsymbol{\Omega}$ s.t. $Q^i(\omega)=0$ is zero, the constant $C_3$ is finite for any $m<\infty$. Thus, the third term on the right-hand side of~\eqref{eq:second} is finite.

The final term on the right-hand side of~\eqref{eq:second} simply converges almost surely to the KL divergence between the two distributions as follows. 
\begin{align}
    \lim_{t\to\infty}&\frac{1}{t}\sum_{\tau=T+1}^{t-T}\frac{1}{m}\mathbf{11'}\log\left(\mathcal{L}^\theta_{\theta^*}\right)\nonumber \\
    & = \lim_{t\to\infty}\frac{t-2T}{m t} \sum_{i=1}^m  \frac{1}{(t-2T)}\sum_{\tau=T+1}^{t-T}\log\left(\frac{P^i(\cdot|\phi_\theta^i)}{P^i(\cdot|\phi_{\theta^*}^i)}\right) \nonumber \\
    & = \frac{1}{m} \sum_{i=1}^m  \mathbb{E}_{Q^i}\left[\log\left(\frac{P^i(\cdot|\phi_\theta^i)}{P^i(\cdot|\phi_{\theta^*}^i}\right)\right] \nonumber \\
    & = \frac{1}{m}\sum_{i=1}^m D_{KL}\Big(Q^i\Big\|P^i(\cdot|\phi_{\theta^*}^i))\Big) -  D_{KL}\Big(Q^i\Big\|P^i(\cdot|\phi_\theta^i))\Big),
\end{align}

Thus, putting it all together results in
\begin{align}
    \lim_{t\to\infty}& \frac{1}{t}\log(\boldsymbol{\mu}_t(\theta)) \nonumber \\
    &\le \epsilon( 1+\sqrt{2}mC_3) + \frac{1}{m}\sum_{i=1}^m D_{KL}\Big(Q^i\Big\|P^i(\cdot|\phi_{\theta^*}^i))\Big) \nonumber \\
    & \ \ \ - D_{KL}\Big(Q^i\Big\|P^i(\cdot|\phi_\theta^i))\Big).
\end{align}
Since, $D_{KL}(Q^i\|P^i(\cdot|\phi_\theta^i)) > D_{KL}(Q^i\|P^i(\cdot|\phi_{\theta^*}^i))$ for at least one agent $i$ with parameters for hypothesis $\theta$ $\phi_\theta^i\ne\phi_{\theta^*}^i$. Therefore, we can select a finite time $T$ such that $\epsilon$ is arbitrarily small and $(1/m)\sum_{i=1}^m D_{KL}(Q^i\|P^i(\cdot|\phi_\theta^i))>\epsilon( 1+\sqrt{2}mC_3)+(1/m)\sum_{i=1}^m D_{KL}(Q^i\|P^i(\cdot|\phi_{\theta^*}^i))$. This causes the log-beliefs to diverge in probability to $\log(\mu_{t}^i(\theta))\to -\infty$ for all $i\in \mathcal{M}$. Then, by the continuity of the exponential function, the beliefs converge in probability to $\lim_{t\to\infty} \mu_{t}^i(\theta)=0$ for all $i\in\mathcal{M}$.

\end{proof}

\section{Regularity conditions} \label{app:regularity}
\begin{assumption}[Regularity Conditions \cite{W1969,KV2012}] \label{assum:reg_sup}
We assume the following regularity conditions for the likelihood functions $P(\cdot|\phi)$ and the prior distribution $f_0(\phi)$. 
\begin{enumerate}[label=(\alph*)]
    \item The set of possible parameters $\boldsymbol{\Phi}$ is defined on a compact set. 
    \item If $\phi_1$ and $\phi_2$ are two distinct points in $\boldsymbol{\Phi}$, then the distributions $P(\cdot|\phi_1)$ and $P(\cdot|\phi_2)$ are different. 
    \item The parameters $\{\phi_{\theta}| \theta\in\boldsymbol{\Theta}\}$ lie within $\boldsymbol{\Phi}$. 
    \item The prior distribution $f_0(\phi)$ is continuous everywhere and nonzero for all $\{\phi_{\theta}|\theta\in\boldsymbol{\Theta}\}$, i.e., $f_0(\phi_\theta)>0$.
    \item The ground truth parameter $\phi_{\theta^*}$ uniquely minimizes the KL divergence, $\phi_{\theta^*} = \argmin_{\phi} D_{KL}(Q\|P(\cdot|\phi))$.
    \item The log-likelihood $\log(P(\cdot|\phi))$ is twice differentiable with respect to $\phi$ in the neighborhood of $\phi_{\theta^*}$ and $\phi_{\theta}$.
    \item The Fisher information matrices $J(\phi_{\theta^*})$ and $J(\phi_{\theta})$ of the log-likelihood $\log(P(\cdot|\phi))$ evaluated at $\phi_{\theta^*}$ and $\phi_{\theta}$ are strictly positive and bounded. 
\end{enumerate}
For a detailed list and discussion of the regularity conditions, please see \cite{W1969} for well-specified likelihood models and \cite{KV2012} when the likelihood models are misspecified. 
\end{assumption}

\end{document}